\documentclass{article}

\usepackage{microtype}
\usepackage{graphicx}
\usepackage{subfigure}
\usepackage{subcaption}
\usepackage{booktabs} 
\usepackage{algorithm}
\usepackage{algorithmic}

\usepackage[hidelinks,colorlinks=true,linkcolor=blue,citecolor=blue]{hyperref}

\usepackage{amsmath}
\usepackage{amssymb}
\usepackage{mathtools}
\usepackage{amsthm}
\usepackage{natbib}
\usepackage{color}
\usepackage{amsfonts}       
\usepackage{url}
\usepackage{wrapfig}
\usepackage{tikz}
\usepackage{footmisc}
\usepackage{enumerate}
\usepackage{bibentry}
\usepackage{thmtools, thm-restate}
\nobibliography*

\usepackage{physics}
\mathtoolsset{showonlyrefs}
\usepackage{etoolbox}
\usepackage{makecell}
\usepackage{enumerate}
\usepackage{enumitem}
\usepackage{pifont}
\usepackage{soul}
\usepackage{ulem}
\usepackage{cancel}
\theoremstyle{plain}
\newtheorem{theorem}{Theorem}
\newtheorem{proposition}{Proposition}
\newtheorem{lemma}{Lemma}
\newtheorem{corollary}{Corollary}
\theoremstyle{definition}
\newtheorem{definition}{Definition}

\theoremstyle{plain}
\newcommand{\fS}{\mathcal{S}}
\newcommand{\fA}{\mathcal{A}}

\newcommand{\fT }{\mathcal{T}}

\newcommand{\fF}{\mathcal{F}}

\newcommand{\fL}{\mathcal{L}}
\newcommand{\cE}{\mathcal{E}}
\newcommand{\R}[1][]{\mathbb{R}^{#1}}
\newcommand{\E}{\mathbb{E}}
\newcommand{\ns}{{\abs{\fS}}}

\newcommand{\tref}[1]{\text{\ref{#1}}}

\newcommand{\1}{\mathbf{1}}
\newcommand{\0}{\mathbf{0}}

\newcommand{\pmdp}{p_{\text{MDP}}}
\newcommand{\rmdp}{r_{\text{MDP}}}

\newcommand{\LinAttn}{\text{LinAttn}}

\allowdisplaybreaks

\usepackage[textsize=tiny]{todonotes}

\usepackage[preprint, nonatbib]{neurips_2026}

\title{Convergence and Emergence of
In-Context Reinforcement Learning with Chain of Thought}

\author{%
  Zixuan Xie \\
  University of Virginia\\
  \texttt{xie.zixuan@email.virginia.edu} \\
  \And
  Xinyu Liu \\
  University of Virginia\\
  \texttt{xinyuliu@virginia.edu} \\
  \And
  Rohan Chandra \\
  University of Virginia\\
  \texttt{rohanchandra@virginia.edu} \\
  \And
  Shangtong Zhang \\
  University of Virginia\\
  \texttt{shangtong@virginia.edu} \\
}

\begin{document}

\maketitle

\begin{abstract}
In-context reinforcement learning (ICRL) refers to the ability of RL agents to adapt to new tasks at inference time without parameter updates by conditioning on additional context.
Recent empirical studies further demonstrate that Chain-of-Thought (CoT) generation can amplify this ICRL capability.
This paper is the first to provide a theoretical understanding on how CoT interacts with ICRL. 
We conduct our analysis in a policy evaluation setup with linear Transformer.
We prove that with specific Transformer parameters,
the CoT generation process is equivalent to repeatedly executing temporal difference learning updates.
Additionally, we provide finite sample convergence analysis showing that the policy evaluation error 
decreases geometrically with CoT length and eventually saturates at a statistical floor determined by the context length.
We also prove that the desired Transformer parameters are a global minimizer of the pretraining loss,
providing a theoretical understanding on the empirical emergence of those parameters. 
\end{abstract}

\section{Introduction}
\label{sec:intro}
Reinforcement learning (RL, \citet{sutton2018reinforcement}) provides a foundational framework where agents learn sequential decision-making through interaction. 
A standard paradigm in deep RL consists of training an agent with an RL algorithm on a target problem and deploying the fixed policy within that same domain \citep{mnih2015human,schulman2015trust,schulman2017proximal}. However, this workflow lacks flexibility against task variations. If the underlying goals or environment dynamics change, the specialized policy often fails, requiring the agent to be retrained from scratch to adapt, which can be computationally expensive.
In-Context Reinforcement Learning (ICRL, \citet{laskin2023context,moeini2025survey}) mitigates this limitation by enabling adaptation at inference time without parameter updates. In this framework, an agent is pretrained on a diverse distribution of tasks, typically formulated as either policy evaluation or control problems \citep{moeini2025survey}. Policy evaluation involves estimating value functions, whereas control focuses on searching for optimal policies. 
At inference time, the network parameters are frozen and the agent conditions on additional context generated within the new task.
A standard form of such context is the trajectory of recent interactions, comprising past observations, actions, and rewards \citep{laskin2023context}. Empirical evidence \citep{laskin2023context, team2023human, elawady2024relic, raparthy2024generalization, grigsby2024amago, grigsby2024amago2,liu2025locoformer, Beaussant2025scaling} shows that performance often improves as the context grows, even on test tasks that deviate significantly from the pretraining distribution. 
Thus, this improvement is difficult to explain by memorization of the pretraining tasks.
It instead suggests an implicit implementation of an internal RL process within the network's forward pass that leverages the context for adaptation \citep{moeini2025survey}. 
We refer to this capability of reinforcement learning at inference time as ICRL.

This ICRL capability has been shown to arise under a diverse array of pretraining algorithms, which can be broadly divided into supervised pretraining and reinforcement pretraining.
In supervised pretraining \citep{laskin2023context, shi2023cross,kirsch2023towards,lee2024supervised, zisman2024emergence,dai2024incontext,lin2023transformers}, the network is trained by imitation to match the behavior of a known RL algorithm on curated learning trajectories, so it is less surprising if the forward pass resembles that algorithm.
A more intriguing phenomenon arises from reinforcement pretraining, where the agent network can be a sequence model such as a Transformer \citep{vaswani2017attention} and the pretraining algorithm is simply a variant of standard deep RL algorithm
\citep{duan2016rl, wang2016learning, grigsby2024amago}.
Here, the network is trained to output good actions for control tasks or to make accurate value predictions for policy evaluation tasks, yet the resulting forward pass can still behave like an RL algorithm.
This ICRL capability is also observed in large language models (LLMs).
For example, 
\citet{song2025reward} 
report that multi-round prompting with scalar rewards can yield consistent inference-time improvement for LLMs in control tasks.
Particularly, \citet{song2025reward} rely on Chain-of-Thought (CoT, \citet{wei2022chain}) generation to amplify LLM's ICRL capability.

Many works have been done to theoretically understand ICRL 
with most focusing on supervised pretraining.
See Section~\tref{sec:related} for a more detailed exposition.
The volume of works to theoretically understand ICRL with reinforcement pretraining is much thinner.
Notably works include \citet{wang2025ictd,wang2025emergence} that study ICRL under a policy evaluation setup.
Specifically,
\citet{wang2025ictd} prove that with certain parameters,
the layer-by-layer forward pass of linear Transformers is equivalent to step-by-step update of temporal difference (TD) learning \citep{sutton1988learning},
arguably the most powerful algorithm for policy evaluation.
\citet{wang2025emergence} further prove that those parameters are a global minimizer of some reinforcement pretraining loss.
However,
\citet{wang2025ictd,wang2025emergence} do not explain the role of CoT in ICRL,
a gap that we shall close in this paper.

Specifically, we make three contributions toward closing this gap. First, we constructively prove that under specific parameters, stepwise CoT generation in a linear Transformer is equivalent to iterating TD updates on the context. Second, we establish inference-time convergence guarantees with context constructed from both known and unknown dynamics, showing that the policy evaluation error decreases geometrically up to a statistical floor controlled by the context length. Third, we prove that the desired parameters above is a global minimizer for a TD-based pretraining algorithm and we empirically validate the emergence of those parameters in controlled environments.

\section{Background} \label{sec:background}

\paragraph{Notation.}
All vectors are treated as column vectors.
We denote the identity matrix in $\R[n \times n]$ by $I_n$.
We use $0_{m \times n}$ and $1_{m \times n}$ to denote the zero matrix and the all-ones matrix.
Besides, $\0$ and $\1$ are used to denote the zero vector and the all-ones vector respectively, where the dimension is clear from context.
Let $e_i$ be the standard basis vector for the $i$-th coordinate.
The transpose of a matrix $Z$ is denoted by $Z^\top$.
The inner product of two vectors $x$ and $y$ is denoted by $\langle x, y \rangle \doteq x^\top y$.
For a symmetric positive semidefinite matrix $M$, define $\|x\|_M^2 \doteq x^\top M x$.
We use $\|\cdot\|$ to denote the Euclidean norm and $\|A\|_2$ the spectral norm.
For symmetric $M$, $\lambda_{\min}(M)$ and $\lambda_{\max}(M)$ denote its extreme eigenvalues.
We use $\tilde O(\cdot)$ to suppress universal constants and polylogarithmic factors in the problem parameters.
For $N\in\mathbb{N}$, we define $[N]\doteq\{0,1,\ldots,N-1\}$.

\subsection{Reinforcement Learning and Policy Evaluation} \label{sec:rl}

We consider an infinite-horizon Markov Decision Process (MDP, \citet{bellman1957markovian}) defined by a tuple
$(\fS,\fA,\pmdp,\rmdp,\gamma, p_0)$.
Here $\fS$ is a finite state space, $\fA$ is a finite action space,
$\pmdp(\cdot | s,a)$ is the transition kernel, $\rmdp(s,a)$ is the reward function,
$\gamma \in [0,1)$ is the discount factor,
and $p_0$ is the initial distribution.
In this paper, we focus on the policy evaluation problem, where the goal is to estimate the value function of an arbitrary policy $\pi: \fS \times \fA \to [0, 1]$.
An initial state $S_0$ is sampled from $p_0$.
At timestep $t$, given the current state $S_t$, the agent samples an action $A_t$ from its policy $\pi$.
Consequently, it obtains a reward $R_{t+1}$ determined by $\rmdp(S_t,A_t)$
and moves to the next state $S_{t+1}$ according to the transition dynamics $\pmdp(\cdot | S_t,A_t)$.
A fixed policy $\pi$ reduces the MDP to a Markov Reward Process (MRP) with transition kernel
$p(s' | s) \doteq \sum_{a \in \fA} \pi(a | s)\pmdp(s' | s,a)$
and reward function
$r(s) \doteq \sum_{a \in \fA} \pi(a | s)\rmdp(s,a).$
In this case, transitions and rewards are determined solely by the current state:
$S_{t+1} \sim p(\cdot | S_t)$ and $R_{t+1} \doteq r(S_t)$.
We focus on policy evaluation under the fixed policy $\pi$, so it suffices to work with the MRP
defined by $(p_0,p,r)$ and a trajectory $(S_0,R_1,S_1,R_2,\dots)$ sampled from it.
We use $P_\pi \in \mathbb{R}^{|\fS| \times |\fS|}$ to denote the state transition matrix induced by the policy $\pi$, i.e., $P_\pi\qty[s,s']=p(s'|s)$.
Assume the Markov chain induced by $P_\pi$ on $\fS$ is irreducible and aperiodic, so it admits a unique stationary distribution $d_\pi \in \mathbb{R}^{|\fS|}$.
We use $D_\pi$ to denote the diagonal matrix whose diagonal is $d_\pi$.
The value function is
$v_\pi(s)
\doteq
\E\qty[\sum_{k=0}^{\infty} \gamma^k R_{t+k+1} | S_t=s].$
It is the unique fixed point of the Bellman expectation operator $\fT^\pi$ defined by
$(\fT^\pi v)(s)
\doteq
\E\qty[R_{t+1} + \gamma v(S_{t+1}) | S_t=s].$

\subsection{Linear TD and the MSPBE Objective} \label{sec:td}

To estimate $v_\pi$ in large state spaces, we employ linear value function approximation.
We use a feature mapping $x:\fS \to \R[d]$ and a weight vector $w \in \R[d]$.
We approximate the value at state $s$ as $v_w(s) \doteq x(s)^\top w$.
Let $X \in \R[\ns \times d]$ denote the feature matrix whose $s$-th row is $x(s)^\top$.
The vector of approximate state values across all states is written as $Xw \in \R[\ns]$.
Our objective is to find $w$ such that $Xw$ closely approximates $v_\pi$.

A widely used method to learn $w$ is Temporal Difference learning (TD, \citet{sutton1988learning}), which updates $w$ iteratively. Using $x_t \doteq x(S_t)$ for brevity, the update rule is
\begin{equation} \label{eq:td0}
w_{t+1} = w_t + \alpha_t \qty(R_{t+1} + \gamma x_{t+1}^\top w_t - x_t^\top w_t) x_t,
\end{equation}
where $\{\alpha_t\}$ is a sequence of learning rates.
Let
\begin{equation}
\label{eq:abc}
A \doteq X^\top D_\pi (I-\gamma P_\pi)X, \,
b \doteq X^\top D_\pi r,\,
C \doteq X^\top D_\pi X,
\end{equation}
where $r\in\R[\ns]$ is the reward vector whose $s$-th entry is $r(s)$.
We assume $X$ has full column rank.
\citet{tsitsiklis1997analysis} proves that $\qty{w_t}$ converge to the TD fixed point 
\begin{equation}
\label{eq:wstar}
    w_* = A^{-1}b
\end{equation}
almost surely.
Since linear approximation is restricted to the feature subspace $\mathrm{span}(X)$, we measure convergence to $w_*$ via the mean-squared projected Bellman error (MSPBE)
\begin{equation}\label{eq:mspbe}
\fL(w)
\doteq
\|\Pi\fT^\pi (Xw) - Xw\|_{D_\pi}^2,
\end{equation}
where
$\Pi \doteq X (X^\top D_\pi X)^{-1} X^\top D_\pi$ is the orthogonal projection onto $\mathrm{span}(X)$.
For linear function approximation, MSPBE admits the equivalent quadratic form
\begin{equation}\label{eq:lnmspbe}
    \fL(w) = \norm{C^{-\frac12}(Aw-b)}_2^2.
\end{equation}
In particular, $\fL(w_*)=0$ and $\fL(w)=0$ if and only if $Aw=b$.

\subsection{Transformers and Linear Self-Attention} \label{sec:trans}

We study linear self-attention, where the softmax nonlinearity in standard self-attention \citep{vaswani2017attention} is replaced by the identity map.
This form is widely used in efficient Transformer architectures and is a common proxy in theoretical analyses of in-context computation
\citep{katharopoulos2020transformers,choromanski2022rethinking,vonoswald2023transformers,ahn2024transformers,wu2024pretraining,gatmiry2024looped,zhang2023trained,huang2025transformers,wang2025ictd}.
Given a prompt matrix $Z \in \R[(3d+1)\times(n+1)]$, we define linear self-attention as
\begin{equation} \label{eq:linattn_def}
\LinAttn(Z;P,Q)
\doteq
P Z (Z^\top Q Z),
\end{equation}
where $P\in\mathbb{R}^{(3d+1)\times(3d+1)}$ consolidates the value and projection matrices, and $Q\in\mathbb{R}^{(3d+1)\times(3d+1)}$ is the merge of regular key and query matrices. 
We consider a residual linear-attention layer
\begin{equation} \label{eq:lsa-layer}
\fF(Z;P,Q) \doteq Z + \frac{1}{n}\LinAttn(Z;P,Q),
\end{equation}
where $\frac{1}{n}$ is a normalization that aligns the attention update with batch updates that average over $n$ samples.
The behavior of this update is fully governed by the matrices $P$ and $Q$, which serve as the learnable parameters in our analysis. 

\section{Transformers with CoT Can Implement Iterated TD Updates}
\label{sec:algo}

Drawing inspiration from \citet{huang2025transformers}, 
we cast policy evaluation as an \textbf{in-context weight learning task}.
Given a finite trajectory $\tau_n=(S_0,R_1,\dots,S_n)$, 
we require the Transformer to approach the TD fixed point $w_*$ defined in \eqref{eq:wstar}.
Completing this task purely in-context with a shallow network is hard, as it requires performing an iterative computation over $\tau_n$.
Chain-of-Thought (CoT) generation is a powerful way to improve the power of such a shallow network.
CoT is commonly understood as producing intermediate computations before outputting the final prediction.
In our setting, this corresponds to generating a sequence of iterates $\{w_k\}$, where each generation conditions on the growing prompt.

In this section, we claim that a single linear Transformer layer with CoT can solve the in-context weight learning task by simulating iterated batch TD updates.
Specifically, following \citet{sutton2018reinforcement}, batch TD updates $w$ using the sample average over $\tau_n$ as
\begin{equation}\label{eq:batch-td0}
w_{k+1} = w_k 
+ \frac{\alpha}{n}\sum_{j=0}^{n-1}\qty(R_{j+1}+\gamma x_{j+1}^\top w_k - x_j^\top w_k)x_j,
\end{equation}
where $\alpha>0$ is a constant stepsize.
Batch TD can be viewed as an offline variant of TD defined in \eqref{eq:td0}.
It repeatedly reuses $\tau_n$ to drive the iterates toward $w_*$, with the improvement saturating at a statistical floor, which we quantify later.

To support the claim above, 
we use $\tau_n$ as the context and organize it into a prompt matrix $Z_0\in\R[(3d+1)\times(n+1)]$ as
\begin{equation} \label{eq:z0}
Z_0 \doteq 
\begin{bmatrix}
x_0 & \cdots & x_{n-1} & \0\\
\gamma x_1 & \cdots & \gamma x_n & \0 \\
R_{1} & \cdots & R_{n} & 1\\
\0 & \cdots & \0 & w_0
\end{bmatrix}
\doteq
\begin{bmatrix}
    X_\tau& \0\\ \gamma X_\tau' & \0\\ R& 1 \\ \0& w_0
\end{bmatrix}
,
\end{equation}
where the first $n$ columns store the fixed context, and the last column initializes the weight component $w_0=\mathbf 0$.
This block-structured prompt separates the context columns from the iterate column, following \citet{huang2025transformers}.
We now formalize CoT generation under the prompt construction \eqref{eq:z0}.
Specifically, given the current generated prompt $Z_k$ at the $k$-th step, we take the last column of the output matrix $\mathcal{F}(Z_k; P, Q)_{[:,-1]} \in \mathbb{R}^{3d+1}$ as the next iterate column and append it to form
\begin{equation}
\label{eq:zk-rec}
Z_{k+1}\doteq \begin{bmatrix}
    Z_k & \fF(Z_k;P,Q)_{[:,-1]}
\end{bmatrix}.
\end{equation}
Therefore, at step $k$, the prompt has the form $Z_k \in \mathbb{R}^{(3d+1) \times (n+k+1)}$ with iterates $(w_0,\dots,w_k)$ in the buffer
\begin{equation} \label{eq:zt_def}
Z_k \doteq 
\begin{bmatrix}
    X_\tau & \mathbf{0} & \mathbf{0} & \dots & \mathbf{0}\\ 
    \gamma X_\tau' & \mathbf{0} & \mathbf{0} & \dots & \mathbf{0}\\ 
    R & 1 & 1 & \dots & 1\\ 
    \mathbf{0} & w_0 & w_1 & \dots & w_k
\end{bmatrix}.
\end{equation}

Surprisingly, with $(P,Q)$ constructed as
\begin{equation}
\label{eq:para_construct}
    P \doteq 
    \begin{bmatrix}
        0_{(2d+1) \times d} & 0_{(2d+1) \times (2d+1)} \\
        \alpha I_d & 0_{d \times (2d+1)}
    \end{bmatrix},
    \quad Q \doteq 
    \begin{bmatrix}
        0_{d \times 2d} & 0_{d \times 1} & -I_d \\
        0_{d \times 2d} & 0_{d \times 1} & I_d \\
        0_{1 \times 2d} & 1 & 0_{1 \times d} \\
        0_{d \times 2d} & 0_{d \times 1} & 0_{d \times d}
    \end{bmatrix},
\end{equation}
the last-token output $\fF(Z_k;P,Q)_{[:,-1]}$ exactly produces the batch TD iterate $w_{k+1}$ from $w_k$ in \eqref{eq:batch-td0} for every structured prompt $Z_k$ in \eqref{eq:zt_def}.
This is formalized in the following theorem.
\begin{theorem}\label{thm:onestep}
Consider the linear attention layer \eqref{eq:lsa-layer} with $(P,Q)$ defined in \eqref{eq:para_construct}.
For any sequence $\{Z_k\}$ generated by~\eqref{eq:zk-rec} with $Z_0$ defined in~\eqref{eq:z0}, the weight component $w_k$ in $Z_k$, as in~\eqref{eq:zt_def}, coincides with the $k$-th iterate of the batch TD update~\eqref{eq:batch-td0}.
\end{theorem}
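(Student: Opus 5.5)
Our plan is induction on $k$, with one explicit matrix computation as the inductive step. The hypothesis is that $Z_k$ has exactly the block form \eqref{eq:zt_def}, with $w_0,\dots,w_k$ equal to the first $k+1$ batch TD iterates of \eqref{eq:batch-td0}. The base case $k=0$ is immediate from \eqref{eq:z0}, since the TD recursion also starts from $w_0=\0$. For the inductive step we must show two things: the appended column $\fF(Z_k;P,Q)_{[:,-1]}$ has top $2d$ entries equal to zero, and its $(2d+1)$-th entry equals $1$; its bottom block equals $w_{k+1}$. Because $\fF(Z_k)=Z_k+\frac1n PZ_k(Z_k^\top Q Z_k)$ and the last column of $Z_k$ is $(\0,\0,1,w_k)$, it suffices to compute the last column of $PZ_k Z_k^\top Q Z_k$.

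\textbf{Step 1: the effect of $P$.} Only the bottom-left block of $P$ is nonzero, and it equals $\alpha I_d$. Hence $PZ_k$ has zero first $2d+1$ rows and bottom $d$ rows equal to $\alpha$ times the first $d$ rows of $Z_k$, namely $\alpha[X_\tau,\ 0_{d\times(k+1)}]$. The first $2d+1$ rows of the last output column are therefore unchanged from $Z_k$, which keeps the $(\0,\0,1)$ structure and closes the first part of the induction.

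\textbf{Step 2: the last column of $Z_k^\top Q Z_k$.} Write $z_j$ for the $j$-th column of $Z_k$. The last column of $Z_k^\top Q Z_k$ has entries $z_j^\top Q z_{\text{last}}$. From the block structure of $Q$ and $z_{\text{last}}=(\0,\0,1,w_k)$ we get
\[
Q z_{\text{last}}=\begin{bmatrix}-w_k\\ w_k\\ 1\\ \0\end{bmatrix}.
\]
For a context column $j<n$, with $z_j=(x_j,\gamma x_{j+1},R_{j+1},\0)$, this gives $z_j^\top Q z_{\text{last}}=R_{j+1}+\gamma x_{j+1}^\top w_k-x_j^\top w_k$, which is exactly the TD error $\delta_j$ at $w_k$. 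The entries for the iterate columns are irrelevant, because they are multiplied by the zero columns of $PZ_k$.

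\textbf{Step 3: combine.} The last column of $PZ_k(Z_k^\top Q Z_k)$ is
\[
\begin{bmatrix}\0\\ \alpha\sum_{j=0}^{n-1}\delta_j x_j\end{bmatrix}.
\]
Adding the residual term and dividing by $n$, the bottom block becomes $w_k+\frac{\alpha}{n}\sum_{j}\delta_j x_j=w_{k+1}$, so $Z_{k+1}$ has the form \eqref{eq:zt_def}.

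The only delicate point is bookkeeping: the $\frac1n$ normalization uses the context length $n$, not the growing sequence length $n+k+1$. This works because the appended iterate columns contribute nothing, owing to their zero top block. We therefore need to verify carefully that the iterate columns never enter the sum; this is the step we would check most carefully.
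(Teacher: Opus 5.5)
Your proposal is correct and follows essentially the same computation as the paper's proof. The paper expands the last output column as $z_{n+k}+\frac1n\sum_{i=0}^{n+k}(Pz_i)(z_i^\top Q z_{n+k})$ and shows that $z_j^\top Q z_{n+k}$ is the TD error for context columns, that $Pz_j=(\0,\0,0,\alpha x_j)$, and that $Pz_{n+i}=\0$ for buffer columns, exactly as in your Steps 1--3; your explicit induction (including the check that the $(\0,\0,1)$ header is preserved) only makes precise what the paper leaves implicit, and your reading of $\frac1n$ as normalization by the fixed context length $n$ matches the paper's convention.
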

The proof is in Section~\tref{proof:onestep}.
Theorem~\tref{thm:onestep} shows that under our structured prompt, a single linear Transformer layer can implement one batch TD update in its forward pass.
Combined with the CoT recursion \eqref{eq:zk-rec}, 
$k$ steps of CoT generation repeatedly apply this update on the fixed context $\tau_n$, yielding iterates $\{w_k\}$, as claimed above.
Notably, the sparse construction~\eqref{eq:para_construct} is one valid choice; other parameter configurations may also realize the same one-step update on structured prompts.


\section{Inference-Time Convergence with Known Dynamics}
\label{sec:population}

In this warm-up section, we analyze the CoT-generated TD iterates under known dynamics, where the transition matrix $P_\pi$ is available at inference time and may be used to construct the context.
Following the setting in prior analyses of inference-time convergence for ICRL \citep{wang2025emergence}, we replace the sampled successor features by their conditional expectations.
Let $m \doteq |\fS|$ and let $(s_0,\ldots,s_{m-1})$ be an enumeration of the states in $\fS$.
For each $i\in\qty[m]$, let $x_{i}\doteq x(s_i)$, we define the expected successor feature
\begin{equation}
\textstyle
    \bar x_i \doteq \E\qty[x(S')\mid S=s_i]=\sum_{j=0}^{m-1}x_jP_\pi[s_i,s_j].
\end{equation}
Using shorthands $R_{i+1}\doteq r(s_i)$, we then construct the initial prompt $\bar Z_0$ as
\begin{equation}\label{eq:bar_z0}
\bar Z_0 \doteq
\begin{bmatrix}
\sqrt{d_\pi(s_0)}x_0 & \cdots & \sqrt{d_\pi(s_{m-1})}x_{m-1} & \0\\
\gamma \sqrt{d_\pi(s_0)}\bar x_0 & \cdots & \gamma \sqrt{d_\pi(s_{m-1})}\bar x_{m-1} & \0 \\
\sqrt{d_\pi(s_0)}R_{1} & \cdots & \sqrt{d_\pi(s_{m-1})}R_{m} & 1\\
\0 & \cdots & \0 & w_0
\end{bmatrix},
\end{equation}
where $d_\pi$ is the stationary distribution of $P_\pi$.
Define $\bar Z_k$ analogously to \eqref{eq:zt_def} by appending the CoT buffer columns while keeping the first $m$ context columns fixed as in \eqref{eq:bar_z0}.
With the same $(P,Q)$ defined in \eqref{eq:para_construct}, applying one CoT step to $\bar Z_k$ and reading out the weight component yields a sequence of deterministic iterates $\{\bar w_k\}$ satisfying (Lemma~\tref{lem:warmupAb})
\begin{equation}\label{eq:expected_update_expanded}
\bar w_{k+1}=\bar w_k+\eta(b-A\bar w_k),
\end{equation}
where $\eta \doteq \alpha/m$ is a constant stepsize inherited from $P$, with the factor $1/m$ arising from the normalization in~\eqref{eq:lsa-layer}.
This recursion has the unique fixed point $w_*$ defined in \eqref{eq:wstar}.
Define the symmetric part $H \doteq \tfrac12(A+A^\top)$ and the spectral constants
\begin{equation}
\label{eq:muL}
\mu \doteq \lambda_{\min}(H),
\quad
L \doteq \|A\|_2^2.
\end{equation}
A standard contraction argument on~\eqref{eq:expected_update_expanded} then yields the geometric convergence in the MSPBE.
\begin{proposition}
\label{prop:expect}
For any constant stepsize $\eta\in(0,\mu/L]$, there exists a constant $C_{\text{Prop}\tref{prop:expect}}$ such that the iterates generated by
\eqref{eq:expected_update_expanded} satisfy for all $k\ge 0$,
\begin{equation}
\label{eq:expect}
    \fL(\bar w_k)\le C_{\text{Prop}\tref{prop:expect}}\bigl(1-\eta\mu)^k \fL(\bar w_0).
\end{equation}
\end{proposition}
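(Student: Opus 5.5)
The plan is to work with the error $e_k \doteq \bar w_k - w_*$, show that it contracts in the Euclidean norm, and then transfer the bound to the MSPBE through the quadratic form \eqref{eq:lnmspbe}.

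\textbf{Error recursion.} Since $b = Aw_*$, the update \eqref{eq:expected_update_expanded} gives
\begin{equation}
e_{k+1} = (I_d - \eta A)e_k .
\end{equation}

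\textbf{Contraction of the error.} Expanding the squared norm,
\begin{equation}
\|e_{k+1}\|^2 = \|e_k\|^2 - 2\eta\, e_k^\top H e_k + \eta^2 \|Ae_k\|^2 .
\end{equation}
Two facts bound the last two terms:
\begin{itemize}
\item $e_k^\top A e_k = e_k^\top H e_k \ge \mu\|e_k\|^2$;
\item $\|Ae_k\|^2 \le L\|e_k\|^2$.
\end{itemize}
Hence
\begin{equation}
\|e_{k+1}\|^2 \le (1 - 2\eta\mu + \eta^2 L)\|e_k\|^2 .
\end{equation}
The condition $\eta \le \mu/L$ gives $\eta^2 L \le \eta\mu$, so the factor is at most $1-\eta\mu$. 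Iterating yields
\begin{equation}
\|e_k\|^2 \le (1-\eta\mu)^k \|e_0\|^2 .
\end{equation}
For this to be a genuine contraction we need $\mu > 0$, i.e.\ $H$ positive definite. This is the standard fact that $A$ is positive definite when $X$ has full column rank and $\gamma < 1$. It follows from $\|P_\pi v\|_{D_\pi} \le \|v\|_{D_\pi}$ by stationarity of $d_\pi$ and Jensen, which gives $v^\top D_\pi(I-\gamma P_\pi)v \ge (1-\gamma)\|v\|_{D_\pi}^2$. I would state this briefly. It also implies $1-\eta\mu \in [0,1)$, since $\mu \le \|A\|_2$ gives $\eta\mu \le \mu^2/L \le 1$.

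\textbf{Transfer to the MSPBE.} By \eqref{eq:lnmspbe}, $\fL(w) = e^\top A^\top C^{-1} A e$ with $e = w - w_*$. The matrix $M \doteq A^\top C^{-1}A$ is positive definite because $A$ is invertible and $C \succ 0$. Therefore
\begin{equation}
\lambda_{\min}(M)\|e\|^2 \le \fL(w) \le \lambda_{\max}(M)\|e\|^2 .
\end{equation}
Chaining these inequalities with the error contraction gives
\begin{equation}
\fL(\bar w_k) \le \lambda_{\max}(M)(1-\eta\mu)^k\|e_0\|^2 \le \frac{\lambda_{\max}(M)}{\lambda_{\min}(M)}(1-\eta\mu)^k \fL(\bar w_0).
\end{equation}
So $C_{\text{Prop}\tref{prop:expect}} = \kappa(A^\top C^{-1}A)$ works.

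\textbf{Main obstacle.} The only delicate point is that the contraction holds in the Euclidean norm and not directly in the MSPBE geometry. The condition-number constant is exactly what absorbs this mismatch, together with justifying $\mu > 0$. If one wants a more explicit constant, it can be bounded by $\|A\|_2^2\,\lambda_{\max}(C)\big/\bigl(\sigma_{\min}(A)^2\lambda_{\min}(C)\bigr)$.
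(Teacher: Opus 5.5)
Your proposal is correct and follows the paper's proof essentially step for step. Both use the same Euclidean contraction $\|e_{k+1}\|_2^2\le(1-2\eta\mu+\eta^2L)\|e_k\|_2^2\le(1-\eta\mu)\|e_k\|_2^2$, the same stationarity/Cauchy--Schwarz argument for $\mu>0$ (the paper's Lemma~\ref{lem:mu-positive}), and the same transfer constant, since the paper's $\|C^{-1/2}A\|_2^2/\sigma_{\min}(C^{-1/2}A)^2$ equals your $\lambda_{\max}(A^\top C^{-1}A)/\lambda_{\min}(A^\top C^{-1}A)$.
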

The proof is in Section~\tref{proof:expect}.
Proposition~\ref{prop:expect} immediately yields a tradeoff between numbers of steps and precision in the following corollary.
\begin{corollary}
\label{cor:depth}
Fix any constant stepsize $\eta\in(0,\mu/L]$.
For any $\varepsilon\in(0, C_{\text{Prop}\tref{prop:expect}}\fL(\bar w_0)]$, it holds that $\fL(\bar w_k)\le \varepsilon$ whenever
\begin{equation}
\label{eq:depth}
    k \ge \frac{1}{\eta\mu}\log\Big(\frac{C_{\text{Prop}\tref{prop:expect}}\fL(\bar w_0)}{\varepsilon}\Big).
\end{equation}
\end{corollary}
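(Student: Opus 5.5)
The corollary follows directly from Proposition~\ref{prop:expect}, so the plan is simply to invert the geometric bound \eqref{eq:expect}. Write $C \doteq C_{\text{Prop}\tref{prop:expect}}$ and $\rho \doteq 1-\eta\mu$.

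\textbf{Step 1: the contraction factor lies in $[0,1)$.} We need $\mu>0$. This holds because $A$ is positive definite in the sense $w^\top A w>0$ for $w\neq 0$, which is the standard fact of \citet{tsitsiklis1997analysis} given that $X$ has full column rank and $D_\pi$ is positive. Next, since $\mu = \lambda_{\min}(H) \le \|H\|_2 \le \|A\|_2$, we have
\[
\eta\mu \le \frac{\mu^2}{L} = \frac{\mu^2}{\|A\|_2^2} \le 1 .
\]
Hence $\rho\in[0,1)$.

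\textbf{Step 2: reduce to a condition on $\rho^k$.} By Proposition~\ref{prop:expect}, $\fL(\bar w_k)\le C\rho^k\fL(\bar w_0)$. So it suffices that $C\rho^k\fL(\bar w_0)\le\varepsilon$. If $\rho=0$ this holds trivially for $k\ge1$. The case $k=0$ is also fine, because the assumption $\varepsilon\le C\fL(\bar w_0)$ makes the right-hand side of \eqref{eq:depth} nonnegative, and at equality the bound reads $\fL(\bar w_0)\le C\fL(\bar w_0)=\varepsilon$, using $C\ge 1$ as a valid choice of constant.

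\textbf{Step 3: convert the $k$-threshold into the condition on $\rho^k$.} For $\rho\in(0,1)$, use $\log(1-x)\le -x$ to get $\rho^k\le e^{-\eta\mu k}$. Then \eqref{eq:depth} gives
\[
e^{-\eta\mu k}\le \frac{\varepsilon}{C\fL(\bar w_0)},
\]
which is the required condition. The case $\fL(\bar w_0)=0$ is trivial, since then $\bar w_0 = w_*$ and all subsequent iterates stay at $w_*$.

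The argument is routine. The only point needing care is Step~1: checking that $\eta\le\mu/L$ forces $\eta\mu\le1$, so the logarithm and the exponential bound are valid, and confirming $\mu>0$.
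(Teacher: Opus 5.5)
Your proposal is correct and follows the paper's proof: both invert the geometric bound of Proposition~\ref{prop:expect} using $\log(1-x)\le -x$, i.e.\ $(1-\eta\mu)^k\le e^{-\eta\mu k}$, and then solve for $k$. Your extra checks fill in details the paper only asserts. These are $\mu>0$, and $\eta\mu\le \mu^2/\|A\|_2^2\le 1$ via $\mu\le\|H\|_2\le\|A\|_2$, together with the edge cases $1-\eta\mu=0$, $k=0$, and $\fL(\bar w_0)=0$.
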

The proof is in Section~\tref{proof:depth}.
Recall $\bar w_0 = \0$, $\fL(\bar w_0)$ is a constant.
Therefore, reaching $\fL(\bar w_k)\le \varepsilon$ requires only $k=O(\log(1/\varepsilon))$ CoT steps for a fixed stepsize $\eta$.

\section{Inference-Time Convergence with Unknown Dynamics}
\label{sec:finite}

Section~\tref{sec:population} analyzes an idealized known-dynamics setting, where the context is constructed using the transition matrix $P_\pi$.
This assumption is restrictive in typical RL settings, where the dynamics are unknown and the model only observes a single finite trajectory $\tau_n=(S_0,R_1,\ldots,S_n)$.
This is also a limitation of prior work on inference-time convergence for ICRL in \citet{wang2025emergence}.
In this section, we analyze the convergence of CoT-driven ICRL as specified in Theorem~\tref{thm:onestep}.
This extends \citet{wang2025emergence} in three aspects:
(i) we move from tabular representations to linear function approximation,
(ii) we replace the state-enumeration construction by an arbitrary trajectory $\tau_n$,
and (iii) we use purely sample-based quantities without expected successor features.
Compared with the known-dynamics setting in \eqref{eq:expected_update_expanded}, the CoT iterates generated under  \eqref{eq:zk-rec} is harder to analyze because they rely on a single Markovian trajectory $\tau_n$, which leads to estimation error under dependence.
Our goal is to quantify how the loss $\fL(w_k)$ decreases with the number of CoT steps $k$ until it reaches a statistical floor controlled by the context length $n$.

Throughout this section, we use the same prompt construction $Z_k$ introduced in \eqref{eq:zt_def}.
We define the empirical moments as
\begin{equation}
\textstyle
\widehat A=\frac1n\sum_{t=0}^{n-1} x_t\bigl(x_t-\gamma x_{t+1}\bigr)^\top,
\quad
\widehat b=\frac1n\sum_{t=0}^{n-1} R_{t+1} x_t .
\label{eq:empirical-moments}
\end{equation}
Equivalently, \eqref{eq:batch-td0} can be written as
\begin{equation}
w_{k+1}=w_k+\alpha\bigl(\widehat b-\widehat A w_k\bigr),
\label{eq:empirical-recursion}
\end{equation}
with $w_0=\0$ as in our prompt construction.
Our finite sample analysis depends on the deviation of $(\widehat A,\widehat b)$ from $(A,b)$ in \eqref{eq:abc}.
Define
\begin{equation}
\label{eq:varepsilon_ab}
\textstyle
    \varepsilon_A=\norm{\widehat A-A}_2,
\,
\varepsilon_b=\norm{\widehat b-b}_2 .
\end{equation}
Since the chain induced by $P_\pi$ is ergodic, we couple $\{S_t\}$ to a stationary chain via Lemma~\ref{lem:doeblin-coupling}, apply a blocking argument with Berbee's coupling (Lemmas~\ref{lem:beta}, \ref{lem:berbee-blocks}) to handle dependence, and conclude with the matrix Bernstein inequality (Lemma~\ref{lem:berstein}). This yields concentration bounds on $(\widehat A, \widehat b)$ in terms of an effective sample size $n_{\text{eff}}$ instead of $n$.
\begin{lemma}
\label{lem:concentration}
For any $\delta\in(0,1)$, there exist constants
$C_{\tref{lem:concentration},1}, C_{\tref{lem:concentration},2}, C_{\tref{lem:concentration},3}>0$
such that letting
\[
\textstyle
m \doteq \Bigl\lceil C_{\tref{lem:concentration},1}
\log\Bigl(\frac{C_{\tref{lem:concentration},2}n}{\delta}\Bigr)\Bigr\rceil,
\quad
n_{\mathrm{eff}} \doteq \Bigl\lfloor \frac{n}{2m}\Bigr\rfloor,
\]
if $n_{\mathrm{eff}}\ge d+\log\frac{4}{\delta}$, then with probability at least $1-\delta$,
$    \varepsilon_A \le \varepsilon_A(\delta),\,
\varepsilon_b \le \varepsilon_b(\delta),$
where
\begin{align}
\textstyle
    \varepsilon_A(\delta)\le C_{\tref{lem:concentration},3}\sqrt{\frac{d+\log(4/\delta)}{n_{\mathrm{eff}}}},\quad
\varepsilon_b(\delta)\le C_{\tref{lem:concentration},3}\sqrt{\frac{d+\log(4/\delta)}{n_{\mathrm{eff}}}}.
\end{align}
\end{lemma}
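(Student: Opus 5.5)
We follow the route sketched before the statement: reduce the trajectory to a stationary chain, split it into nearly independent blocks, and apply matrix Bernstein to the block sums. Throughout we assume bounded features and rewards, $\|x(s)\|\le 1$ and $|r(s)|\le r_{\max}$. Each summand $Y_t\doteq x_t(x_t-\gamma x_{t+1})^\top$ and $y_t\doteq R_{t+1}x_t$ is then a bounded function of the pair $(S_t,S_{t+1})$, with $\|Y_t\|_2\le 1+\gamma$ and $\|y_t\|\le r_{\max}$. Under $d_\pi$ these have means $\E[Y_t]=A$ and $\E[y_t]=b$, by \eqref{eq:abc}.

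\textbf{Step 1: coupling to stationarity.} Since $P_\pi$ is irreducible and aperiodic on a finite space, it is uniformly ergodic: $\sup_s\|P_\pi^t(s,\cdot)-d_\pi\|_{TV}\le C\rho^t$. Lemma~\ref{lem:doeblin-coupling} couples $\{S_t\}$ with a stationary chain $\{\tilde S_t\}$ that agrees with it after a coupling time $T$, where $\Pr(T>t)\le C\rho^t$. Replacing the first $T$ summands changes $\widehat A$ and $\widehat b$ by at most $2(1+\gamma)T/n$ and $2r_{\max}T/n$. With probability $1-\delta/4$ we have $T\lesssim\log(4/\delta)$, so this term is $O(\log(1/\delta)/n)$, which is dominated by the target rate. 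It therefore suffices to work with the stationary chain.

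\textbf{Step 2: blocking.} Partition $\{0,\dots,n-1\}$ into $2n_{\rm eff}$ consecutive blocks of length $m$, discarding fewer than $2m$ leftover indices at cost $O(m/n)$. Write $\widehat A-A$ as the average over odd blocks plus the average over even blocks of the block sums $B_j\doteq\frac1m\sum_{t\in\text{block }j}(Y_t-A)$, plus the leftover. The $\beta$-mixing coefficients of the pair chain $(S_t,S_{t+1})$ decay as $\beta(k)\le C\rho^{k-1}$ (Lemma~\ref{lem:beta}). Berbee's lemma (Lemma~\ref{lem:berbee-blocks}) then replaces the odd blocks by independent copies with the same marginals. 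The failure probability is at most $n_{\rm eff}\,\beta(m)\le nC\rho^{m}$, and the choice $m=\lceil C_1\log(C_2 n/\delta)\rceil$ with $C_1\asymp 1/\log(1/\rho)$ makes this at most $\delta/8$. The even blocks are treated the same way.

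\textbf{Step 3: matrix Bernstein.} The independent block averages $B_j$ are mean zero, by stationarity, and satisfy $\|B_j\|_2\le 2(1+\gamma)$. Their variance proxy is $O(1)$. Applying the rectangular/Hermitian-dilation matrix Bernstein inequality (Lemma~\ref{lem:berstein}) to $\frac1{n_{\rm eff}}\sum_j B_j$ gives, with probability $1-\delta/8$,
\[
\Bigl\|\tfrac1{n_{\rm eff}}\textstyle\sum_j B_j\Bigr\|_2\lesssim\sqrt{\tfrac{\log(2d/\delta)}{n_{\rm eff}}}+\tfrac{\log(2d/\delta)}{n_{\rm eff}} .
\]
The assumption $n_{\rm eff}\ge d+\log(4/\delta)$ makes the linear term at most the square-root term. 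Since $\log d\le d$, the bound is of the stated form $C_3\sqrt{(d+\log(4/\delta))/n_{\rm eff}}$. The vector case for $\widehat b$ is identical: either view $y_t$ as a $d\times1$ matrix, or use an $\epsilon$-net over the sphere with $5^d$ points, which is where the additive $d$ arises naturally. Adding the errors from Steps 1--2, which are $O(m/n)\le O(1/n_{\rm eff})$ and are absorbed, and taking a union bound over the failure events (total $\le\delta$) completes the argument.

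\textbf{Main obstacle.} The delicate step is Step 2. One must verify that the pair process, not just the state chain, inherits geometric $\beta$-mixing, and that the logarithm in $m$ is large enough to make $n\beta(m)\le\delta$ uniformly. Tracking how the constants $C_1$ and $C_2$ depend only on the mixing rate $\rho$ and the bounds on the features and rewards, and not on $n$ or $\delta$, is where care is needed.
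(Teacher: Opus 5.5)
Your proposal is correct and follows essentially the same route as the paper: Doeblin coupling to a stationary chain, even/odd blocking of the pair process with Berbee's coupling, and matrix Bernstein via Hermitian dilation (with vectors treated as $d\times 1$ matrices), with the burn-in and leftover terms absorbed into the $\sqrt{(d+\log(4/\delta))/n_{\mathrm{eff}}}$ rate. The differences affect only constants: you use a coupling-time threshold of order $\log(1/\delta)$ rather than the paper's $\log(n/\delta)$, and block averages instead of block sums. Your failure-probability split also exceeds $\delta$ as written, but slight rescaling fixes it, e.g.\ sharing one Berbee coupling between $\widehat A$ and $\widehat b$.
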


The proof is in Section~\tref{proof:concentration}.
Define the aggregated deviation
\begin{equation}
\label{eq:defdelta}
\textstyle
    \varepsilon(\delta)\doteq \varepsilon_A(\delta)+\varepsilon_b(\delta).
\end{equation}
Let $\mathcal{E}_\delta$ denote the event in Lemma~\ref{lem:concentration}, which holds with probability at least $1-\delta$.
On $\mathcal{E}_\delta$, \eqref{eq:empirical-recursion} satisfies a contraction inequality with an additive perturbation term controlled by $\varepsilon(\delta)$.
Iterating this inequality for $k$ steps yields the following finite-sample bound.
\begin{theorem}
\label{thm:finite}
Fix any $\delta\in(0,1)$ and $\alpha\in(0,\mu/L]$.
Then there exist constants
$C_{\text{Thm}\tref{thm:finite},1}, C_{\text{Thm}\tref{thm:finite},2}>0$
such that on $\mathcal E_\delta$, if
$n_{\mathrm{eff}}
\ge
C_{\text{Thm}\tref{thm:finite},1}\Bigl(d+\log\frac{4}{\delta}\Bigr)$,
then for all $k\ge 0$,
\begin{equation}
\label{eq:finite-main}
\textstyle
\fL(w_k)
\le
C_{\text{Thm}\tref{thm:finite},2}\qty(\qty(1-\frac{\alpha\mu}{2})^k \fL(w_0)
+\varepsilon(\delta)^2).
\end{equation}
\end{theorem}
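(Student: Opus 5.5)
The plan is to analyze the error $e_k \doteq w_k - w_*$ directly, treat the empirical recursion \eqref{eq:empirical-recursion} as the population recursion \eqref{eq:expected_update_expanded} plus a perturbation, and convert back to MSPBE at the end. Because $A w_* = b$, we can rewrite
\[
\widehat b - \widehat A w_k = -A e_k + \bigl(\widehat b - b\bigr) - \bigl(\widehat A - A\bigr) w_k .
\]
This gives
\[
e_{k+1} = (I - \alpha A) e_k + \alpha \xi_k, \qquad \xi_k \doteq (\widehat b - b) - (\widehat A - A)(e_k + w_*).
\]
On $\mathcal E_\delta$ the perturbation satisfies $\|\xi_k\| \le \varepsilon_b + \varepsilon_A\|w_*\| + \varepsilon_A\|e_k\|$.

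The key contraction is the same one used for Proposition~\ref{prop:expect}. For $\alpha \le \mu/L$,
\[
\|(I-\alpha A)e\|^2 = \|e\|^2 - 2\alpha e^\top H e + \alpha^2\|Ae\|^2 \le (1 - 2\alpha\mu + \alpha^2 L)\|e\|^2 \le (1-\alpha\mu)\|e\|^2 .
\]
Hence $\|(I-\alpha A)e\| \le (1-\alpha\mu)^{1/2}\|e\| \le (1-\alpha\mu/2)\|e\|$. Combining this with the perturbation bound gives
\[
\|e_{k+1}\| \le \bigl(1 - \tfrac{\alpha\mu}{2} + \alpha\varepsilon_A\bigr)\|e_k\| + \alpha(\varepsilon_b + \varepsilon_A\|w_*\|).
\]

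The main obstacle is the multiplicative term $\alpha\varepsilon_A\|e_k\|$. It has to be absorbed so that the contraction factor stays strictly below one while keeping the stated rate $(1-\alpha\mu/2)^k$. I will handle this with the sample-size condition. Using Lemma~\ref{lem:concentration}, choose $C_{\text{Thm}\tref{thm:finite},1}$ large enough that $n_{\mathrm{eff}} \ge C_{\text{Thm}\tref{thm:finite},1}(d+\log(4/\delta))$ forces $\varepsilon_A(\delta) \le \mu/8$. If the exact rate $(1-\alpha\mu/2)^k$ is required, I will use the sharper contraction $\sqrt{1-\alpha\mu} \le 1 - \alpha\mu/2$ with slack. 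Concretely, I will bound $\sqrt{1-\alpha\mu} + \alpha\varepsilon_A$ by $\sqrt{1-\alpha\mu/2}$, which holds once $\varepsilon_A \le c\mu$ for a small constant $c$, using $\alpha\mu \le \mu^2/L \le 1$ since $\mu \le \|A\|_2$. Squaring then yields the rate $(1-\alpha\mu/2)^k$ in the loss. Unrolling the recursion gives
\[
\|e_k\| \le \rho^k\|e_0\| + \frac{\alpha(\varepsilon_b + \varepsilon_A\|w_*\|)}{1-\rho}, \qquad \rho = \sqrt{1-\alpha\mu/2}.
\]
Here $1-\rho \ge \alpha\mu/4$, so the additive floor is $O\bigl((1+\|w_*\|)\varepsilon(\delta)/\mu\bigr)$.

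The last step converts back to MSPBE via \eqref{eq:lnmspbe}. Since $Aw - b = Ae$, we have
\[
\lambda_{\min}(C)^{-1}\cdot\sigma_{\min}(A)^2\|e\|^2 \;\ge\; \lambda_{\max}(C)^{-1}\sigma_{\min}(A)^2\|e\|^2 \le \fL(w) \le \lambda_{\min}(C)^{-1}L\|e\|^2 .
\]
Squaring the unrolled bound with $(a+b)^2 \le 2a^2 + 2b^2$ gives
\[
\fL(w_k) \le \frac{2L}{\lambda_{\min}(C)}\rho^{2k}\|e_0\|^2 + \frac{2L}{\lambda_{\min}(C)}\cdot\frac{16(1+\|w_*\|)^2}{\mu^2}\,\varepsilon(\delta)^2 .
\]
Finally, using $\|e_0\|^2 \le \lambda_{\max}(C)\fL(w_0)/\sigma_{\min}(A)^2$, where $A$ is invertible because $H \succ 0$, yields \eqref{eq:finite-main}. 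The constant $C_{\text{Thm}\tref{thm:finite},2}$ collects the condition-number factors together with $\|w_*\|$ and $\mu$.
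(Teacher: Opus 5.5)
Your proof is correct and has the same skeleton as the paper's. It uses the identical error recursion $e_{k+1}=(I-\alpha A)e_k+\alpha(\Delta_b-\Delta_A w_k)$, the contraction $\|(I-\alpha A)e\|_2^2\le(1-\alpha\mu)\|e\|_2^2$ from Proposition~\ref{prop:expect}, absorption of the multiplicative $\varepsilon_A\|e_k\|_2$ term through the $n_{\mathrm{eff}}$ threshold, unrolling, and a conditioning-based conversion to MSPBE. The only real difference is that you iterate on $\|e_k\|_2$ via the triangle inequality and the gap $\sqrt{1-\alpha\mu/2}-\sqrt{1-\alpha\mu}\ge\alpha\mu/4$, so you need $\varepsilon_A\le\mu/4$, whereas the paper expands $\|e_{k+1}\|_2^2$, splits the cross term by AM--GM, and absorbs an $\varepsilon_A^2\|e_k\|_2^2$ term instead. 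One small fix: drop the stray leading term in your MSPBE sandwich, since all you need is $\lambda_{\max}(C)^{-1}\sigma_{\min}(A)^2\|e\|_2^2\le\mathcal{L}(w)\le\lambda_{\min}(C)^{-1}L\|e\|_2^2$.
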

The proof is in Section~\tref{proof:finite}.
On $\mathcal E_\delta$, \eqref{eq:finite-main} shows that increasing the CoT length drives the population MSPBE $\fL(w_k)$ down
geometrically until it reaches a statistical floor governed by $\varepsilon(\delta)$.
We next convert \eqref{eq:finite-main} into an explicit CoT length requirement for reaching this floor.
\begin{corollary}
\label{cor:floor-depth}
Under the conditions of Theorem~\ref{thm:finite}, fix any $\delta\in(0,1)$.
If
$k \ \ge\  \frac{2}{\alpha\mu}
\log\qty(
\frac{\fL(w_0)}
     {\varepsilon(\delta)^2}
),$
then on $\mathcal{E}_\delta$,
\begin{equation}
\label{eq:floor-level}
\fL(w_k)\le\ 2C_{\text{Thm}\tref{thm:finite},2}\,\varepsilon(\delta)^2 .
\end{equation}
\end{corollary}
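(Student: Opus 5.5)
We apply Theorem~\ref{thm:finite} directly and then choose $k$ so that its geometric transient is no larger than the statistical floor. On $\mathcal E_\delta$, with the sample-size condition of Theorem~\ref{thm:finite} in force, \eqref{eq:finite-main} gives
\begin{equation}
\fL(w_k)\le C_{\text{Thm}\tref{thm:finite},2}\Bigl(\bigl(1-\tfrac{\alpha\mu}{2}\bigr)^k\fL(w_0)+\varepsilon(\delta)^2\Bigr).
\end{equation}
It is therefore enough to show that the stated lower bound on $k$ implies $\bigl(1-\tfrac{\alpha\mu}{2}\bigr)^k\fL(w_0)\le\varepsilon(\delta)^2$. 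The two terms in the parenthesis then sum to at most $2\varepsilon(\delta)^2$, which gives \eqref{eq:floor-level}.

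The main step is an elementary exponential bound. First we check that $0<\alpha\mu/2<1$, so that the contraction factor is a number in $(0,1)$. Positivity holds because $\mu>0$ is implicit in the step-size condition $\alpha\in(0,\mu/L]$; otherwise that interval would be empty. For the upper bound, note $\mu=\lambda_{\min}(H)\le\|H\|_2\le\|A\|_2$, so $\alpha\mu\le\mu^2/L=\mu^2/\|A\|_2^2\le1$, and hence $\alpha\mu/2\le 1/2$. We then use $1-x\le e^{-x}$ to get
\begin{equation}
\bigl(1-\tfrac{\alpha\mu}{2}\bigr)^k\le \exp\bigl(-\tfrac{\alpha\mu k}{2}\bigr).
\end{equation}
Under the hypothesis $k\ge\frac{2}{\alpha\mu}\log\bigl(\fL(w_0)/\varepsilon(\delta)^2\bigr)$, the right-hand side is at most $\varepsilon(\delta)^2/\fL(w_0)$. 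Multiplying by $\fL(w_0)$ gives the required bound on the transient term.

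Two edge cases need a remark. If $\fL(w_0)\le\varepsilon(\delta)^2$, the logarithm is nonpositive and the condition on $k$ is vacuous. The claim still holds for every $k\ge0$, because $\bigl(1-\tfrac{\alpha\mu}{2}\bigr)^k\fL(w_0)\le\fL(w_0)\le\varepsilon(\delta)^2$. If $\fL(w_0)=0$, the logarithm is $-\infty$ and the condition is again vacuous; here $w_0=\0$ already satisfies $A\0=b$, and the bound is trivial. We interpret the condition accordingly in both cases.

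No step here is a real obstacle. The only points needing care are confirming that the contraction factor lies in $(0,1)$ and handling the degenerate logarithm. All of the probabilistic work is contained in Lemma~\ref{lem:concentration} and Theorem~\ref{thm:finite}.
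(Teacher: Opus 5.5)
Your proposal is correct and follows the paper's proof: invoke Theorem~\ref{thm:finite}, bound $(1-\alpha\mu/2)^k\le\exp(-\alpha\mu k/2)$, use the lower bound on $k$ to make the transient term at most $\varepsilon(\delta)^2$, and treat the case $\fL(w_0)\le\varepsilon(\delta)^2$ separately as trivial. The one difference is that you justify $\alpha\mu\le 1$ explicitly via $\mu\le\|H\|_2\le\|A\|_2$ and $\alpha\le\mu/\|A\|_2^2$, a step the paper only asserts.
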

The proof is in Section~\tref{proof:floor-depth}.
Moreover, Lemma~\ref{lem:concentration} implies
$\varepsilon(\delta)=\tilde O\qty(\sqrt{\frac{d+\log(1/\delta)}{n_{\mathrm{eff}}}})$ with $n_{\text{eff}} = \tilde{\Theta}(n)$,
hence the statistical floor in \eqref{eq:floor-level} scales as
$\tilde O\qty(\frac{d+\log(1/\delta)}{n})$, thus vanishes as $n\to\infty$.

\section{Emergence under Reinforcement Pretraining}
\label{sec:emergence}
Previously, we established that the $(P,Q)$ constructed in \eqref{eq:para_construct} is desirable for in-context policy evaluation, as it induces an iterative batch TD update of the form \eqref{eq:batch-td0}.
We now take an optimization perspective and show that these parameters attain global optimality for a pretraining loss over a finite dataset of trajectories.

Recall that the linear self-attention layer in \eqref{eq:lsa-layer} is parameterized by matrices $(P,Q)\in\R[(3d+1)\times (3d+1)]$.
Following a common practice in the in-context learning theory literature \citep{vonoswald2023transformers, ahn2024transformers,mahankali2023one,zhang2023trained,cheng2024fgd,gatmiry2024looped,huang2025transformers,wang2025emergence}, we restrict the learnable parameters $\theta$ to the block-sparse pattern defined as
\begin{align}
\textstyle
\label{eq:Theta-CoT}
\Theta^{\mathrm{CoT}} \doteq \Bigg\{\theta = (P, Q)\bigg| 
    P =
    \begin{bmatrix}
        0_{(2d+1) \times d} & 0_{(2d+1) \times (2d+1)} \\
        P_1 & 0_{d \times (2d+1)}
    \end{bmatrix}, Q = 
    \begin{bmatrix}
        0_{2d \times 2d} & 0_{2d \times 1} & \tilde Q \\
        0_{1 \times 2d} & 1 & 0_{1 \times d} \\
        0_{d \times 2d} & 0_{d \times 1} & 0_{d \times d}
    \end{bmatrix}
\Bigg\},
\end{align}
where $\tilde Q \doteq \begin{bmatrix} Q_1\\Q_2 \end{bmatrix}$ with $Q_1, Q_2 \in \R[d\times d]$.
Under CoT autoregressive generation, all steps share the same parameter $\theta$.
In our parameterization, $\theta$ has $3d^2$ degrees of freedom given by the free blocks $(P_1,Q_1,Q_2)$.
Throughout this section, we interpret $\nabla_\theta(\cdot)$ as the gradient with respect to the vectorized entries of $(P_1,Q_1,Q_2)$.
Let $\theta_*\in\Theta^{\mathrm{CoT}}$ denote the constructed choice in \eqref{eq:para_construct}, which yields the batch TD update in Theorem~\ref{thm:onestep}.

We consider a finite pretraining dataset $\mathcal D \doteq \qty{\tau_n^{(i)}}_{i=0}^{N_{\mathrm{train}}-1}$ with $N_{\mathrm{train}}$ trajectories sampled from the same MRP under a fixed policy $\pi$, where each trajectory is $\tau_n^{(i)}=(S_0^{(i)},R_1^{(i)},\dots,S_n^{(i)})$.
Fix any $\theta\in\Theta^{\mathrm{CoT}}$ and an index $i\in\qty[N_{\mathrm{train}}]$.
Let $\tau_n \doteq \tau_n^{(i)}$.
For notational simplicity, we suppress the superscript $(i)$ and write $\tau_n=(S_0,R_1,\dots,S_n)$.
Let $w_k(\theta;\tau_n)$ denote the weight iterate produced after $k$ CoT steps.
For any $j\in\{0,1,\dots,n-1\}$, the TD error at step $k$ is
\begin{align}
\delta_{k,j}(\theta;\tau_n)
\doteq R_{j+1}+\gamma x_{j+1}^\top w_k(\theta;\tau_n)-x_j^\top w_k(\theta;\tau_n).
\end{align}
With the value estimate $\hat v_k(s;\theta,\tau_n)\doteq x(s)^\top w_k(\theta;\tau_n)$, we define the empirical TD update direction at step $k$ as
\begin{equation}
\label{eq:def-Delta-k}
\textstyle
\Delta_k(\theta;\tau_n)
\doteq
\frac1n\sum_{j=0}^{n-1}
\delta_{k,j}(\theta;\tau_n)
\nabla_\theta \hat v_k(S_j;\theta,\tau_n)
\in\R[3d^2].
\end{equation}
A classical TD objective is the norm of the expected update (NEU)~\citep{sutton2009convergent, sutton2009fast}, originally defined as the squared norm of $\mathbb{E}_{d_\pi}[\delta(w) \nabla v_w(S)]$ under the stationary distribution $d_\pi$, with zeros characterizing TD fixed points.
Since we operate in the unknown-dynamics regime with a single Markovian trajectory $\tau_n$, the stationary expectation is not directly accessible, and we replace it by a trajectory average over $\tau_n$, giving the empirical update norm (EUN) loss
\begin{equation}
\label{eq:EUN}
J_k(\theta;\tau_n)\doteq \|\Delta_k(\theta;\tau_n)\|_2.
\end{equation}
A zero of $J_k(\cdot;\tau_n)$ corresponds to $\Delta_k(\theta;\tau_n) = 0$, i.e., a batch TD fixed point on $\tau_n$, which makes the EUN a natural pretraining objective for studying whether the desirable parameter $\theta_*$ in~\eqref{eq:para_construct} attains global optimality.

Aggregating over $\mathcal D$, we define the empirical pretraining objective at step $k$ as
\begin{equation}
\textstyle
\label{eq:dataset-triangle}
J_k(\theta;\mathcal D)
\doteq
\frac1{N_{\mathrm{train}}}\sum_{\tau_n\in\mathcal D} J_k(\theta;\tau_n),
\end{equation}
where $\sum_{\tau_n\in\mathcal D}$ abbreviates $\sum_{i=0}^{N_{\mathrm{train}}-1}$ 
when the trajectory is a dummy variable. We aim to show that $\theta_*$ globally 
minimizes $J_k(\cdot;\mathcal D)$ as $k\to\infty$.

\begin{theorem}
\label{thm:emergence}
Fix $\delta\in(0,1)$ and a stepsize $\alpha\in\bigl(0,\mu/(8L)\bigr]$.
There exists a constant $C_{\text{Thm}\tref{thm:emergence}}>0$ such that if for each trajectory $\tau_n^{(i)} \in \mathcal D$,
\begin{equation}
\label{eq:neff}
\textstyle
    n_{\mathrm{eff}} \geq 
C_{\text{Thm}\tref{thm:emergence}}
\left(d + \log \tfrac{4N_{\mathrm{train}}}{\delta}\right),
\end{equation}
then with probability at least $1-\delta$, $\lim_{k\to\infty} J_k(\theta_*;\mathcal D)=0$.
\end{theorem}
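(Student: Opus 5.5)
The plan is to compute $\Delta_k(\theta_*;\tau_n)$ explicitly along the CoT recursion and show it decays geometrically on each trajectory. A union bound over $\mathcal D$ then makes the required concentration hold for every trajectory simultaneously.

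\textbf{Concentration on every trajectory.} Apply Lemma~\ref{lem:concentration} to each $\tau_n^{(i)}$ with confidence parameter $\delta/N_{\mathrm{train}}$. Condition \eqref{eq:neff}, with $C_{\text{Thm}\tref{thm:emergence}}$ large, guarantees $n_{\mathrm{eff}}\ge d+\log(4N_{\mathrm{train}}/\delta)$, as that lemma requires. By a union bound, with probability at least $1-\delta$ we have, for every $i$,
\[
\varepsilon_A^{(i)}\le C_{\tref{lem:concentration},3}\sqrt{\frac{d+\log(4N_{\mathrm{train}}/\delta)}{n_{\mathrm{eff}}}}\le \frac{\mu}{8},
\]
where the last inequality again uses the size of $C_{\text{Thm}\tref{thm:emergence}}$; $\varepsilon_b^{(i)}$ is likewise bounded. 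Fix one trajectory and drop the superscript.

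\textbf{A contraction for $I-\alpha\widehat A$.} First,
\[
\|(I-\alpha A)v\|^2=\|v\|^2-2\alpha v^\top Hv+\alpha^2\|Av\|^2\le(1-2\alpha\mu+\alpha^2L)\|v\|^2,
\]
so $\|I-\alpha A\|_2\le\sqrt{1-2\alpha\mu+\alpha^2 L}$. Since $\alpha\le\mu/(8L)$, we have $\alpha^2L\le\alpha\mu/8$, hence $\|I-\alpha A\|_2\le\sqrt{1-\tfrac{15}{8}\alpha\mu}\le 1-\tfrac{15}{16}\alpha\mu$. Adding the perturbation $\alpha\varepsilon_A\le\alpha\mu/8$ gives $\|I-\alpha\widehat A\|_2\le\rho\doteq1-\alpha\mu/2<1$. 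Consequently $\widehat A$ is invertible, since otherwise $I-\alpha\widehat A$ would have eigenvalue $1$. At $\theta_*$, Theorem~\ref{thm:onestep} and \eqref{eq:empirical-recursion} show that $w_k$ converges geometrically to $\hat w\doteq\widehat A^{-1}\widehat b$. In particular $\sup_k\|w_k\|<\infty$, and the residual $g_k\doteq\widehat b-\widehat Aw_k=(I-\alpha\widehat A)^k\widehat b$ satisfies $\|g_k\|\le\rho^k\|\widehat b\|$.

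\textbf{Rewriting $\Delta_k$.} Since $\nabla_\theta\hat v_k(S_j)=(\partial w_k/\partial\theta)^\top x_j$, we get $\Delta_k(\theta_*)=G_k^\top g_k$, where $G_k\doteq\partial w_k/\partial\theta\in\R[d\times 3d^2]$ is evaluated at $\theta_*$. It therefore suffices to show that $G_k$ stays bounded, because then $J_k(\theta_*;\tau_n)\le\|G_k\|\rho^k\|\widehat b\|\to0$, and the average over the finitely many trajectories in \eqref{eq:dataset-triangle} also tends to $0$.

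\textbf{Bounding the Jacobian (main obstacle).} For general $\theta\in\Theta^{\mathrm{CoT}}$, reading the last column of \eqref{eq:lsa-layer} gives
\[
w_{k+1}=w_k+P_1\frac1n\sum_j x_j\bigl(R_{j+1}+x_j^\top Q_1w_k+\gamma x_{j+1}^\top Q_2 w_k\bigr).
\]
Differentiating at $\theta_*$ (where $P_1=\alpha I$, $Q_1=-I$, $Q_2=I$) yields
\[
G_{k+1}=(I-\alpha\widehat A)G_k+E_k,\qquad G_0=0,
\]
with three components of $E_k$. The $P_1$-block has entries given by $g_k$ and the $Q_1,Q_2$-blocks are $\alpha\frac1n\sum_j x_jx_j^\top(\cdot)$ and $\alpha\frac1n\sum_j\gamma x_jx_{j+1}^\top(\cdot)$, each applied to $w_k$. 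All of these are bounded by $c\,(1+\sup_k\|w_k\|)$, using bounded features. Unrolling the recursion with $\|I-\alpha\widehat A\|_2\le\rho$ gives $\|G_k\|\le\sup_k\|E_k\|/(1-\rho)<\infty$. This completes the argument; the only delicate points are this derivative bookkeeping and making the perturbation margin $\alpha\varepsilon_A\le\alpha\mu/8$ fit inside the contraction margin, which is exactly where the $\mu/(8L)$ stepsize restriction and the $\log N_{\mathrm{train}}$ term in \eqref{eq:neff} are used.
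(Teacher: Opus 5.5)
Your proposal is correct and follows essentially the paper's route: a union bound over $\mathcal D$ at level $\delta/N_{\mathrm{train}}$, the factorization $\Delta_k(\theta_*;\tau_n)=G_k^\top(\widehat b-\widehat A w_k)$, geometric decay of the empirical residual, and a uniform Jacobian bound from unrolling $G_{k+1}=(I-\alpha\widehat A)G_k+E_k$ with bounded iterates. The only differences are cosmetic: you get $\|I-\alpha\widehat A\|_2\le 1-\alpha\mu/2$ by directly perturbing $\|I-\alpha A\|_2$ and track the raw residual $(I-\alpha\widehat A)^k\widehat b$, whereas the paper goes through $\widehat\mu\ge\mu/2$, $\widehat L\le 4L$ and the $C^{-1/2}$-weighted empirical MSPBE.
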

The proof is in Section~\tref{proof:emergence}.
Since $J_k(\cdot;\mathcal D)\ge 0$ by definition, Theorem~\tref{thm:emergence} implies that $\theta_*$ attains the global minimum of the pretraining objective in the limit $k\to\infty$.
To interpret this limit, for each trajectory $\tau_n^{(i)}\in\mathcal D$ define $\widehat w_*^{(i)}$ by $\widehat A^{(i)}\widehat w_*^{(i)}=\widehat b^{(i)}$.
Under the same conditions, we obtain the following corollary.
\begin{corollary}
\label{cor:bridge}
Fix $\delta\in(0,1)$ and a stepsize $\alpha\in\bigl(0,\mu/(8L)\bigr]$.
If \eqref{eq:neff} holds, then with probability at least $1-\delta$, for all $i\in[N_{\mathrm{train}}]$,
\begin{equation}
\label{eq:bridge-floor}
\fL(\widehat w_*^{(i)})
\le
C_{\text{Thm}\tref{thm:finite},2}\,\varepsilon(\delta')^2,
\end{equation}
where $\delta' = \delta/N_{\mathrm{train}}$.
\end{corollary}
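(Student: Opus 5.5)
We apply Theorem~\ref{thm:finite} to each trajectory on a per-trajectory high-probability event, and then pass to the limit $k\to\infty$. Set $\delta'=\delta/N_{\mathrm{train}}$. For each $i\in[N_{\mathrm{train}}]$, Lemma~\ref{lem:concentration} applied to $\tau_n^{(i)}$ at confidence level $\delta'$ gives an event $\mathcal E^{(i)}_{\delta'}$ of probability at least $1-\delta'$. On this event $\|\widehat A^{(i)}-A\|_2\le\varepsilon_A(\delta')$ and $\|\widehat b^{(i)}-b\|_2\le\varepsilon_b(\delta')$. A union bound shows that $\bigcap_i\mathcal E^{(i)}_{\delta'}$ holds with probability at least $1-\delta$. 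Condition \eqref{eq:neff} gives $n_{\mathrm{eff}}\ge C_{\text{Thm}\tref{thm:emergence}}(d+\log(4/\delta'))$. Choosing $C_{\text{Thm}\tref{thm:emergence}}\ge C_{\text{Thm}\tref{thm:finite},1}$ (and $\ge 1$, so that Lemma~\ref{lem:concentration} also applies) therefore puts us in the setting of Theorem~\ref{thm:finite} for each trajectory with $\delta'$ in place of $\delta$. Also, $\alpha\le\mu/(8L)\le\mu/L$, so the stepsize requirement is met.

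Fix $i$ and work on the intersection event. Theorem~\ref{thm:onestep} says that the CoT iterates of $\theta_*$ on $\tau_n^{(i)}$ are exactly the recursion \eqref{eq:empirical-recursion} with $(\widehat A^{(i)},\widehat b^{(i)})$. Theorem~\ref{thm:finite} then gives
\[
\fL(w_k^{(i)})\le C_{\text{Thm}\tref{thm:finite},2}\Bigl(\bigl(1-\tfrac{\alpha\mu}{2}\bigr)^k\fL(w_0)+\varepsilon(\delta')^2\Bigr)
\]
for every $k$. Next we show that $w_k^{(i)}\to\widehat w_*^{(i)}$. First, $\widehat A^{(i)}$ is invertible on the event: we have $\lambda_{\min}\bigl(\tfrac12(\widehat A^{(i)}+\widehat A^{(i)\top})\bigr)\ge\mu-\varepsilon_A(\delta')\ge\mu/2>0$ once $C_{\text{Thm}\tref{thm:emergence}}$ is large enough that $\varepsilon_A(\delta')\le\mu/2$. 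Second, $I-\alpha\widehat A^{(i)}$ is a strict contraction. Expanding $\|(I-\alpha\widehat A)v\|^2$ gives at most $(1-\alpha\mu+\alpha^2\|\widehat A\|_2^2)\|v\|^2$. Using $\|\widehat A\|_2^2\le 2L+2\varepsilon_A^2$ together with $\alpha\le\mu/(8L)$ makes this at most $1-\alpha\mu/2$. Hence $w_k^{(i)}\to\widehat w_*^{(i)}=(\widehat A^{(i)})^{-1}\widehat b^{(i)}$ geometrically. This also justifies the definition of $\widehat w_*^{(i)}$.

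Finally, $\fL(w)=\|C^{-1/2}(Aw-b)\|^2$ is continuous in $w$, so $\fL(\widehat w_*^{(i)})=\lim_{k\to\infty}\fL(w_k^{(i)})$. Letting $k\to\infty$ in the displayed bound kills the geometric term and yields \eqref{eq:bridge-floor} simultaneously for all $i$ on the intersection event.

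The main obstacle is bookkeeping rather than new analysis. We must check that the single constant $C_{\text{Thm}\tref{thm:emergence}}$ simultaneously dominates the requirement of Theorem~\ref{thm:finite} at level $\delta'$ and makes $\varepsilon_A(\delta')$ small enough for the invertibility and contraction of $I-\alpha\widehat A^{(i)}$. We must also check that the $\varepsilon(\delta')$ in the conclusion is the deterministic bound from Lemma~\ref{lem:concentration} at level $\delta'$. If the proof of Theorem~\ref{thm:finite} already yields convergence of $w_k$ (or the proof of Theorem~\ref{thm:emergence} establishes it), we will simply cite that in place of the contraction argument above.
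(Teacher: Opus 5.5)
Your proposal is correct and follows essentially the same route as the paper. Both arguments take a union bound over trajectories at level $\delta'=\delta/N_{\mathrm{train}}$, apply Theorem~\ref{thm:finite} to each trajectory, show $w_k^{(i)}\to\widehat w_*^{(i)}$ using $\widehat\mu^{(i)}\ge\mu/2>0$, and pass to the limit by continuity of $\fL$. The one difference is cosmetic: you re-derive the contraction of $I-\alpha\widehat A^{(i)}$ directly (this implicitly needs $\varepsilon_A(\delta')^2\le L$, which can be absorbed into the constant), whereas the paper cites Lemmas~\ref{lem:empirical-mu} and~\ref{lem:residual-Lhat} for it.
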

The proof is in Section~\tref{proof:bridge}.
Consequently, under the $\theta_*$ parametrization, the pretraining objective recovers the per-trajectory batch-TD fixed points, and the remaining population MSPBE error relative to $w_*$ is purely statistical, governed by $\varepsilon(\delta')$ and vanishing as $n\to\infty$.

\section{Related Works}
\label{sec:related}
In-context reinforcement learning (ICRL) studies agents that adapt at inference time without parameter updates by conditioning on additional interaction history.
ICRL is often viewed as a form of black-box meta-RL \citep{duan2016rl,wang2016learning,beck2025tutorial}, and it can also be regarded as a special case of in-context learning (ICL) in the broad sense that learning happens in the forward pass of a fixed network \citep{brown2020incontext,zhang2023trained,dong2024survey}.
A useful categorization of ICRL pretraining is supervised pretraining versus reinforcement pretraining \citep{moeini2025survey}.

Supervised pretraining is typically imitation or algorithm distillation \citep{laskin2023context}, and has been instantiated through many data constructions that expose within-context learning progress \citep{laskin2023context,liu2023emergent,shi2023cross,kirsch2023towards,zisman2024emergence,huang2024rlcot,huang2024decision,dai2024incontext,zisman2025ngram,polubarov2025vintix}.
In this setting, in-context algorithmic behavior is expected since training explicitly imitates an RL learning procedure.
Theory shows that such objectives can induce in-context implementations of decision-making procedures under suitable assumptions \citep{lee2024supervised,lin2023transformers}.
We instead study reinforcement pretraining and focus on the inference-time dynamics induced by autoregressive chain-of-thought (CoT) token generation for policy evaluation.

In terms of reinforcement pretraining, a long line of work shows that training agents with RL objectives can induce inference-time learning behaviors from interaction history \citep{duan2016rl,wang2016learning,mishra2018simple,ritter2018been,stadie2019some,zintgraf2020varibad,melo2022transformer}.
Related empirical work investigates scalable architectures and training recipes for ICRL and its variants \citep{team2023human,lu2023structured,grigsby2024amago,grigsby2024amago2,elawady2024relic,xu2024meta,cook2024artificial,liu2025locoformer,moeini2025safe}.
On the theory side, \citet{park2024llm} studies ICRL with large language models (LLMs) and proposes a regret-loss for reinforcement pretraining.
The closest works related to us are \citet{wang2025ictd, wang2025emergence}, who prove that the layerwise forward pass of linear Transformers implements step-by-step TD updates and establish inference-time convergence and emergence of those parameters under a suitable prompt.
Our work differs from \citet{wang2025ictd,wang2025emergence} in that
(i) we analyze autoregressive Chain-of-Thought (CoT) generation in a one-layer Transformer, rather than layerwise inference dynamics across depth;
(ii) beyond the idealized known-dynamics regime studied in \citet{wang2025emergence}, we establish convergence under unknown dynamics with a single Markovian trajectory reused across generations;
(iii) we further show that our constructed parameterization is learnable under certain reinforcement pretraining objective even under unknown dynamics, which is not covered by \citet{wang2025emergence}.


\looseness=-1
A concurrent submission~\citep{xie2026softmax} also extends \citet{wang2025ictd, wang2025emergence}, but takes a fundamentally different route: it analyzes softmax attention with depth-based computation, where each layer implements a novel weighted softmax TD algorithm in a tabular value representation, while we analyze linear attention with autoregressive CoT and linear function approximation. As a result, the two works draw on largely disjoint technical tools. For convergence, our analysis is finite-sample on a single Markovian trajectory, whereas \citet{xie2026softmax} primarily analyze convergence at the population level under a structural assumption on the kernel. For emergence, our analysis controls how the algorithmic iterates depend on the parameters, whereas \citet{xie2026softmax} control how the value prediction depends on the parameters. Both works follow \citet{wang2025ictd} for the Boyan's chain \citep{boyan1999least} experimental setup.

\looseness=-1
Beyond ICRL, our finite-sample analysis in Section~\ref{sec:finite} is technically related to the literature on TD with linear function approximation, most of which analyze online TD or its averaged variants under Markovian sampling~\citep{bhandari2018finite, srikant2019finite, patil2023finite, samsonov2024improved, lee2025finite}; closer to our setting, \citet{lazaric2010finite} analyze pathwise LSTD and \citet{prashanth2021concentration} analyze a stochastic approximation variant of LSTD with uniform resampling from a fixed batch. Our work differs from this line in that batch TD reuses the trajectory across CoT steps with full-batch averaging, so the iterates are determined by the empirical moments rather than online noise.

\looseness=-1
CoT prompting was introduced as a practical technique to elicit multi-step reasoning in language models \citep{wei2022chain}.
Beyond standard CoT prompting, inference-time computation can be increased by exploring multiple reasoning trajectories, such as self-consistency and tree-of-thought search \citep{wang2023self,yao2023tot}.
Recent theory formalizes multi-step in-context algorithms by viewing each generated step or loop as one iteration \citep{vonoswald2023transformers,ahn2024transformers,gatmiry2024looped,huang2025transformers}, mostly in the ICL setting.
In ICRL, hierarchical decision-making with a CoT has been used to compress long interaction histories \citep{huang2024rlcot}, while theoretical characterizations of autoregressive CoT generation remain limited.
To our knowledge, we provide the first theoretical characterization of how autoregressive CoT generation affects in-context policy evaluation in ICRL.

\section{Experiments}
\label{sec:experiment}
\textbf{Setup.}
Following \citet{wang2025ictd}, we conduct experiments on Boyan's chain \citep{boyan1999least},
a canonical environment for evaluating RL algorithms.
Prompts are constructed following the definition of $Z_0$ in \eqref{eq:z0},
and the model is trained on $5000$ sampled MRPs, where each parameter update unrolls the model autoregressively with shared parameters $(P, Q)$ across $16$ CoT iterations.
Notably, all elements of $P$ and $Q$ are equally trainable.
Our full training procedure is stated as Algorithm~\ref{alg:cot-td}.
Unless otherwise stated, we follow \citet{wang2025ictd} for task construction and hyperparameters,
with additional training and implementation details deferred to Appendix~\ref{sec:aux_experiment}.
\begin{figure}[t]
  \centering

  \begin{minipage}[c]{0.495\textwidth}
    \centering
    \includegraphics[width=0.92\linewidth]{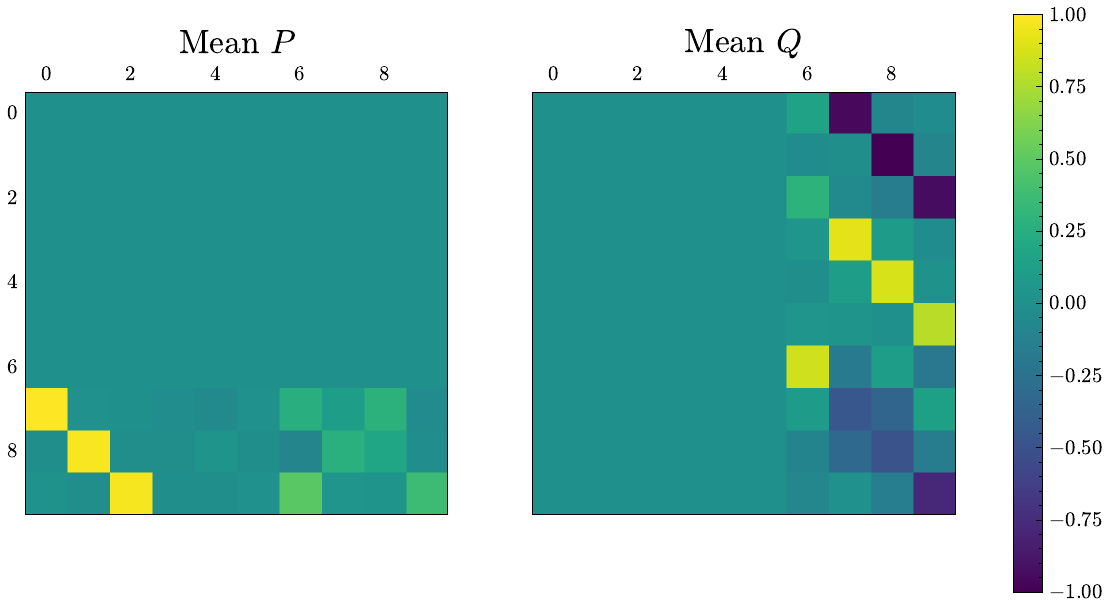}\\[2pt]
    \small (a) Learned parameters $P$ and $Q$.
  \end{minipage}
  \hfill
  \begin{minipage}[c]{0.495\textwidth}
    \centering
    \begin{minipage}[t]{0.49\linewidth}
      \centering
      \includegraphics[width=\linewidth]{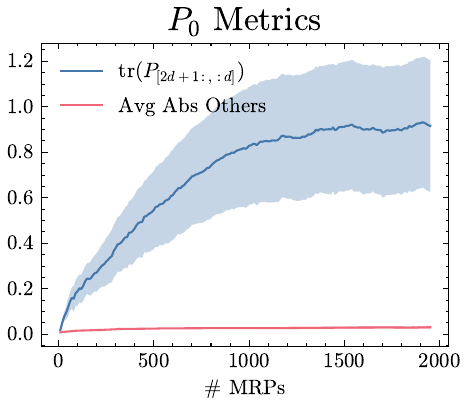}
    \end{minipage}
    \hfill
    \begin{minipage}[t]{0.49\linewidth}
      \centering
      \includegraphics[width=\linewidth]{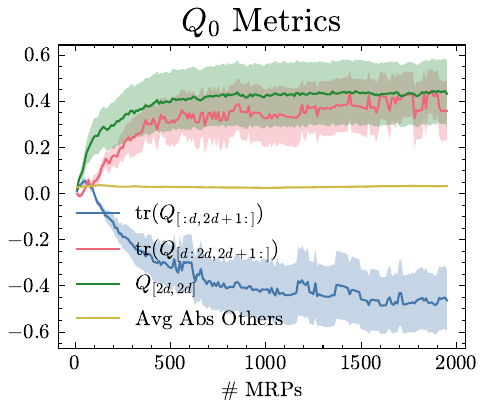}
    \end{minipage}\\[2pt]
    \small (b) Element-wise learning progress of $P$ and $Q$.
  \end{minipage}

  \caption{Learned parameters and element-wise learning progress. In (a), $P$ and $Q$ are shown as elementwise means over $10$ seeds, and each matrix is normalized by its maximum absolute entry. In (b), shaded regions denote standard errors across $10$ seeds.}
  \label{fig:pq-combined}
\end{figure}



\textbf{TD Emergence.}
Figure~\ref{fig:pq-combined}(a) reports the learned parameters $(P,Q)$
averaged over seeds, which exhibit a clear block pattern close to \eqref{eq:para_construct}.
Figure~\ref{fig:pq-combined}(b) clearly shows the element-wise learning progress of $P$ and $Q$, where only the blocks prescribed by \eqref{eq:para_construct} strengthen during training while all other entries remain small.

\section{Conclusion}
\label{sec:conclusion}
This paper provides a white-box account of how CoT generation implements inference-time policy evaluation in linear Transformers.
Under a simple prompt construction, each CoT step realizes one batch TD update on the context.
We establish geometric convergence guarantees in two regimes, namely MSPBE contraction under known dynamics and geometric improvement under unknown dynamics that saturates at a statistical floor governed by an effective sample size.
We further connect the CoT computation to reinforcement pretraining by showing that the constructed parameterization is one of the global minimizers of the empirical pretraining objective.
Controlled experiments on Boyan's chain demonstrate the emergence of the predicted block structure across multiple policy-evaluation tasks.
Limitations include linear self-attention and the small scale of our empirical study, and addressing them may further advance the theoretical understanding of in-context reinforcement learning.

\section*{Acknowledgments}
This work is supported in part by the US National Science Foundation under the awards III-2128019, SLES-2331904, and CAREER-2442098, the Commonwealth Cyber Initiative's Central Virginia Node under the award VV-1Q26-001, and a Cisco Faculty Research Award.

{\small
\bibliography{bibliography}
}

\newpage
\appendix
\onecolumn
\section{Auxiliary Lemmas and Definitions}
\label{sec:aux_lemma}
\begin{lemma}[Lemma 6 of \citet{tsitsiklis1997analysis}]\label{lem:A4:proj_bellman}
Let $\Pi \doteq X(X^\top D_\pi X)^{-1}X^\top D_\pi$ be the $D_\pi$-orthogonal projection onto $\mathrm{span}(X)$,
and let $\mathcal{T}^\pi$ be the Bellman expectation operator.
Then $\Pi\mathcal{T}^\pi$ is a $\gamma$-contraction under $\|\cdot\|_{D_\pi}$:
\[
\|\Pi\mathcal{T}^\pi v-\Pi\mathcal{T}^\pi v'\|_{D_\pi}
\le
\gamma\|v-v'\|_{D_\pi},
\qquad \forall v,v'\in\mathbb{R}^{|\mathcal{S}|}.
\]
Consequently, there exists a unique fixed point $v_*\in\mathrm{span}(X)$ such that
\[
v_*=\Pi\mathcal{T}^\pi v_*.
\]
Moreover, if $X$ has full column rank, there exists a unique $w_*\in\mathbb{R}^d$ such that $v_*=Xw_*$.
\end{lemma}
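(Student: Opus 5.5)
The plan is to prove the contraction directly, factoring $\Pi\mathcal{T}^\pi$ into its two pieces and bounding each in $\|\cdot\|_{D_\pi}$. The fixed-point claims then follow from Banach's theorem, and uniqueness of $w_*$ from the rank of $X$.

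First, I will check that $\|\cdot\|_{D_\pi}$ is a genuine norm. Irreducibility of $P_\pi$ gives $d_\pi(s)>0$ for every $s$, so $D_\pi$ is positive definite and $\langle u,v\rangle_{D_\pi}\doteq u^\top D_\pi v$ is an inner product on $\mathbb{R}^{|\mathcal S|}$.

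Next, I will show that $\Pi$ is the orthogonal projection onto $\mathrm{span}(X)$ for this inner product. It is idempotent: $\Pi^2=\Pi$ follows from $(X^\top D_\pi X)^{-1}X^\top D_\pi X=I_d$. It is self-adjoint in $\langle\cdot,\cdot\rangle_{D_\pi}$, since $D_\pi\Pi=D_\pi X(X^\top D_\pi X)^{-1}X^\top D_\pi$ is symmetric. Hence $v-\Pi v\perp_{D_\pi}\Pi v$, and Pythagoras gives $\|\Pi v\|_{D_\pi}\le\|v\|_{D_\pi}$.

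Then I will handle the Bellman operator. In vector form $\mathcal T^\pi v=r+\gamma P_\pi v$, so $\mathcal T^\pi v-\mathcal T^\pi v'=\gamma P_\pi(v-v')$. The key inequality is $\|P_\pi u\|_{D_\pi}\le\|u\|_{D_\pi}$, which I expect to be the only substantive step. Since each row of $P_\pi$ is a probability vector, Jensen's inequality gives $(P_\pi u)(s)^2\le\sum_{s'}P_\pi[s,s']u(s')^2$. Weighting by $d_\pi(s)$, summing over $s$, and using stationarity $d_\pi^\top P_\pi=d_\pi^\top$ yields
\[
\|P_\pi u\|_{D_\pi}^2\le\sum_{s'}d_\pi(s')u(s')^2=\|u\|_{D_\pi}^2 .
\]
Combining with the nonexpansiveness of $\Pi$ gives
\[
\|\Pi\mathcal T^\pi v-\Pi\mathcal T^\pi v'\|_{D_\pi}\le\gamma\|P_\pi(v-v')\|_{D_\pi}\le\gamma\|v-v'\|_{D_\pi}.
\]

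Finally, $\mathbb{R}^{|\mathcal S|}$ with $\|\cdot\|_{D_\pi}$ is complete, so Banach's fixed point theorem gives a unique $v_*$ with $v_*=\Pi\mathcal T^\pi v_*$. This $v_*$ lies in the range of $\Pi$, which is $\mathrm{span}(X)$. So $v_*=Xw$ for some $w$, and full column rank of $X$ makes the map $w\mapsto Xw$ injective, so this $w_*$ is unique.
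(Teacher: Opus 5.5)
Your proof is correct and is the standard Tsitsiklis--Van Roy argument; the paper does not prove this lemma itself and simply cites that source. The argument runs as follows: $\Pi$ is nonexpansive because it is a $D_\pi$-orthogonal projection, $\|P_\pi u\|_{D_\pi}\le\|u\|_{D_\pi}$ follows from Jensen's inequality plus stationarity, and Banach's theorem together with injectivity of $w\mapsto Xw$ gives the fixed-point and uniqueness claims. One point to make explicit is that the formula for $\Pi$ already requires $X^\top D_\pi X$ to be invertible, i.e.\ $X$ must have full column rank. Your idempotence step uses this, and the paper assumes it throughout.
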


\begin{lemma}[Theorem~3.7 of \citet{bradley2005mixing}]
\label{lem:beta}
Consider the stationary Markov chain with transition matrix $P_\pi$ and
stationary distribution $d_\pi$.
Define the $\beta$-mixing coefficients for $m\ge 1$ by
\[
\beta(m)
\doteq
\sup_{t\ge 0}
\mathbb E\Bigg[
\sup_{A\in\sigma(S_{t+m},S_{t+m+1},\ldots)}
\bigl|\mathbb P(A\mid\sigma(S_0,\ldots,S_t))-\mathbb P(A)\bigr|
\Bigg].
\]
Then the following statements are equivalent.
\begin{enumerate}
\item \textbf{geometric ergodicity.}
There exist constants $R<\infty$ and $\rho\in(0,1)$ such that for all $s\in\fS$ and all $n\ge 0$,
\begin{equation}
\label{eq:ergodic}
    \bigl\|P_\pi^n(s,\cdot)-d_\pi(\cdot)\bigr\|_{\mathrm{TV}}
\le
R\rho^n.
\end{equation}
\item 
\textbf{$\beta$-mixing.}
There exist constants $c_\beta\ge 1$ and $\rho\in(0,1)$ such that for all $m\ge 1$,
\[
\beta(m)\le c_\beta\rho^m .
\]
\end{enumerate}
\end{lemma}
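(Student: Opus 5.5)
The plan is to reduce both conditions to statements about the $m$-step kernel $P_\pi^m$. The key step is an exact formula for $\beta(m)$ in the stationary Markov case:
\[
\beta(m)=\sum_{s\in\fS} d_\pi(s)\,\bigl\|P_\pi^m(s,\cdot)-d_\pi(\cdot)\bigr\|_{\mathrm{TV}}.
\]
Once this is established, both implications follow by elementary comparisons, using that $\fS$ is finite and $d_\pi(s)>0$ for every $s$ by irreducibility.

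\textbf{Step 1 (identity for $\beta(m)$).} Fix $t\ge 0$ and write $\mathcal G_{t+m}\doteq\sigma(S_{t+m},S_{t+m+1},\ldots)$.
\begin{itemize}
\item \emph{Reduce the conditioning.} By the Markov property, the conditional law of the future path $(S_{t+m},S_{t+m+1},\ldots)$ given $\sigma(S_0,\ldots,S_t)$ depends only on $S_t$. It is the path measure started from the distribution $P_\pi^m(S_t,\cdot)$ and then run with kernel $P_\pi$.
\item \emph{Identify the unconditional law.} By stationarity, the unconditional law of the future path is the path measure started from $d_\pi$ with the same kernel.
\item \emph{Collapse to the marginal.} The two path measures share the transition mechanism after time $t+m$. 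Hence their total variation distance equals the total variation distance of their time-$(t+m)$ marginals. The inequality ``$\ge$'' holds because $\sigma(S_{t+m})\subseteq\mathcal G_{t+m}$. The inequality ``$\le$'' holds by the data-processing inequality: run the same kernel from either initial law, or equivalently couple the marginals optimally and continue both paths identically.
\item \emph{Assemble.} This gives $\sup_{A\in\mathcal G_{t+m}}|\mathbb P(A\mid\sigma(S_0,\ldots,S_t))-\mathbb P(A)|=\|P_\pi^m(S_t,\cdot)-d_\pi\|_{\mathrm{TV}}$. Taking expectation with $S_t\sim d_\pi$ yields the displayed formula. The result is independent of $t$, so the outer $\sup_{t\ge0}$ is harmless.
\end{itemize}
Measurability of the inner supremum causes no difficulty: it equals a function of $S_t$, and $S_t$ takes finitely many values.

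\textbf{Step 2 (geometric ergodicity $\Rightarrow$ $\beta$-mixing).} Suppose \eqref{eq:ergodic} holds. Each term in the identity is at most $R\rho^m$, and the weights $d_\pi(s)$ sum to one, so $\beta(m)\le R\rho^m$. Setting $c_\beta\doteq\max(1,R)$ gives the claim with the same $\rho$.

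\textbf{Step 3 ($\beta$-mixing $\Rightarrow$ geometric ergodicity).} Suppose $\beta(m)\le c_\beta\rho^m$ for all $m\ge1$. The identity shows that each single term satisfies $d_\pi(s)\|P_\pi^m(s,\cdot)-d_\pi\|_{\mathrm{TV}}\le\beta(m)$. Therefore, for every $s$ and every $m\ge1$,
\[
\|P_\pi^m(s,\cdot)-d_\pi\|_{\mathrm{TV}}\le \frac{c_\beta}{d_{\min}}\,\rho^m,\qquad d_{\min}\doteq\min_{s}d_\pi(s)>0.
\]
Set $R\doteq c_\beta/d_{\min}$. Since $c_\beta\ge1$ and $d_{\min}\le1$, we have $R\ge1$. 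The case $n=0$ then holds trivially, because total variation distance is at most $1\le R$.

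\textbf{Main obstacle.} The only nontrivial step is the ``$\le$'' direction of Step 1: reducing the supremum over the infinite-future $\sigma$-field to the one-dimensional marginal at time $t+m$. I would handle it by constructing an explicit coupling. Draw $(Y,Y')$ from a maximal coupling of $P_\pi^m(S_t,\cdot)$ and $d_\pi$, then run a single shared continuation of the chain from time $t+m$ whenever $Y=Y'$. The two future paths then differ only on the event $\{Y\neq Y'\}$, whose probability equals the marginal total variation distance.
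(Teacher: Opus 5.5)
Your proof is correct. It necessarily takes a different route from the paper's, because the paper proves nothing here: it imports the statement wholesale from Theorem~3.7 of Bradley's survey, a general-state-space result for stationary (Harris recurrent, aperiodic) chains.

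You instead use finiteness to obtain the exact identity $\beta(m)=\sum_{s\in\fS}d_\pi(s)\|P_\pi^m(s,\cdot)-d_\pi\|_{\mathrm{TV}}$. Its only real content is that two path measures driven by the same kernel are exactly as far apart in total variation as their initial laws. After that, both implications are one-line comparisons.

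What this buys is a self-contained argument with explicit constant transfer:
\begin{itemize}
\item The forward direction keeps the same $\rho$ with $c_\beta=\max(1,R)$. This is the only direction the paper actually uses, in the Berbee step of Lemma~\ref{lem:concentration}.
\item The reverse direction also keeps $\rho$, at the price $R=c_\beta/d_{\min}$. This justifies the paper's use of a single symbol $\rho$ in both items.
\item It directly produces the uniform-in-$s$ form of geometric ergodicity written in the statement. In general state spaces, geometric ergodicity is usually formulated with a constant depending on the starting state, so matching the citation to this exact statement implicitly requires a finite-state reduction like yours.
\end{itemize}

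What the citation buys in return is generality. Your Step~3 depends essentially on $|\fS|<\infty$ and $d_{\min}>0$ (true here by the paper's standing irreducibility assumption), and would not survive outside that setting. One minor note: your identity presumes $\|\mu-\nu\|_{\mathrm{TV}}=\sup_A|\mu(A)-\nu(A)|$; under the $\ell_1$ convention a harmless factor $2$ appears.
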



\begin{lemma}[Proposition~1.7 of \citet{levin2017markov}]
\label{lem:aperiodic-positive}
If $P_\pi$ is aperiodic and irreducible, then there exists an 
integer $r_0$ such that $P_\pi^r(s,s') > 0$ for all 
$s, s' \in \fS$ and $r \ge r_0$.
\end{lemma}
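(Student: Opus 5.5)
The plan is the standard two-stage argument: first show that each state returns to itself at every sufficiently large time, then use irreducibility to move from $s$ to $s'$. Finiteness of $\fS$ lets us take a single uniform threshold $r_0$.

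\textbf{Step 1 (return times form a numerical semigroup).} For each $s\in\fS$ I define $\mathcal T(s)\doteq\{t\ge 1: P_\pi^t(s,s)>0\}$. The set is nonempty by irreducibility. It is closed under addition because $P_\pi^{t+t'}(s,s)\ge P_\pi^{t}(s,s)P_\pi^{t'}(s,s)$, which follows from the Chapman--Kolmogorov equation by keeping only the term through $s$. By aperiodicity, $\gcd\mathcal T(s)=1$; I take aperiodicity to mean this for every $s$, since for an irreducible chain all periods coincide anyway.

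\textbf{Step 2 (semigroup lemma, main obstacle).} I will show that an additively closed set $\mathcal T\subseteq\mathbb N_{\ge1}$ with $\gcd 1$ contains every integer $t\ge t_0$ for some $t_0$. The gcd of $\mathcal T$ is attained on a finite subset $\{a_1,\dots,a_\ell\}$, because the gcd of the first $j$ elements is nonincreasing in $j$ and must stabilize. Bézout then gives integers $c_i$ with $\sum_i c_ia_i=1$. Splitting into positive and negative parts yields $u,v$, each a nonnegative combination of the $a_i$ and hence in $\mathcal T\cup\{0\}$, with $u-v=1$. If $v=0$ then $1\in\mathcal T$ and we are done.

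Otherwise, for $t\ge v^2$ I write $t=qv+\rho$ with $0\le\rho<v$ and $q\ge v>\rho$. Then
\[
t=qv+\rho(u-v)=(q-\rho)v+\rho u,
\]
which is a nonnegative combination of $u$ and $v$ with a positive total, so $t\in\mathcal T$. This step is the only genuinely number-theoretic part. I expect it to be where care is needed, in particular the reduction to a finite generating set and the edge case $v=0$. Applying it gives thresholds $t(s)$ with $P_\pi^t(s,s)>0$ for all $t\ge t(s)$.

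\textbf{Step 3 (irreducibility and uniform threshold).} For each pair $(s,s')$, irreducibility provides $\ell(s,s')\ge0$ with $P_\pi^{\ell(s,s')}(s,s')>0$. For $r\ge t(s)+\ell(s,s')$,
\[
P_\pi^r(s,s')\ge P_\pi^{r-\ell(s,s')}(s,s)\,P_\pi^{\ell(s,s')}(s,s')>0 .
\]
Since $\fS$ is finite, setting $r_0\doteq\max_{s,s'}\bigl(t(s)+\ell(s,s')\bigr)$ gives $P_\pi^r(s,s')>0$ for all $s,s'$ and all $r\ge r_0$.
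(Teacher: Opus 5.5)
Your proof is correct and is essentially the standard argument for the cited Proposition~1.7 of Levin--Peres; the paper itself gives no proof and only cites that result. The steps match: the return-time set is closed under addition and has $\gcd 1$, so $P_\pi^t(s,s)>0$ for all large $t$, and then irreducibility plus a maximum over the finite state space gives a uniform $r_0$. Your inline B\'ezout argument, writing $t=(q-\rho)v+\rho u$ for $t\ge v^2$, is a valid self-contained proof of the number-theoretic lemma that the textbook defers to its notes.
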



\begin{lemma}[Lemma 7 in \citet{steffen2024mixing}]\label{lem:berbee-blocks}
\label{lem:berbee}
Let $X_i$, $i\in\mathbb{N}$ be a stationary sequence of random variables with $\beta$-mixing coefficients $\beta(j)$, $j\in\mathbb{N}$.
For fixed $k,i\in\mathbb{N}$, define the block vectors
\[
Y_i \doteq \bigl(X_{i(k+i)}, X_{i(k+i)+1}, \ldots, X_{i(k+i)+i-1}\bigr)\in\mathbb{R}^{i},\qquad i\in\mathbb{N}.
\]
There exists a sequence of independent random variables $Y_i^{\ast}$, $i\in\mathbb{N}$, taking values in $\mathbb{R}^{i}$ and having the same distribution as $Y_i$ such that, for every $i\in\mathbb{N}$,
\[
\Pr\bigl(Y_i^{\ast}\neq Y_i\bigr)\le\beta(k).
\]
\end{lemma}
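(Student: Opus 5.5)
We read the index collision in the statement as blocks of a fixed length $\ell$ separated by gaps of $k$, i.e.\ $Y_i=(X_{i(k+\ell)},\ldots,X_{i(k+\ell)+\ell-1})$. The plan is to build $Y_0^\ast,Y_1^\ast,\ldots$ inductively from the classical Berbee coupling lemma, which we take as the basic tool:

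\emph{Berbee's lemma.} Let $U$ and $V$ be random elements of Polish spaces, on a probability space rich enough to carry a uniform variable independent of $(U,V)$. Then there is $V^\ast$ with $V^\ast\overset{d}{=}V$, $V^\ast$ independent of $U$, and $\Pr(V^\ast\neq V)=\beta(\sigma(U),\sigma(V))$. Here $\beta(\mathcal G,\mathcal H)=\mathbb E\bigl[\sup_{H\in\mathcal H}|\Pr(H\mid\mathcal G)-\Pr(H)|\bigr]$.

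First, we enlarge the probability space to carry i.i.d.\ uniforms $\xi_1,\xi_2,\ldots$ independent of $\{X_t\}$. We then set $Y_0^\ast\doteq Y_0$.

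Suppose $Y_0^\ast,\ldots,Y_{i-1}^\ast$ have been constructed as measurable functions of $(Y_0,\ldots,Y_{i-1},\xi_1,\ldots,\xi_{i-1})$. At step $i$, apply Berbee's lemma with
\[
U=(X_0,\ldots,X_{(i-1)(k+\ell)+\ell-1},\xi_1,\ldots,\xi_{i-1}),\qquad V=Y_i,
\]
using $\xi_i$ as the external randomization. This produces $Y_i^\ast\overset{d}{=}Y_i$, independent of $U$, with $\Pr(Y_i^\ast\neq Y_i)=\beta(\sigma(U),\sigma(Y_i))$. Since $Y_0^\ast,\ldots,Y_{i-1}^\ast$ are functions of $U$, $Y_i^\ast$ is independent of them. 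Induction then gives mutual independence of the whole sequence, and each $Y_i^\ast$ has the law of $Y_i$.

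It remains to bound the coupling probability by $\beta(k)$. Because the $\xi_j$ are independent of the chain, adjoining them to $U$ does not change the relevant conditional probabilities: for $H\in\sigma(Y_i)$ we have $\Pr(H\mid\sigma(U))=\Pr(H\mid X_0,\ldots,X_{t})$, where $t=(i-1)(k+\ell)+\ell-1$. Hence $\beta(\sigma(U),\sigma(Y_i))$ equals the dependence coefficient between $\sigma(X_0,\ldots,X_t)$ and $\sigma(Y_i)$.

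The first index of $Y_i$ is $i(k+\ell)=t+k+1$. So $\sigma(Y_i)\subseteq\sigma(X_{t+k+1},\ldots)$, and this coefficient is at most the supremum appearing in the definition of $\beta(k+1)$. Since $\beta(\cdot)$ is nonincreasing in the gap, it is at most $\beta(k)$.

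The main obstacle is the measure-theoretic bookkeeping rather than any estimate. Two points need care. First, one must verify that the external randomization used at earlier steps does not inflate the $\beta$-coefficient; this rests on its independence from the whole chain. Second, one must match the conditional-expectation form of $\beta(m)$ used in Lemma~\ref{lem:beta} with the $\sigma$-field form in Berbee's lemma. This is the standard identity $\beta(\mathcal G,\mathcal H)=\mathbb E\sup_{H\in\mathcal H}|\Pr(H\mid\mathcal G)-\Pr(H)|$. Stationarity is not needed for this part, since Berbee's lemma already delivers the correct marginal law of each $Y_i^\ast$.
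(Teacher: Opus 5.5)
The paper gives no proof of this lemma; it imports it from Steffen et al. Your argument is the standard proof behind that result and is correct: you apply Berbee's coupling lemma inductively with fresh independent uniforms and condition on the whole past up to the end of the previous block. You also resolve the garbled block-length index sensibly and correctly note that the gap actually yields $\beta(k+1)\le\beta(k)$. One bookkeeping slip: because your $U$ contains the gap variables, $Y_i^\ast$ is a function of $(X_0,\ldots,X_{i(k+\ell)+\ell-1},\xi_1,\ldots,\xi_i)$ rather than of $(Y_0,\ldots,Y_i,\xi_1,\ldots,\xi_i)$. So the induction invariant should be stated in that form (or take $U=(Y_0,\ldots,Y_{i-1},\xi_1,\ldots,\xi_{i-1})$), and either version is all your independence step uses.
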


\begin{definition}[Section 2.10 in \citet{tropp2015expected}]
\label{def:hermitian}
Let $B \in \mathbb{R}^{d_1 \times d_2}$ and define
\[
  \mathcal H(B) \doteq 
  \begin{bmatrix}
    0 & B \\
    B^\top & 0
  \end{bmatrix}.
\]
Then $\mathcal H(B)$ is symmetric,
\begin{equation}
\label{eq:herm}
      \mathcal H(B)^2=
  \begin{bmatrix}
    BB^\top & 0\\
    0 & B^\top B
  \end{bmatrix},
  \qquad
  \|\mathcal H(B)\|_2=\|B\|_2 .
\end{equation}
\end{definition}

\begin{lemma}[Theorem 1.6 in \citet{tropp2012user}]
\label{lem:berstein}
Let $\{Z_k\}_{k=1}^N$ be a finite sequence of independent random matrices in
$\mathbb{R}^{d_1\times d_2}$.
Assume that each random matrix satisfies
\[
  \mathbb{E}[Z_k]=0,
  \qquad
  \|Z_k\|_2 \le R
  \quad \text{a.s.}
\]
Define the variance parameter
\[
  \sigma^2
  \doteq
  \max\left\{
    \left\|\sum_{k=1}^N \mathbb{E}\!\left[ Z_k Z_k^\top \right]\right\|_2,\,
    \left\|\sum_{k=1}^N \mathbb{E}\!\left[ Z_k^\top Z_k \right]\right\|_2
  \right\}.
\]
Then for all $t\ge 0$,
\[
  \Pr\!\left(
    \left\|\sum_{k=1}^N Z_k\right\|_2 \ge t
  \right)
  \le
  (d_1+d_2)\,
  \exp\!\left(
    -\frac{t^2/2}{\sigma^2 + Rt/3}
  \right).
\]
\end{lemma}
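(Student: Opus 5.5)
The plan is the standard matrix Laplace-transform argument. I would first reduce the rectangular case to the symmetric case using the Hermitian dilation of Definition~\ref{def:hermitian}, and then prove a symmetric matrix Bernstein inequality in dimension $d_1+d_2$.

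\textbf{Reduction to the symmetric case.} Set $Y_k \doteq \mathcal H(Z_k)\in\R[(d_1+d_2)\times(d_1+d_2)]$. Since $\mathcal H$ is linear, $\E[Y_k]=0$. By \eqref{eq:herm}, $\|Y_k\|_2=\|Z_k\|_2\le R$, and
\[
\textstyle\sum_k\E[Y_k^2]=\mathrm{diag}\bigl(\sum_k\E[Z_kZ_k^\top],\,\sum_k\E[Z_k^\top Z_k]\bigr),
\]
so $\|\sum_k\E[Y_k^2]\|_2=\sigma^2$. Also, $\|\sum_k Z_k\|_2=\|\mathcal H(\sum_k Z_k)\|_2=\lambda_{\max}(\sum_k Y_k)$, because the spectrum of a dilation is symmetric about zero. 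It therefore suffices to show
\[
\Pr\bigl(\lambda_{\max}(\textstyle\sum_k Y_k)\ge t\bigr)\le D\exp\!\bigl(-\tfrac{t^2/2}{\sigma^2+Rt/3}\bigr),\qquad D\doteq d_1+d_2,
\]
for independent, centered, symmetric $Y_k$ with $\|Y_k\|_2\le R$.

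\textbf{Laplace transform and mgf bound.} For $\theta>0$, Markov's inequality together with $e^{\lambda_{\max}(M)}\le\mathrm{tr}\,e^{M}$ gives
\[
\Pr(\lambda_{\max}(Y)\ge t)\le e^{-\theta t}\,\E\,\mathrm{tr}\exp(\theta Y),\qquad Y\doteq\textstyle\sum_k Y_k.
\]
The next step bounds each matrix mgf. Define $g(\theta)\doteq\frac{\theta^2/2}{1-R\theta/3}$ for $0<\theta<3/R$. Using $k!\ge 2\cdot 3^{k-2}$ and summing the Taylor series, I would show the scalar inequality
\[
e^{\theta x}\le 1+\theta x+g(\theta)x^2\qquad\text{for all }|x|\le R.
\]
The transfer rule lifts this to eigenvalues, giving $e^{\theta Y_k}\preceq I+\theta Y_k+g(\theta)Y_k^2$. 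Taking expectations and using $\E[Y_k]=0$ and $1+a\le e^a$ yields $\E e^{\theta Y_k}\preceq\exp(g(\theta)\E Y_k^2)$. Since $\log$ is operator monotone, this becomes $\log\E e^{\theta Y_k}\preceq g(\theta)\E Y_k^2$.

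\textbf{Main obstacle: subadditivity of matrix cumulant generating functions.} The crux is to combine the mgfs despite non-commutativity. I would invoke Lieb's theorem: $A\mapsto\mathrm{tr}\exp(H+\log A)$ is concave on positive definite matrices. Applying Jensen's inequality one summand at a time, conditioning on the others via independence, gives
\[
\E\,\mathrm{tr}\exp\bigl(\textstyle\sum_k\theta Y_k\bigr)\le\mathrm{tr}\exp\bigl(\textstyle\sum_k\log\E e^{\theta Y_k}\bigr).
\]
Because $\mathrm{tr}\exp$ is monotone with respect to $\preceq$, the right-hand side is at most
\[
\mathrm{tr}\exp\bigl(g(\theta)\textstyle\sum_k\E Y_k^2\bigr)\le D\,e^{g(\theta)\sigma^2}.
\]

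\textbf{Optimizing $\theta$.} Combining the steps gives
\[
\Pr(\lambda_{\max}(Y)\ge t)\le D\exp\bigl(-\theta t+g(\theta)\sigma^2\bigr).
\]
I would choose $\theta=t/(\sigma^2+Rt/3)$, which lies in $(0,3/R)$. Then $1-R\theta/3=\sigma^2/(\sigma^2+Rt/3)$, so $g(\theta)\sigma^2=\theta t/2$. The exponent becomes $-\theta t/2=-\frac{t^2/2}{\sigma^2+Rt/3}$, which is the claimed bound with prefactor $d_1+d_2$.
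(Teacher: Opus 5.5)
Your proposal is correct and follows the same route as the source. The paper gives no proof of its own here; it imports the result from Theorem 1.6 of \citet{tropp2012user}, and your argument is Tropp's: Hermitian dilation to reduce to the symmetric case, the Bernstein mgf bound via the transfer rule, Lieb-based subadditivity of matrix cumulant generating functions, and the choice $\theta = t/(\sigma^2+Rt/3)$.

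The only detail to add is that $\theta\in(0,3/R)$ requires $t>0$ and $\sigma^2>0$. The degenerate cases are trivial: at $t=0$ the right-hand side is at least $1$, and if $\sigma^2=0$ then every $Z_k=0$ almost surely, so the left-hand side vanishes for $t>0$.
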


\section{Proofs in Section~\tref{sec:algo}}
\label{sec:proofsec3}
\subsection{Proof of Theorem~\tref{thm:onestep}}
\label{proof:onestep}
\begin{proof}
Recall $Z_k$ in \eqref{eq:zt_def}. Fix any $k\ge 0$ and let
$z_i \doteq (Z_k)_{[:,i]}$ denote the $i$-th token in $Z_k$.
By \eqref{eq:lsa-layer}, we have
\begin{equation}
\label{eq:last_token_update_sum}
\fF(Z_k;P,Q)_{:,-1}
=
z_{n+k} + \frac{1}{n}\sum_{i=0}^{n+k} (Pz_i)\,(z_i^\top Q z_{n+k}).
\end{equation}
By the definition of $z_{n+k}$ and the block structure of $Q$ in \eqref{eq:para_construct},
for each context index $j\in\{0,\ldots,n-1\}$ we have
\begin{align}
\label{eq:zqz}
z_j^\top Q z_{n+k}
&=
\begin{bmatrix}
x_j^\top & (\gamma x_{j+1})^\top & R_{j+1} & \0^\top
\end{bmatrix}
\begin{bmatrix}
    0_{d \times 2d} & 0_{d \times 1} & -I_d \\
    0_{d \times 2d} & 0_{d \times 1} & I_d \\
    0_{1 \times 2d} & 1            & 0_{1 \times d} \\
    0_{d \times 2d} & 0_{d \times 1} & 0_{d \times d}
\end{bmatrix}
\begin{bmatrix}
\0\\
\0\\
1\\
w_k
\end{bmatrix} \notag\\
&= R_{j+1} + \gamma\langle w_k, x_{j+1}\rangle - \langle w_k, x_j\rangle.
\end{align}
Moreover, by the definition of $P$ in \eqref{eq:para_construct}, for each context index
$j\in\{0,\ldots,n-1\}$,
\begin{align}
\label{eq:pzi}
Pz_j
&=
\begin{bmatrix}
0_{(2d+1) \times d} & 0_{(2d+1) \times (2d+1)} \\
\alpha I_d          & 0_{d \times (2d+1)}
\end{bmatrix}
\begin{bmatrix}
x_j\\
\gamma x_{j+1}\\
R_{j+1}\\
\0
\end{bmatrix}
=
\begin{bmatrix}
\0\\
\0\\
0\\
\alpha x_j
\end{bmatrix}.
\end{align}
Moreover, $Pz_{n+i}=\0$ for all $i\in\{0,\ldots,k\}$.
Substituting \eqref{eq:pzi} and \eqref{eq:zqz} into \eqref{eq:last_token_update_sum} yields
\[
\fF(Z_k;P,Q)_{:,-1}
=
\begin{bmatrix}\0\\\0\\1\\ w_k + \frac{\alpha}{n}\sum_{j=0}^{n-1}\delta_j(w_k)\,x_j \end{bmatrix}.
\]
Taking the $w$-block gives
\[
w_{k+1}
= w_k + \frac{\alpha}{n}\sum_{j=0}^{n-1}\qty(R_{j+1}+\gamma x_{j+1}^\top w_k - x_j^\top w_k)x_j,
\]
which completes the proof.
\end{proof}

\section{Proofs in Section~\tref{sec:population}}
\label{sec:proofsec4}
\subsection{Proof of Lemma~\tref{lem:warmupAb}}
\begin{lemma}
\label{lem:warmupAb}
With $(P,Q)$ defined in \eqref{eq:para_construct}, one CoT step applied to $\bar Z_k$ yields a deterministic weight iterate $\bar w_{k+1}$ that satisfies
\[
\bar w_{k+1} = \bar w_k + \eta(b - A\bar w_k),
\]
where $(A,b)$ is defined in \eqref{eq:abc} and $\eta=\alpha/m$.
\end{lemma}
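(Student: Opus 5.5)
The plan is to reuse the last-token computation from the proof of Theorem~\ref{thm:onestep} verbatim, since the structured prompt $\bar Z_k$ differs from $Z_k$ only in having $m$ context columns with reweighted entries. Here $m$ plays the role of $n$ in the normalization \eqref{eq:lsa-layer}. Write $\bar z_i$ for the $i$-th column of $\bar Z_k$.

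\textbf{Context columns.} For a context column $j\in\{0,\dots,m-1\}$ we have
\[
\bar z_j = \sqrt{d_\pi(s_j)}\,\bigl(x_j,\ \gamma\bar x_j,\ R_{j+1},\ \0\bigr).
\]
Exactly as in \eqref{eq:zqz}, this gives
\[
\bar z_j^\top Q\,\bar z_{m+k} = \sqrt{d_\pi(s_j)}\bigl(R_{j+1}+\gamma\bar x_j^\top\bar w_k - x_j^\top\bar w_k\bigr).
\]
As in \eqref{eq:pzi}, $P\bar z_j$ has $w$-block $\alpha\sqrt{d_\pi(s_j)}\,x_j$ and all other blocks zero.

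\textbf{Buffer columns.} For the buffer columns, the first $d$ coordinates vanish, so $P\bar z_{m+i}=\0$ and they contribute nothing.

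\textbf{Last-token update.} Substituting into the last-token formula \eqref{eq:last_token_update_sum} with $1/m$ normalization, the $w$-block of the new column is
\[
\bar w_k + \frac{\alpha}{m}\sum_{j} d_\pi(s_j)\bigl(r(s_j)+\gamma\bar x_j^\top\bar w_k - x_j^\top\bar w_k\bigr)x_j .
\]
The two square-root factors multiply to $d_\pi(s_j)$. The remaining blocks of the new column are $(\0,\0,1)$, so the prompt structure \eqref{eq:zt_def} is preserved and induction on $k$ applies.

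\textbf{Matrix identification.} This is the only step with real content. Stacking rows, $\sum_j d_\pi(s_j)\,x_j x_j^\top = X^\top D_\pi X$ and $\sum_j d_\pi(s_j)\,r(s_j)\,x_j = X^\top D_\pi r = b$. Since $\bar x_j^\top$ is the $j$-th row of $P_\pi X$, we get $\sum_j d_\pi(s_j)\,x_j\bar x_j^\top = X^\top D_\pi P_\pi X$. Hence the sum equals $b - X^\top D_\pi(I-\gamma P_\pi)X\,\bar w_k = b - A\bar w_k$, with $\eta=\alpha/m$. Determinism is immediate since no sampled quantities appear. The only potential pitfall is getting the row/column orientation of $\bar x_j$ right, i.e.\ that $\bar x_j=\sum_{j'} P_\pi[s_j,s_{j'}]\,x_{j'}$ corresponds to the $j$-th row of $P_\pi X$.
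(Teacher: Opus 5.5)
Your proposal is correct and takes essentially the same route as the paper. The paper also reuses the last-token computation of Theorem~\ref{thm:onestep} on the reweighted prompt $\bar Z_k$ with $1/m$ normalization, which gives $\bar w_{k+1}=\bar w_k+\alpha(\bar b-\bar A\bar w_k)$, and then identifies $\bar A=\tfrac1m A$ and $\bar b=\tfrac1m b$ by summing over the enumeration of $\fS$. Your explicit recomputation of the context and buffer contributions, and your check that the non-$w$ blocks stay $(\0,\0,1)$, just spell out what the paper compresses into ``by Theorem~\ref{thm:onestep}''.
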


\begin{proof}
By Theorem~\tref{thm:onestep} and \eqref{eq:bar_z0}, one CoT step on $\bar Z_k$ updates the weight component in the form
\[
\bar w_{k+1} = \bar w_k + \alpha\bigl(\bar b - \bar A \bar w_k\bigr),
\]
where 
\[
\bar A = \frac1m\sum_{i=0}^{m-1} d_\pi(s_i)x(s_i)\bigl(x(s_i)-\gamma \bar x_i\bigr)^\top = \tfrac1m A,
\qquad
\bar b = \frac1m\sum_{i=0}^{m-1} d_\pi(s_i)r(s_i)x(s_i)=\tfrac1m b.
\]
Both equations are obtained since $(s_0,\ldots,s_{m-1})$ enumerates $\fS$.
Plugging back gives
\[
\bar w_{k+1}
= \bar w_k + \alpha\Bigl(\frac1m b - \frac1m A\bar w_k\Bigr)
= \bar w_k + \eta(b-A\bar w_k),
\]
where $\eta=\alpha/m$.
\end{proof}

\begin{lemma}
\label{lem:mu-positive}
For $A$ in \eqref{eq:abc}, let $H = \frac{1}{2}(A+A^\top)$. 
There exists a constant $C_\tref{lem:mu-positive}>0$ such that 
\begin{equation}
    \mu = \lambda_{\min}(H) \ge (1-\gamma)C_\tref{lem:mu-positive}.
\end{equation}
\end{lemma}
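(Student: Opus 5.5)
We will show that $w^\top A w \ge (1-\gamma)\,w^\top C w$ for every $w\in\R[d]$, where $C = X^\top D_\pi X$, and then use that $C$ is positive definite. Since $w^\top H w = w^\top A w$ for every $w$ (the antisymmetric part of $A$ contributes nothing to the quadratic form), this lower bound on the quadratic form of $A$ transfers directly to $\lambda_{\min}(H)$.

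The key steps are as follows.

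\textbf{Step 1 (reduce to a statement about $v$).} Fix $w$ and set $v \doteq Xw$. Then
\[
w^\top A w = v^\top D_\pi v - \gamma\, v^\top D_\pi P_\pi v = \|v\|_{D_\pi}^2 - \gamma \langle v, P_\pi v\rangle_{D_\pi}.
\]

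\textbf{Step 2 (bound the cross term).} By Cauchy--Schwarz in the $D_\pi$ inner product,
\[
\langle v, P_\pi v\rangle_{D_\pi} \le \|v\|_{D_\pi}\,\|P_\pi v\|_{D_\pi}.
\]
We then use the standard nonexpansiveness $\|P_\pi v\|_{D_\pi}\le \|v\|_{D_\pi}$, the same fact underlying Lemma~\ref{lem:A4:proj_bellman}. Its proof applies Jensen's inequality, $(P_\pi v)(s)^2 \le \sum_{s'} P_\pi[s,s'] v(s')^2$, and then stationarity, $d_\pi^\top P_\pi = d_\pi^\top$, to get
\[
\sum_s d_\pi(s)(P_\pi v)(s)^2 \le \sum_{s'} d_\pi(s') v(s')^2 .
\]
Combining these gives
\[
w^\top A w \ge (1-\gamma)\|Xw\|_{D_\pi}^2 = (1-\gamma)\,w^\top C w .
\]

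\textbf{Step 3 (positivity of $C$).} Because the chain is irreducible on a finite state space, $d_\pi(s)>0$ for every $s$, so $D_\pi\succ 0$. Since $X$ has full column rank, $C = X^\top D_\pi X \succ 0$. Then
\[
w^\top H w \ge (1-\gamma)\lambda_{\min}(C)\|w\|^2,
\]
which gives $\mu\ge(1-\gamma)C_{\tref{lem:mu-positive}}$ with $C_{\tref{lem:mu-positive}} \doteq \lambda_{\min}(X^\top D_\pi X) > 0$.

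The only step with any substance is the nonexpansiveness of $P_\pi$ in the $D_\pi$-norm, since it is where stationarity of $d_\pi$ is used. It is classical (Tsitsiklis and Van Roy), so no real obstacle is expected.
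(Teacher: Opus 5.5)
Your proof is correct and follows essentially the same route as the paper. Both arguments reduce the claim to $u^\top D_\pi P_\pi u \le u^\top D_\pi u$ for $u = Xw$ using Cauchy--Schwarz together with stationarity of $d_\pi$, and then conclude with $\lambda_{\min}(X^\top D_\pi X)>0$, which gives the identical constant. The only cosmetic difference is that the paper applies Cauchy--Schwarz once to $\mathbb{E}[u(S_t)u(S_{t+1})]$ under the stationary chain, whereas you split this into Cauchy--Schwarz in the $D_\pi$ inner product followed by Jensen's inequality for the nonexpansiveness $\|P_\pi u\|_{D_\pi}\le\|u\|_{D_\pi}$.
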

\begin{proof}
Fix any $v\in\mathbb{R}^d$ and let $u=Xv\in\mathbb{R}^{|\mathcal S|}$.
Then
\[
v^\top H v
= \frac12 v^\top(A+A^\top)v
= \frac12 u^\top\Big(D_\pi(I-\gamma P_\pi)+(I-\gamma P_\pi)^\top D_\pi\Big)u.
\]
Expanding the symmetric part gives
\[
D_\pi(I-\gamma P_\pi)+(I-\gamma P_\pi)^\top D_\pi
= 2D_\pi-\gamma(D_\pi P_\pi + P_\pi^\top D_\pi).
\]
By Cauchy--Schwarz and stationarity,
\begin{equation}
    u^\top D_\pi P_\pi u
=\mathbb{E}[u(S_t)u(S_{t+1})]
\le \sqrt{\mathbb{E}[u(S_t)^2]\mathbb{E}[u(S_{t+1})^2]}
= \mathbb{E}[u(S_t)^2]
= u^\top D_\pi u
\end{equation}
and similarly $u^\top P_\pi^\top D_\pi u \le u^\top D_\pi u$.
Therefore,
\[
u^\top\big(2D_\pi-\gamma(D_\pi P_\pi + P_\pi^\top D_\pi)\big)u
\ge 2(1-\gamma)\, u^\top D_\pi u.
\]
Combining the inequalities yields
\begin{equation}
    v^\top H v \ge (1-\gamma)\, u^\top D_\pi u = (1-\gamma)\, v^\top C v \ge (1-\gamma)\lambda_{\min}(C)\|v\|_2^2.
\end{equation}
By \eqref{eq:abc}, $C = X^\top D_\pi X$ is positive definite since $X$ has full column rank and $D_\pi$ has positive diagonal entries by ergodicity, so $\lambda_{\min}(C) > 0$.
Since the above holds for all $v\in\mathbb{R}^d$, choosing $C_\tref{lem:mu-positive}\doteq \lambda_{\min}(C)$ completes the proof.
\end{proof}



\subsection{Proof of Proposition~\tref{prop:expect}}
\label{proof:expect}
\begin{proof}
We first relate $\fL(\bar w_k)$ to the squared error $\|\bar w_k - w^*\|_2^2$. Since
\begin{equation}
\fL(w) = \|C^{-\frac12} A(w - w^*)\|_2^2 \le \|C^{-\frac12} A\|_2^2 \|w - w^*\|_2^2,
\end{equation}
bounding $\|\bar w_k - w^*\|_2^2$ immediately yields an upper bound on $\fL(\bar w_k)$. Subtracting $w^*$ from both sides of~\eqref{eq:expected_update_expanded} and taking the squared norm gives
\begin{align}
\label{eq:eupdate}
\|\bar w_{k+1} - w^*\|_2^2
&= \|\bar w_k - w^*\|_2^2 - 2\eta(\bar w_k - w^*)^\top A(\bar w_k - w^*) + \eta^2 \|A(\bar w_k - w^*)\|_2^2.
\end{align}
For any $v \in \R[d]$, $v^\top H v \ge \mu \|v\|_2^2$ and $\|Av\|_2^2 \le L\|v\|_2^2$, so~\eqref{eq:eupdate} implies
\begin{equation}
\label{eq:one_step_contract}
\|\bar w_{k+1} - w^*\|_2^2 \le (1 - 2\eta\mu + \eta^2 L)\|\bar w_k - w^*\|_2^2.
\end{equation}
When $\eta \le \mu/L$, we have $1 - 2\eta\mu + \eta^2 L \le 1 - \eta\mu < 1$, yielding a contraction.
By~\eqref{eq:one_step_contract}, we have
\[
\|\bar w_{k+1}-w^*\|_2^2 \le (1-2\eta\mu+\eta^2 L)\|\bar w_k-w^*\|_2^2.
\]
By Lemma~\tref{lem:mu-positive}, we have $\mu > (1-\gamma)C_\tref{lem:mu-positive} > 0$, and hence for any stepsize
$\eta\le \mu/L$ we have $1-2\eta\mu+\eta^2 L \le 1-\eta\mu < 1$.
Iterating over $k$ gives
\begin{equation}
\label{eq:witerate}
    \|\bar w_k - w^*\|_2^2 \le (1-\eta\mu)^k \|\bar w_0 - w^*\|_2^2.
\end{equation}
Since $\fL(\bar w_k) = \|C^{-\frac12}A(\bar w_k - w^*)\|_2^2$, applying \eqref{eq:witerate} we have
\begin{align}
\fL(\bar w_k) 
&\leq \|C^{-\frac12}A\|_2^2 (1-\eta\mu)^k \|\bar w_0 - w^*\|_2^2\\
&\le C_{\text{Prop}\tref{prop:expect}}(1-\eta\mu)^k \fL(\bar w_0),
\end{align}
where $C_{\text{Prop}\tref{prop:expect}} \doteq \frac{\|C^{-\frac12}A\|_2^2} {\sigma_{\min}(C^{-\frac12}A)^2}$.
This completes the proof.
\end{proof}


\subsection{Proof of Corollary~\ref{cor:depth}}
\label{proof:depth}
\begin{proof}
By Proposition~\tref{prop:expect}, for any constant stepsize $\eta\in(0,\mu/L]$ we have
\[
\fL(\bar w_k)\le C_{\text{Prop}\tref{prop:expect}}(1-\eta\mu)^k \fL(\bar w_0), \qquad \forall k\ge 0.
\]
Since $\eta\mu\in(0,1]$, we can use $\log(1-x)\le -x$ for $x\in(0,1]$ to obtain
\[
(1-\eta\mu)^k \le \exp(-\eta\mu\,k).
\]
Therefore,
\[
\fL(\bar w_k)\le C_{\text{Prop}\tref{prop:expect}}\exp(-\eta\mu\,k)\,\fL(\bar w_0).
\]
It follows that $\fL(w_k)\le \varepsilon$ whenever
\[
k \ge \frac{1}{\eta\mu}\log\Big(\frac{C_{\text{Prop}\tref{prop:expect}}\fL(\bar w_0)}{\varepsilon}\Big),
\]
which completes the proof.
\end{proof}

\section{Proofs in Section~\tref{sec:finite}}
\label{sec:proofsec5}

\begin{lemma}[Doeblin coupling for finite ergodic chains]
\label{lem:doeblin-coupling}
Let $P_\pi$ be an irreducible and aperiodic transition matrix on 
a finite state space $\fS$ with stationary distribution $d_\pi$. 
For any initial distribution $p_0$, there exists a coupling 
$(S_t, S_t')_{t\ge 0}$ with $S_0\sim p_0$, $S_0'\sim d_\pi$, 
both marginally Markov with kernel $P_\pi$, such that the 
coalescence time 
$\tau_{\mathrm{coal}}\doteq\inf\{t: S_s = S_s' 
\text{ for all } s\ge t\}$ satisfies
\[
\Pr(\tau_{\mathrm{coal}} > t) \;\le\; C_{\tref{lem:doeblin-coupling},1}C_{\tref{lem:doeblin-coupling},2}^{t},
\]
for some constants $ C_{\tref{lem:doeblin-coupling},1}\ge 1$ and $C_{\tref{lem:doeblin-coupling},2}\in(0,1)$.
\end{lemma}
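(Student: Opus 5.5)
The plan is the classical Doeblin (independent-then-coalesce) coupling, with Lemma~\ref{lem:aperiodic-positive} supplying a uniform minorization. By Lemma~\ref{lem:aperiodic-positive} there is an integer $r_0$ with $P_\pi^{r_0}(s,s')>0$ for all $s,s'\in\fS$. Since $\fS$ is finite, the quantity
\[
\kappa \doteq \min_{s,s',u\in\fS} P_\pi^{r_0}(s,u)P_\pi^{r_0}(s',u) > 0
\]
is a strictly positive minimum, and hence
\[
\min_{s,s'}\sum_{u}P_\pi^{r_0}(s,u)P_\pi^{r_0}(s',u)\ge |\fS|\kappa \doteq \epsilon_0\in(0,1].
\]

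Next I construct the coupling. Draw $S_0\sim p_0$ and $S_0'\sim d_\pi$ independently. Until the chains first meet, run them as independent Markov chains with kernel $P_\pi$. At the meeting time $T\doteq\inf\{t: S_t=S_t'\}$, switch and set $S_t'\doteq S_t$ for all $t\ge T$, with the common path continuing under $P_\pi$. Each chain is still marginally Markov with kernel $P_\pi$, because $T$ is a stopping time for the joint chain and the strong Markov property lets us splice in the continuation. In particular $S_t'\sim d_\pi$ for all $t$. After $T$ the two chains agree forever, so $\tau_{\mathrm{coal}}= T$ and it suffices to bound $\Pr(T>t)$.

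For the tail, consider the times $jr_0$, $j=1,2,\ldots$. On $\{T>(j-1)r_0\}$ the two chains are still independent. By the Markov property at time $(j-1)r_0$, the conditional probability that $S_{jr_0}=S'_{jr_0}$ is at least $\sum_u P_\pi^{r_0}(s,u)P_\pi^{r_0}(s',u)\ge\epsilon_0$, uniformly over the current pair $(s,s')$. Meeting at time $jr_0$ implies $T\le jr_0$, so
\[
\Pr(T>jr_0)\le(1-\epsilon_0)\Pr(T>(j-1)r_0),
\]
and inducting gives $\Pr(T>jr_0)\le(1-\epsilon_0)^j$. For general $t$, take $j=\lfloor t/r_0\rfloor$. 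Then
\[
\Pr(T>t)\le(1-\epsilon_0)^{t/r_0-1},
\]
so the claim holds with $C_{\tref{lem:doeblin-coupling},2}=(1-\epsilon_0)^{1/r_0}$ and $C_{\tref{lem:doeblin-coupling},1}=(1-\epsilon_0)^{-1}\ge 1$. If $\epsilon_0=1$, the chains meet by time $r_0$ almost surely, and any $C_{\tref{lem:doeblin-coupling},2}\in(0,1)$ with $C_{\tref{lem:doeblin-coupling},1}=C_{\tref{lem:doeblin-coupling},2}^{-r_0}$ works.

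The main obstacle is the rigorous justification that the spliced process is marginally Markov with kernel $P_\pi$. I would handle it by defining everything on a product space with an independent i.i.d.\ update sequence for each chain. After time $T$, the second chain uses the first chain's randomness. Conditioned on $\mathcal F_T$, the post-$T$ increments are then still i.i.d.\ with the correct law, which is exactly what the strong Markov property needs.
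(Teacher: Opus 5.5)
Your proof is correct, but it uses a different coupling from the paper's. The paper uses the split-chain (minorization) coupling of Roberts--Rosenthal. It writes $P_\pi^{r}(s,\cdot)=\varepsilon\,\nu(\cdot)+(1-\varepsilon)Q_s(\cdot)$ with $\varepsilon=\sum_{u}\min_{s}P_\pi^{r}(s,u)$. At each block time it flips a single common $\mathrm{Bernoulli}(\varepsilon)$ coin, and on success both chains jump to the same $\nu$-draw, so coalescence in each block has probability at least $\varepsilon$ by design. You instead run the chains independently and rely on chance meetings through the overlap $\sum_u P_\pi^{r_0}(s,u)P_\pi^{r_0}(s',u)\ge\epsilon_0$.

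The paper's route gives a better rate, since $\epsilon_0=|\fS|\min_u\bigl(\min_s P_\pi^{r_0}(s,u)\bigr)^2\le\varepsilon$. For instance, when all rows of $P_\pi^{r}$ are close to uniform, $\varepsilon$ is close to $1$ while your overlap is only about $1/|\fS|$. It is also the construction that generalizes to general state spaces, where independent chains need never land on the same point.

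Your route is more self-contained. Your coupling is defined at every time step, so marginal Markovianity follows directly from your random-mapping splice. The paper's construction only specifies the states at times $0,r,2r,\dots$ and tacitly requires filling in intermediate states by bridges. You also handle the degenerate case $\epsilon_0=1$ correctly; there the paper's constants $(1-\varepsilon)^{1/r}$ and $(1-\varepsilon)^{-1}$ break down.

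One phrasing point: in the spliced process, the conditional probability of $\{S_{jr_0}=S'_{jr_0}\}$ is the probability of meeting by time $jr_0$. The lower bound by the independent overlap comes from comparing with the un-spliced independent copy, which agrees with your process up to $T$. It does not follow from the Markov property of the coupled pair alone.
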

\begin{proof}
Since $\fS$ is finite and $P_\pi$ is irreducible and aperiodic, 
by Lemma~\tref{lem:aperiodic-positive} there exists $r\ge 1$ such that 
$P_\pi^r(s,s')>0$ for all $s,s'\in\fS$. Define
\[
\varepsilon \doteq \sum_{s'\in\fS}\min_{s\in\fS} P_\pi^r(s,s')
>0,
\qquad
\nu(s') \doteq \varepsilon^{-1}\min_{s\in\fS} P_\pi^r(s,s').
\]
Then for each $s\in\fS$, define
\[
Q_s(\cdot) \doteq \frac{P_\pi^r(s,\cdot) - \varepsilon\,\nu(\cdot)}{1-\varepsilon},
\]
so that $P_\pi^r(s,\cdot) = \varepsilon\,\nu(\cdot) + (1-\varepsilon)\,Q_s(\cdot)$, 
where $Q_s$ is a probability distribution on $\fS$ for each $s$.
Construct the split coupling as in Section 4.2 of \citet{roberts2004general}): 
at block times $0, r, 2r,\ldots$, draw a common 
$\mathrm{Bernoulli}(\varepsilon)$ coin; if it succeeds, set 
both chains' next state to a common draw from $\nu$; if it 
fails, update using the residual kernels $Q_{S_t}$ and 
$Q_{S_t'}$ independently. Once the chains meet, use identical 
transitions thereafter. Each block has conditional coupling 
probability at least $\varepsilon$, so 
$\Pr(\tau_{\mathrm{coal}} > kr) \le (1-\varepsilon)^k$ for 
$k\ge 0$. Setting $C_{\tref{lem:doeblin-coupling},2} = (1-\varepsilon)^{1/r}$ and 
$C_{\tref{lem:doeblin-coupling},1} = (1-\varepsilon)^{-1}$ completes the proof.
\end{proof}

\subsection{Proof of Lemma~\tref{lem:concentration}}
\label{proof:concentration}

\begin{proof}
Fix $\delta\in(0,1)$. We first prove the bound for $\varepsilon_A$ defined in \eqref{eq:varepsilon_ab}.
Define the stationary mean $M_\pi \doteq \E_{d_\pi}[M_0] = A$. Then
\begin{equation}
\label{eq:DeltaA}
\Delta_A \doteq \widehat A - A 
= \underbrace{\frac{1}{n}\sum_{t=0}^{n-1}(M_t - M_t')}_{T_1}
+ \underbrace{\frac{1}{n}\sum_{t=0}^{n-1}(M_t' - M_\pi)}_{T_2}.
\end{equation}

\paragraph{Coupling to a stationary chain.}
By Lemma~\tref{lem:doeblin-coupling}, there exists a stationary 
chain $\{S_t'\}$ with $S_0'\sim d_\pi$ and transition kernel 
$P_\pi$, coupled to $\{S_t\}$ such that $S_t = S_t'$ for all 
$t\ge\tau_{\mathrm{coal}}$ and 
$\Pr(\tau_{\mathrm{coal}}>t)
\le C_{\tref{lem:doeblin-coupling},1}\,
C_{\tref{lem:doeblin-coupling},2}^{\,t}$.
Define $M_t'$ from $\{S_t'\}$ analogously to $M_t$.
For $T_1$, since $M_t = M_t'$ for all $t\ge\tau_{\mathrm{coal}}$ 
and $\|M_t - M_t'\|_2 \le 2(1+\gamma)C_x^2$, we have
\[
\|T_1\|_2 \le \frac{2(1+\gamma)C_x^2\,\tau_{\mathrm{coal}}}{n}.
\]
Choosing 
\[
t_0 \doteq \biggl\lceil \frac{1}{\log(1/C_{\tref{lem:doeblin-coupling},2})}
\log\frac{3C_{\tref{lem:doeblin-coupling},1}\,n}{\delta}
\biggr\rceil,
\]
and defining
\begin{equation}
\label{eq:C14-def}
C_{\tref{lem:concentration},4}
\doteq 2(1+\gamma)C_x^2
\left(
\frac{1+\log(3C_{\tref{lem:doeblin-coupling},1})}{\log(1/C_{\tref{lem:doeblin-coupling},2})}+1
\right),
\end{equation}
by Lemma~\tref{lem:doeblin-coupling} we have 
$\Pr(\tau_{\mathrm{coal}}>t_0)\le \delta/(3n) \le\delta/3$.
Moreover, under the standing condition 
$n_{\mathrm{eff}}\ge d+\log(4/\delta)$, we have $\log(n/\delta)\ge 1$,
and hence on this event
\begin{equation}
\label{eq:T1-raw}
\|T_1\|_2 
\le \frac{2(1+\gamma)C_x^2\,t_0}{n}
\le \frac{C_{\tref{lem:concentration},4}\log(n/\delta)}{n}.
\end{equation}
This term will be absorbed after the block length $m$ is chosen in \eqref{eq:m-choice} below.

For $T_2$, define $\widetilde M_t' \doteq M_t' - M_\pi$. 
Since $\{S_t'\}$ is stationary, $\E[\widetilde M_t'] = 0$ 
for all $t$. Since each $\widetilde M_t'$ depends only on 
$(S_t', S_{t+1}')$, the process $\{\widetilde M_t'\}$ inherits 
geometric $\beta$-mixing from $\{S_t'\}$ via 
Lemma~\tref{lem:beta}, with mixing rate $\beta(m-1)$.
Moreover,
\begin{equation}
\label{eq:Mtilde-bound}
\|\widetilde M_t'\|_2 \le 2(1+\gamma)C_x^2.
\end{equation}
The blocking, Berbee coupling 
(Lemma~\tref{lem:berbee-blocks}), and matrix Bernstein
(Lemma~\tref{lem:berstein}) argument below applies to it
directly.

\paragraph{Uniform bounds and tail.}
Fix an arbitrary block length $m\in\mathbb{N}$ and let $K\doteq \lfloor n/m\rfloor$.
Define $n_{\mathrm{eff}} \doteq \lfloor K/2 \rfloor = \lfloor n/(2m) \rfloor$.
We use the first $2n_{\mathrm{eff}}$ full blocks and absorb all remaining terms into the tail. 
Specifically, partition the first $2mn_{\mathrm{eff}}$ indices into consecutive blocks 
$I_j \doteq \{jm,\dots,(j+1)m-1\}$ for $j=0,\dots,2n_{\mathrm{eff}}-1$, and define
\[
\widetilde M'^{(j)} \doteq \sum_{t\in I_j}\widetilde M'_t,
\qquad
R'_{\mathrm{tail}} \doteq \frac{1}{n}\sum_{t=2mn_{\mathrm{eff}}}^{n-1}\widetilde M'_t.
\]
Then
\begin{equation}
\label{eq:boundDeltaA}
    \frac{1}{n}\sum_{t=0}^{n-1}\widetilde M'_t 
    = \frac{1}{n}\sum_{j=0}^{2n_{\mathrm{eff}}-1}\widetilde M'^{(j)} + R'_{\mathrm{tail}}.
\end{equation}
Since the tail contains fewer than $2m$ terms, \eqref{eq:Mtilde-bound} implies
\begin{equation}
\label{eq:tail-bound}
\|R'_{\mathrm{tail}}\|_2 \le \frac{4m(1+\gamma)C_x^2}{n}.
\end{equation}
To apply Berbee coupling with valid separation, we split the $2n_{\mathrm{eff}}$ blocks 
into even-indexed and odd-indexed groups. We bound the even-block sum below, and the identical argument applies to the 
odd-block sum.

\paragraph{Berbee coupling for even blocks.}
Let $c_\beta\ge 1$ and $\rho\in(0,1)$ denote the geometric $\beta$-mixing constants from Lemma~\tref{lem:beta}, so that $\beta(q)\le c_\beta\rho^q$ for all $q\ge 1$.
Within the even subsequence, consecutive blocks $I_{2j}$ and $I_{2(j+1)}$ 
are separated by one intervening block. Since each $\widetilde M_t'$ depends on 
$(S_t',S_{t+1}')$, the effective separation in the original $S'$-chain is $m-1$.
Since $\{\widetilde M'_t\}$ is stationary and geometrically $\beta$-mixing 
by Lemma~\tref{lem:beta}, applying Lemma~\tref{lem:berbee} with this separation to the even block sums
$\{\widetilde M'^{(2j)}\}_{j=0}^{n_{\mathrm{eff}}-1}$,
there exist independent random matrices 
$\{\widetilde M^{*(2j)}\}_{j=0}^{n_{\mathrm{eff}}-1}$
with the same marginals as 
$\{\widetilde M'^{(2j)}\}_{j=0}^{n_{\mathrm{eff}}-1}$ such that, for the event
\begin{equation}
\label{eq:event}
    \mathcal E \doteq \Bigl\{\widetilde M'^{(2j)}=\widetilde M^{*(2j)}
    \ \text{for all}\ j=0,\dots,n_{\mathrm{eff}}-1\Bigr\},
\end{equation}
we have
\begin{equation}
\label{eq:berbee-fail}
\Pr(\mathcal E^c) \leq n_{\mathrm{eff}}\,\beta(m-1) 
\leq \frac{n}{2m}\,c_\beta\,\rho^{m-1}.
\end{equation}
Here $\mathcal E^c$ denotes the complement of the event $\mathcal E$.
Moreover, by \eqref{eq:Mtilde-bound}, almost surely
\begin{equation}
\label{eq:block-uniform}
\|\widetilde M^{*(2j)}\|_2
\le \sum_{t\in I_{2j}}\|\widetilde M'_t\|_2
\le 2m(1+\gamma)C_x^2.
\end{equation}

\paragraph{Matrix Bernstein on independent coupled blocks.}
Let $\overline M \doteq \sum_{j=0}^{n_{\mathrm{eff}}-1}\widetilde M^{*(2j)}$ and
\[
V_A \doteq \left\|\sum_{j=0}^{n_{\mathrm{eff}}-1}\E\bigl[\mathcal H(\widetilde M^{*(2j)})^2\bigr]\right\|_2,
\]
where $\mathcal H(\cdot)$ is the Hermitian dilation in Definition~\ref{def:hermitian}.
Using \eqref{eq:block-uniform}, the triangle inequality, Jensen's inequality, and $\|\mathcal H(M)\|_2=\|M\|_2$,
we obtain
\begin{align}
\label{eq:VA}
V_A
&\le \sum_{j=0}^{n_{\mathrm{eff}}-1}\E\!\left[\|\widetilde M^{*(2j)}\|_2^2\right]
\le 4n_{\mathrm{eff}}m^2(1+\gamma)^2C_x^4.
\end{align}
By Lemma~\ref{lem:berstein} with $(d_1,d_2)=(d,d)$, $N=n_{\mathrm{eff}}$,
$Z_j=\widetilde M^{*(2j)}$, and
$R=2m(1+\gamma)C_x^2$, for all $t>0$,
\begin{equation}
\label{eq:bernstein}
\Pr\Bigl(\|\overline M\|_2\ge t\Bigr)
\le
2d\exp\!\left(
-\frac{t^2/2}{V_A+\bigl(2m(1+\gamma)C_x^2\bigr)t/3}
\right).
\end{equation}
Let
\[
s_\delta \doteq \log\frac{24d}{\delta}
\]
and choose
\begin{equation}
\label{eq:t-choice}
t \doteq 
\sqrt{2V_A s_\delta+\frac{4m^2(1+\gamma)^2C_x^4s_\delta^2}{9}}
+\frac{2m(1+\gamma)C_x^2s_\delta}{3},
\end{equation}
so that
\begin{equation}
\label{eq:bernstein-delta2}
\Pr\Bigl(\|\overline M\|_2\ge t\Bigr)\le \frac{\delta}{12}.
\end{equation}

\paragraph{Choosing $m$ and concluding the bound for $\|T_2\|_2$.}
Set
\begin{equation}
\label{eq:m-choice}
m \doteq \left\lceil C_{\tref{lem:concentration},1}\log\frac{C_{\tref{lem:concentration},2}n}{\delta}\right\rceil,
\qquad
C_{\tref{lem:concentration},1}\doteq \frac{1}{\log(1/\rho)},
\qquad
C_{\tref{lem:concentration},2}
\doteq \max\left\{\frac{3c_\beta}{\rho},\,\rho^{-2}\right\}.
\end{equation}
Then $m\ge 2$ because $C_{\tref{lem:concentration},2}\ge \rho^{-2}$, and
\begin{equation}
\rho^{m-1}
\le \rho^{-1}\exp\!\left(-m\log(1/\rho)\right)
\le \frac{\delta}{\rho C_{\tref{lem:concentration},2}n}
\le \frac{\delta}{3c_\beta n}.
\end{equation}
Substituting into \eqref{eq:berbee-fail} yields
\begin{equation}
\label{eq:Ecbound}
\Pr(\mathcal E^c)
\le \frac{n}{2m}c_\beta\rho^{m-1}
\le \frac{n}{2m}c_\beta\cdot \frac{\delta}{3c_\beta n}
= \frac{\delta}{6m}
\le \frac{\delta}{12},
\end{equation}
where the last inequality uses $m\ge 2$.

Applying the identical argument to the odd-indexed blocks, the same bound holds. Combining Berbee's coupling and Bernstein's inequality for the even and odd blocks, with probability at least 
$1-\delta/3$,
\[
\|T_2\|_2 
\le \frac{2t}{n} + \|R'_{\mathrm{tail}}\|_2.
\]
Substituting \eqref{eq:tail-bound}, \eqref{eq:t-choice}, and \eqref{eq:VA} gives that on this event,
\begin{align}
\|T_2\|_2
&\le
\frac{2}{n}
\left(
\sqrt{2V_A s_\delta+\frac{4m^2(1+\gamma)^2C_x^4s_\delta^2}{9}}
+\frac{2m(1+\gamma)C_x^2s_\delta}{3}
\right)
+\frac{4m(1+\gamma)C_x^2}{n} \notag\\
&\le
\frac{4m(1+\gamma)C_x^2}{n}
\left(
\sqrt{2n_{\mathrm{eff}}s_\delta+\frac{s_\delta^2}{9}}
+\frac{s_\delta}{3}+1
\right).
\label{eq:deltaA-pre}
\end{align}
Define the universal constants
\begin{equation}
\label{eq:C16C17-def}
C_{\tref{lem:concentration},6}
\doteq 1+\frac{\log 6-1}{1+\log 4},
\qquad
C_{\tref{lem:concentration},7}
\doteq
\sqrt{C_{\tref{lem:concentration},6}
\left(2+\frac{C_{\tref{lem:concentration},6}}{9}\right)}
+\frac{C_{\tref{lem:concentration},6}}{3}+1.
\end{equation}
Since $\log d\le d-1$ for $d\ge 1$ and
$d+\log(4/\delta)\ge 1+\log 4$, we have
\[
s_\delta=\log\frac{24d}{\delta}
=\log d+\log\frac{4}{\delta}+\log 6
\le d+\log\frac{4}{\delta}+\log 6-1
\le C_{\tref{lem:concentration},6}\left(d+\log\frac{4}{\delta}\right).
\]
Using $n\ge 2mn_{\mathrm{eff}}$, the preceding bound on $s_\delta$, and 
$n_{\mathrm{eff}}\ge d+\log(4/\delta)$, \eqref{eq:deltaA-pre} implies
\begin{align}
\|T_2\|_2
&\le
2(1+\gamma)C_x^2 C_{\tref{lem:concentration},7}
\sqrt{\frac{d+\log(4/\delta)}{n_{\mathrm{eff}}}}.
\label{eq:deltaA-neff}
\end{align}
The choice of $m$ in \eqref{eq:m-choice} also gives 
$\log(n/\delta)\le \log(C_{\tref{lem:concentration},2}n/\delta)
\le m/C_{\tref{lem:concentration},1}$.
Therefore, \eqref{eq:T1-raw} implies that on the coupling event,
\begin{align}
\|T_1\|_2
&\le \frac{C_{\tref{lem:concentration},4}}{2C_{\tref{lem:concentration},1}}
\cdot\frac{1}{n_{\mathrm{eff}}} \notag\\
&\le \frac{C_{\tref{lem:concentration},4}}{2C_{\tref{lem:concentration},1}}
\sqrt{\frac{d+\log(4/\delta)}{n_{\mathrm{eff}}}},
\label{eq:T1-absorb}
\end{align}
where the first inequality uses $n\ge 2mn_{\mathrm{eff}}$ and the second uses $n_{\mathrm{eff}}\ge d+\log(4/\delta)$.
Combining \eqref{eq:T1-absorb} and \eqref{eq:deltaA-neff}, on the intersection of the coupling event and the $T_2$ block-concentration event,
\begin{equation}
\label{eq:epsilonA-final-preC13}
\varepsilon_A=\|\Delta_A\|_2
\le \left(
\frac{C_{\tref{lem:concentration},4}}{2C_{\tref{lem:concentration},1}}
+2(1+\gamma)C_x^2 C_{\tref{lem:concentration},7}
\right)
\sqrt{\frac{d+\log(4/\delta)}{n_{\mathrm{eff}}}}.
\end{equation}

\paragraph{Bounding $\varepsilon_b$.}
Recall $\widehat b=\frac1n\sum_{t=0}^{n-1}R_{t+1}x_t$ and define
$m_t \doteq R_{t+1}x_t$ and $m_\pi \doteq \E_{d_\pi}[m_0] = b$.
Define $m_t'$ from $\{S_t'\}$ analogously to $m_t$.
Then
\begin{equation}
\label{eq:Deltab}
    \Delta_b \doteq \widehat b-b
= \underbrace{\frac{1}{n}\sum_{t=0}^{n-1}(m_t - m_t')}_{T_3}
+ \underbrace{\frac{1}{n}\sum_{t=0}^{n-1}(m_t' - m_\pi)}_{T_4}.
\end{equation}
By the same coupling argument as for $T_1$, since 
$m_t = m_t'$ for all $t \ge \tau_{\mathrm{coal}}$ and 
$\|m_t - m_t'\|_2 \le 2C_RC_x$, on the same coupling event we have
\[
\|T_3\|_2 \le \frac{2C_RC_x\,t_0}{n}
\le \frac{C_{\tref{lem:concentration},5}}{2C_{\tref{lem:concentration},1}}
\sqrt{\frac{d+\log(4/\delta)}{n_{\mathrm{eff}}}},
\]
where
\begin{equation}
\label{eq:C15-def}
C_{\tref{lem:concentration},5}
\doteq 2C_RC_x
\left(
\frac{1+\log(3C_{\tref{lem:doeblin-coupling},1})}{\log(1/C_{\tref{lem:doeblin-coupling},2})}+1
\right),
\end{equation}
and the absorption uses the same steps as \eqref{eq:T1-absorb}.
For $T_4$, define $\widetilde m_t' \doteq m_t' - m_\pi$. 
Since $\{S_t'\}$ is stationary, $\E[\widetilde m_t'] = 0$ and 
$\|\widetilde m_t'\|_2 \le 2C_RC_x$.
Applying the same blocking, Berbee coupling, and Bernstein argument as above (treating vectors as $d\times 1$ matrices and using $d+1\le 2d$), with the same choices of $m$ and $n_{\mathrm{eff}}$, yields that with probability at least $1-\delta/3$,
\[
\|T_4\|_2
\le 2C_RC_x C_{\tref{lem:concentration},7}
\sqrt{\frac{d+\log(4/\delta)}{n_{\mathrm{eff}}}}.
\]
Therefore, on the intersection of the coupling event and the $T_4$ block-concentration event,
\begin{equation}
\label{eq:epsilonb-final-preC13}
\varepsilon_b=\|\Delta_b\|_2
\le
\left(
\frac{C_{\tref{lem:concentration},5}}{2C_{\tref{lem:concentration},1}}
+2C_RC_x C_{\tref{lem:concentration},7}
\right)
\sqrt{\frac{d+\log(4/\delta)}{n_{\mathrm{eff}}}}.
\end{equation}

\paragraph{Union bound.}
Finally, let
\begin{equation}
\label{eq:C13-final-def}
C_{\tref{lem:concentration},3}
\doteq
\max\left\{
\frac{C_{\tref{lem:concentration},4}}{2C_{\tref{lem:concentration},1}}
+2(1+\gamma)C_x^2 C_{\tref{lem:concentration},7},
\frac{C_{\tref{lem:concentration},5}}{2C_{\tref{lem:concentration},1}}
+2C_RC_x C_{\tref{lem:concentration},7}
\right\}.
\end{equation}
The coupling event fails with probability at most $\delta/3$, the $T_2$ block-concentration event fails with probability at most $\delta/3$, and the $T_4$ block-concentration event fails with probability at most $\delta/3$. Hence a union bound gives
\[
\Pr\Bigl(\varepsilon_A\le C_{\tref{lem:concentration},3}
\sqrt{\frac{d+\log(4/\delta)}{n_{\mathrm{eff}}}}
\ \text{and}\ 
\varepsilon_b\le C_{\tref{lem:concentration},3}
\sqrt{\frac{d+\log(4/\delta)}{n_{\mathrm{eff}}}}\Bigr)
\ge 1-\delta,
\]
which completes the proof.
\end{proof}

\subsection{Proof of Theorem~\ref{thm:finite}}
\label{proof:finite}
\begin{proof}
On $\mathcal E_\delta$, by definition in \eqref{eq:DeltaA} and \eqref{eq:Deltab}, Lemma~\ref{lem:concentration} implies
\begin{equation}
\label{eq:thm-finite-DeltaA}
\|\Delta_A\|_2 \le \varepsilon_A(\delta),
\qquad
\|\Delta_b\|_2 \le \varepsilon_b(\delta).
\end{equation}
Let $e_k \doteq w_k - w^*$.
From \eqref{eq:empirical-recursion}, subtracting $w^*$ from both sides gives
\begin{equation}
\label{eq:thm-finite-e-rec}
e_{k+1} = (I - \alpha A)e_k + \alpha(\Delta_b - \Delta_A w_k).
\end{equation}
Let $g_k \doteq \Delta_b - \Delta_A w_k$.
Expanding \eqref{eq:thm-finite-e-rec} gives
\begin{equation}
\label{eq:thm-finite-expand}
\|e_{k+1}\|_2^2
=
\underbrace{\|(I-\alpha A)e_k\|_2^2}_{T_1}
+\underbrace{2\alpha\abs{\langle (I-\alpha A)e_k,\, g_k\rangle}}_{T_2}
+\underbrace{\alpha^2\|g_k\|_2^2}_{T_3}.
\end{equation}
We bound the terms one by one.
For $T_1$, since $\alpha\in(0,\mu/L]$, by the same argument as \eqref{eq:one_step_contract},
\begin{equation}
\label{eq:T1}
T_1 \le (1-\alpha\mu)\|e_k\|_2^2,
\end{equation}
which also implies $\|(I-\alpha A)e_k\|_2^2 \le \|e_k\|_2^2$ since $\alpha\mu > 0$.

For $T_2$, we have
\begin{align}
\label{eq:thm-finite-cross}
T_2 &\le 2\alpha\|(I-\alpha A)e_k\|_2\|g_k\|_2 \notag\\
&\le 2\alpha\|e_k\|_2\|g_k\|_2 \quad\text{(Cauchy--Schwarz)}\notag\\
&\le \frac{\alpha\mu}{4}\|e_k\|_2^2 + \frac{4\alpha}{\mu}\|g_k\|_2^2.
\quad\text{(AM--GM)}
\end{align}

For $T_3$, since $\|w_k\|_2 \le \|w^*\|_2 + \|e_k\|_2$, we have
\begin{align}
\label{eq:thm-finite-g}
\|g_k\|_2^2
&\le 2\bigl(\|\Delta_b\|_2^2 + \|\Delta_A\|_2^2\|w_k\|_2^2\bigr) \notag\\
&\le 2\bigl(\varepsilon_b(\delta)^2
+ 2\varepsilon_A(\delta)^2(\|w^*\|_2^2 + \|e_k\|_2^2)\bigr).
\end{align}

Combining the above inequalities yields
\begin{align}
\label{eq:thm-finite-basic}
\|e_{k+1}\|_2^2
&\le
\Bigl(1-\frac{3\alpha\mu}{4}\Bigr)\|e_k\|_2^2
+ 2\alpha\Bigl(\frac{4}{\mu}+\alpha\Bigr)
\bigl(\varepsilon_b(\delta)^2
+ 2\varepsilon_A(\delta)^2(\|w^*\|_2^2 + \|e_k\|_2^2)\bigr)\notag\\
&\le
\Bigl(1-\frac{3\alpha\mu}{4}
+ C_{\text{Thm}\tref{thm:finite},3}\varepsilon_A(\delta)^2\Bigr)\|e_k\|_2^2
+ C_{\text{Thm}\tref{thm:finite},4}
\bigl(\varepsilon_b(\delta)^2+\varepsilon_A(\delta)^2\bigr),
\end{align}
where
$C_{\text{Thm}\tref{thm:finite},3}
\doteq 4\alpha\bigl(\frac{4}{\mu}+\alpha\bigr)$,\quad
$C_{\text{Thm}\tref{thm:finite},4}
\doteq 2\alpha\bigl(\frac{4}{\mu}+\alpha\bigr)(1+2\|w^*\|_2^2)$.

By Lemma~\ref{lem:concentration},
\begin{equation}
\label{eq:thm-finite-epsA-rate}
\varepsilon_A(\delta)^2
\le C_{\tref{lem:concentration},3}^2
\frac{d+\log(4/\delta)}{n_{\mathrm{eff}}}.
\end{equation}
Let $C_{\text{Thm}\tref{thm:finite},1}
\doteq \frac{4C_{\text{Thm}\tref{thm:finite},3}C_{\tref{lem:concentration},3}^2}{\alpha\mu}$.
Under $n_{\mathrm{eff}}\ge C_{\text{Thm}\tref{thm:finite},1}(d+\log(4/\delta))$, it follows that
\begin{equation}
\label{eq:thm-finite-absorb}
C_{\text{Thm}\tref{thm:finite},3}\varepsilon_A(\delta)^2
\le \frac{\alpha\mu}{4}.
\end{equation}
Plugging \eqref{eq:thm-finite-absorb} into \eqref{eq:thm-finite-basic} yields
\begin{equation}
\label{eq:thm-finite-rec}
\|e_{k+1}\|_2^2
\le
\Bigl(1-\frac{\alpha\mu}{2}\Bigr)\|e_k\|_2^2
+ C_{\text{Thm}\tref{thm:finite},4}\varepsilon(\delta)^2.
\end{equation}
Iterating \eqref{eq:thm-finite-rec} yields
\begin{equation}
\label{eq:thm-finite-unroll}
\|e_k\|_2^2
\le
\Bigl(1-\frac{\alpha\mu}{2}\Bigr)^k\|e_0\|_2^2
+ \frac{2C_{\text{Thm}\tref{thm:finite},4}}{\alpha\mu}\varepsilon(\delta)^2.
\end{equation}
Since $\fL(w_k) = \|C^{-\frac12}Ae_k\|_2^2 \le \|C^{-\frac12}A\|_2^2\|e_k\|_2^2$
and $\|e_0\|_2^2 \le \fL(w_0)/\sigma_{\min}(C^{-\frac12}A)^2$, we obtain
\[
\fL(w_k) \le C_{\text{Thm}\tref{thm:finite},2}
\Bigl(\Bigl(1-\frac{\alpha\mu}{2}\Bigr)^k\fL(w_0)
+ \varepsilon(\delta)^2\Bigr),
\]
where $C_{\text{Thm}\tref{thm:finite},2} \doteq
\max\!\bigl(\|C^{-\frac12}A\|_2^2/\sigma_{\min}(C^{-\frac12}A)^2,\;
2C_{\text{Thm}\tref{thm:finite},4}/(\alpha\mu)\bigr)$,
completing the proof.
\end{proof}

\subsection{Proof of Corollary~\tref{cor:floor-depth}}
\label{proof:floor-depth}
\begin{proof}
Given \eqref{eq:finite-main},
since $\alpha\mu \in (0,1]$, we can use $\log(1-x)\le -x$ for $x\in(0,1)$ to obtain
\begin{equation}
\left(1-\frac{\alpha\mu}{2}\right)^k
=
\exp\left(k\log\left(1-\frac{\alpha\mu}{2}\right)\right)
\le
\exp\left(-\frac{\alpha\mu}{2}k\right).
\end{equation}
If $\mathcal L(w_0)\le \varepsilon(\delta)^2$, then $\mathcal L(w_k)\le 2C_{\text{Thm}\tref{thm:finite},2}\varepsilon(\delta)^2$ holds trivially.
Otherwise, whenever
\begin{equation}
\label{eq:cor-floor-k}
k
\ge
\frac{2}{\alpha\mu}
\log\left(
\frac{\mathcal{L}(w_0)}{\varepsilon(\delta)^2}
\right),
\end{equation}
we have
\begin{equation}
\left(1-\frac{\alpha\mu}{2}\right)^k \mathcal{L}(w_0)
\le
\varepsilon(\delta)^2.
\end{equation}
Plugging this bound into \eqref{eq:finite-main} yields
\begin{equation}
\mathcal{L}(w_k)
\le
2C_{\text{Thm}\tref{thm:finite},2}\varepsilon(\delta)^2,
\end{equation}
which completes the proof.
\end{proof}


\section{Proofs in Section~\tref{sec:emergence}}
\label{sec:proofsec6}
This section provides the proofs of Theorem~\tref{thm:emergence} and Corollary~\tref{cor:bridge}, together with the technical lemmas they rely on. We first introduce auxiliary quantities and a factorization of $\Delta_k$ that will be used throughout.

\paragraph{Factorization of $\Delta_k$.}
For any $\theta\in\Theta^{\mathrm{CoT}}$ and trajectory $\tau_n$, define the Jacobian factor
\begin{equation}
\label{eq:def-Gk}
G_k(\theta;\tau_n)
\doteq
\nabla_\theta w_k(\theta;\tau_n)\in\R[3d^2\times d].
\end{equation}
The chain rule gives, for all $j$,
\begin{equation}
\label{eq:grad-vk-Gk}
\nabla_\theta \hat v_k(S_j;\theta,\tau_n)
=
G_k(\theta;\tau_n) x_j.
\end{equation}
Using \eqref{eq:grad-vk-Gk} and the empirical moments $(\widehat A,\widehat b)$ in \eqref{eq:empirical-moments}, computed from $\tau_n$, we obtain the exact factorization
\begin{equation}
\label{eq:neu-Gk-res}
\Delta_k(\theta;\tau_n)
=
G_k(\theta;\tau_n)
\big(\widehat b-\widehat A w_k(\theta;\tau_n)\big).
\end{equation}

\paragraph{Empirical MSPBE.}
For any $w\in\mathbb{R}^d$, define the empirical MSPBE as
\begin{equation}
\label{eq:def-Lhat}
\widehat{\mathcal L}(w)
\doteq
\big\|C^{-\frac12}\big(\widehat b-\widehat A w\big)\big\|_2^2.
\end{equation}
Plugging \eqref{eq:neu-Gk-res} and \eqref{eq:def-Lhat} into \eqref{eq:EUN}, we obtain
\begin{equation}
\label{eq:neu-bound-local}
J_k(\theta;\tau_n)
\le
\bigl\|G_k(\theta;\tau_n) C^{\frac12}\bigr\|_{2}
\sqrt{\widehat{\mathcal L}\big(w_k(\theta;\tau_n)\big)}.
\end{equation}
Applying \eqref{eq:neu-bound-local} to each trajectory in $\mathcal D$ and averaging gives
\begin{equation}
\label{eq:dataset-two-terms}
J_k(\theta_*;\mathcal D)
\le\frac1{N_{\mathrm{train}}}\sum_{\tau_n\in\mathcal D}
\norm{G_k(\theta_*;\tau_n) C^{\frac12}}_{2}
\sqrt{\widehat{\mathcal L}\qty(w_k(\theta_*;\tau_n))}.
\end{equation}
Therefore, it suffices to control the two factors on the right-hand side of \eqref{eq:dataset-two-terms} for each trajectory $\tau_n$.

\paragraph{Empirical curvature constants.}
Define $\widehat\mu$ and $\widehat L$ by taking \eqref{eq:muL} and replacing $(A,b)$ with $(\widehat A,\widehat b)$. The remaining analysis relies on well-conditioned empirical geometry, which requires $\widehat\mu>0$. Lemma~\tref{lem:empirical-mu} establishes a sample-size threshold under which $\widehat\mu>0$ on $\mathcal E_\delta$.

\paragraph{Dataset-level event and constants.}
We apply Lemma~\tref{lem:concentration} to each trajectory $\tau_n^{(i)} \in \mathcal D$ with $\delta' \doteq \delta/N_{\mathrm{train}}$. Let $\mathcal E_{\delta'}^{(i)}$ denote the corresponding event for $\tau_n^{(i)}$, and define the dataset-level event
\begin{equation}
\label{eq:eventD}
\mathcal E_{\delta}(\mathcal D)
\doteq
\bigcap_{i\in[N_{\mathrm{train}}]} \mathcal E_{\delta'}^{(i)}.
\end{equation}
By a union bound, $\mathbb P\!\big(\mathcal E_{\delta}(\mathcal D)\big)\ge 1-\delta$. We further define the dataset-level parameters
\[
\widehat\mu_{\mathcal D} \doteq \min_{i\in[N_{\mathrm{train}}]} \widehat\mu^{(i)},
\quad
\widehat L_{\mathcal D} \doteq \max_{i\in[N_{\mathrm{train}}]} \widehat L^{(i)}.
\]
\subsection{Proof of Lemma~\tref{lem:empirical-mu}}
\label{proof:empirical-mu}
\begin{lemma}
\label{lem:empirical-mu}
Fix $\delta\in(0,1)$. There exists a constant $C_\tref{lem:empirical-mu}$ such that on $\mathcal E_\delta$, if
\begin{equation}
\label{eq:neff-threshold-mu}
n_{\mathrm{eff}}
\ \ge\
C_\tref{lem:empirical-mu}
\Bigl(d+\log\frac{4}{\delta}\Bigr),
\end{equation}
then $\widehat\mu$ satisfies
\begin{equation}
\label{eq:muhat-lb}
\widehat\mu\ge\frac{\mu}{2} >0,
\quad \widehat L \leq 4L.
\end{equation}
\end{lemma}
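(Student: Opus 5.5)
The plan is a direct perturbation argument built on Lemma~\ref{lem:concentration}. Write $\widehat A = A + \Delta_A$ with $\|\Delta_A\|_2 = \varepsilon_A \le \varepsilon_A(\delta)$ on $\mathcal E_\delta$. For the symmetric parts, $\widehat H \doteq \tfrac12(\widehat A+\widehat A^\top) = H + \tfrac12(\Delta_A+\Delta_A^\top)$. We have $\|\tfrac12(\Delta_A+\Delta_A^\top)\|_2 \le \varepsilon_A$, so Weyl's inequality gives
\[
\widehat\mu=\lambda_{\min}(\widehat H)\ \ge\ \mu-\varepsilon_A(\delta).
\]
For the Lipschitz constant, the triangle inequality gives $\|\widehat A\|_2 \le \|A\|_2 + \varepsilon_A(\delta) = \sqrt L + \varepsilon_A(\delta)$. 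Hence $\widehat L = \|\widehat A\|_2^2 \le (\sqrt L+\varepsilon_A(\delta))^2$.

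It therefore suffices to ensure $\varepsilon_A(\delta) \le \min\{\mu/2, \sqrt L\}$. The first bound gives $\widehat\mu \ge \mu/2$. The second gives $\widehat L \le (2\sqrt L)^2 = 4L$. Positivity $\mu/2>0$ comes from Lemma~\ref{lem:mu-positive}, which gives $\mu \ge (1-\gamma)\lambda_{\min}(C) > 0$.

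Lemma~\ref{lem:concentration} bounds $\varepsilon_A(\delta)$ by $C_{\tref{lem:concentration},3}\sqrt{(d+\log(4/\delta))/n_{\mathrm{eff}}}$. So the requirement holds once
\[
n_{\mathrm{eff}} \ \ge\ \frac{C_{\tref{lem:concentration},3}^2}{\min\{\mu^2/4,\,L\}}\Bigl(d+\log\tfrac4\delta\Bigr).
\]
Note that $\mu \le \|A\|_2 = \sqrt L$, since $v^\top H v = v^\top A v \le \|A\|_2\|v\|^2$. This makes the minimum equal to $\mu^2/4$, so the natural choice is
\[
C_\tref{lem:empirical-mu} \doteq \max\Bigl\{1,\ \frac{4C_{\tref{lem:concentration},3}^2}{\mu^2}\Bigr\}.
\]
The factor $\max\{1,\cdot\}$ guarantees the standing condition $n_{\mathrm{eff}}\ge d+\log(4/\delta)$ of Lemma~\ref{lem:concentration}, under which its bound is valid on $\mathcal E_\delta$.

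No step is genuinely hard. The only point needing care is bookkeeping: the event $\mathcal E_\delta$ must be the one from Lemma~\ref{lem:concentration}, and the constant $C_\tref{lem:empirical-mu}$ may depend on $\mu$, $L$, $C_x$ and $\gamma$ but not on $n$ or $\delta$. The Weyl step uses only that the symmetric part of a perturbation has spectral norm at most that of the perturbation, which follows from the triangle inequality together with $\|\Delta_A^\top\|_2=\|\Delta_A\|_2$.
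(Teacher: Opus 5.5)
Your proposal is correct and follows the paper's own proof: you bound $\widehat\mu \ge \mu - \varepsilon_A$ by Weyl's inequality and $\|\widehat A\|_2 \le \sqrt L + \varepsilon_A$ by the triangle inequality, then use Lemma~\ref{lem:concentration} to turn $\varepsilon_A \le \min\{\mu/2,\sqrt L\}$ into the sample-size threshold. Your two refinements are sound bookkeeping rather than a different route: the observation $\mu \le \sqrt L$ makes the paper's second term $C_{\tref{lem:concentration},3}^2/L$ in the constant redundant, and the $\max\{1,\cdot\}$ makes explicit the standing hypothesis $n_{\mathrm{eff}}\ge d+\log(4/\delta)$ of Lemma~\ref{lem:concentration}, which the paper leaves implicit.
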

\begin{proof}
Recall the population quantities
\[
H \doteq \tfrac12(A+A^\top),
\quad
\mu \doteq \lambda_{\min}(H).
\]
Define the empirical counterparts by replacing $A$ with $\widehat A$,
\begin{equation}
\label{eq:empHmu}
    \widehat H \doteq \tfrac12(\widehat A+\widehat A^\top),
\quad
\widehat\mu \doteq \lambda_{\min}(\widehat H),
\quad
\widehat L \doteq \|\widehat A\|_2^2.
\end{equation}
We apply Weyl's inequality for symmetric matrices and get
\begin{align}
    \widehat\mu
&= \lambda_{\min}\bigl(H + (\widehat H - H)\bigr)\notag\\
&\ge \lambda_{\min}(H) - \|\widehat H - H\|_2\notag\\
&= \mu - \Bigl\|\tfrac12\bigl((\widehat A-A)+(\widehat A-A)^\top\bigr)\Bigr\|_2\notag\\
&\ge \mu - \|\widehat A - A\|_2\notag\\
&= \mu - \varepsilon_A \quad\text{(By \eqref{eq:varepsilon_ab})}\notag\\
&\ge \mu - C_{\tref{lem:concentration},3}\sqrt{\frac{d+\log(4/\delta)}{n_{\mathrm{eff}}}}
\quad \text{(By Lemma~\ref{lem:concentration})}.
\end{align}
Thus, whenever
$n_{\mathrm{eff}} \ge C_{\tref{lem:empirical-mu},1}\bigl(d+\log(4/\delta)\bigr)$,
we obtain $\widehat\mu \ge \frac{\mu}{2} > 0$.

For $\widehat L$, by the triangle inequality, if $n_{\mathrm{eff}} \geq \frac{C_{\tref{lem:concentration},3}^2}{L}\bigl(d+\log(4/\delta)\bigr)$, we have
\begin{align}
    \|\widehat A\|_2 &\leq \|A\|_2 + \varepsilon_A \\
    &\leq \sqrt{L} + C_{\tref{lem:concentration},3}\sqrt{\frac{d+\log(4/\delta)}{n_{\mathrm{eff}}}} \\
    &\leq 2\sqrt{L},
\end{align}
Hence $\widehat L = \|\widehat A\|_2^2 \leq 4L$.
Thus, choosing
$C_\tref{lem:empirical-mu} \doteq \max\!\left(\frac{4\,C_{\tref{lem:concentration},3}^2}{\mu^2},\;\frac{C_{\tref{lem:concentration},3}^2}{L}\right)$
completes the proof.

\end{proof}

\subsection{Proof of Lemma~\tref{lem:residual-Lhat}}
\label{proof:Lhat-closed}
\begin{lemma}
\label{lem:residual-Lhat}
For any trajectory $\tau_n$ and stepsize $\alpha \in (0,\mu/(8L)]$. On $\mathcal E_\delta$, if \eqref{eq:neff-threshold-mu} holds, there exists a constant $C_\tref{lem:residual-Lhat}$ such that for all $k\geq 0$,
\begin{equation}
\label{eq:empirical_contraction}
\widehat{\mathcal L}(w_k\qty(\theta_*;\tau_n)) \le C_\tref{lem:residual-Lhat}(1 - \alpha \widehat{\mu})^k \widehat{\mathcal L}(w_0\qty(\theta_*; \tau_n)).
\end{equation}
\end{lemma}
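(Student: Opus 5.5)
We mirror the proof of Proposition~\ref{prop:expect}, with the population pair $(A,b)$ replaced by the empirical pair $(\widehat A,\widehat b)$.

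\textbf{Step 1: reduce to a linear recursion.} On $\mathcal E_\delta$ with \eqref{eq:neff-threshold-mu}, Lemma~\ref{lem:empirical-mu} gives $\widehat\mu\ge\mu/2>0$. Hence $\widehat A$ is invertible, since $v^\top\widehat A v=v^\top\widehat H v\ge\widehat\mu\|v\|_2^2$. So $\widehat w_*\doteq\widehat A^{-1}\widehat b$ is well defined. By Theorem~\ref{thm:onestep}, the iterates $w_k\doteq w_k(\theta_*;\tau_n)$ follow \eqref{eq:empirical-recursion}. Setting $\widehat e_k\doteq w_k-\widehat w_*$, this gives $\widehat e_{k+1}=(I-\alpha\widehat A)\widehat e_k$. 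The residual is then $\widehat b-\widehat A w_k=-\widehat A\widehat e_k$, so
\[
\widehat{\mathcal L}(w_k)=\|C^{-\frac12}\widehat A\,\widehat e_k\|_2^2 .
\]

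\textbf{Step 2: contraction.} Expanding the square as in \eqref{eq:eupdate} gives
\[
\|\widehat e_{k+1}\|_2^2\le(1-2\alpha\widehat\mu+\alpha^2\widehat L)\|\widehat e_k\|_2^2 .
\]
This is the one place where the stepsize condition matters. Lemma~\ref{lem:empirical-mu} gives $\widehat L\le4L$ and $\widehat\mu\ge\mu/2$. With $\alpha\le\mu/(8L)$ we get
\[
\alpha^2\widehat L\le 4\alpha^2L\le\alpha\mu/2\le\alpha\widehat\mu .
\]
Therefore $1-2\alpha\widehat\mu+\alpha^2\widehat L\le1-\alpha\widehat\mu$, and this factor is nonnegative because $\alpha\widehat\mu\le\alpha\|\widehat A\|_2\le 2\alpha\sqrt L\le\mu/(4\sqrt L)\le1/4$. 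Here we used $\mu\le\|A\|_2=\sqrt L$. Iterating yields $\|\widehat e_k\|_2^2\le(1-\alpha\widehat\mu)^k\|\widehat e_0\|_2^2$.

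\textbf{Step 3: convert back to $\widehat{\mathcal L}$.} We sandwich the error using extreme singular values:
\[
\widehat{\mathcal L}(w_k)\le\|C^{-\frac12}\widehat A\|_2^2\|\widehat e_k\|_2^2,
\qquad
\|\widehat e_0\|_2^2\le\widehat{\mathcal L}(w_0)/\sigma_{\min}(C^{-\frac12}\widehat A)^2 .
\]
Combining these with Step 2 gives the claim with $C_{\ref{lem:residual-Lhat}}=\|C^{-\frac12}\widehat A\|_2^2/\sigma_{\min}(C^{-\frac12}\widehat A)^2$.

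\textbf{Main obstacle: making $C_{\ref{lem:residual-Lhat}}$ trajectory-independent.} The constant must not depend on $\tau_n$, so that it can serve uniformly over $\mathcal D$ in Theorem~\ref{thm:emergence}.
\begin{itemize}
\item \emph{Numerator.} $\|C^{-\frac12}\widehat A\|_2^2\le\lambda_{\min}(C)^{-1}\cdot4L$.
\item \emph{Denominator.} $\sigma_{\min}(\widehat A)\ge\widehat\mu\ge\mu/2$, because $\|\widehat Av\|_2\|v\|_2\ge v^\top\widehat H v$. This gives $\sigma_{\min}(C^{-\frac12}\widehat A)^2\ge\lambda_{\max}(C)^{-1}\mu^2/4$.
\end{itemize}
Together these give the deterministic constant
\[
C_{\ref{lem:residual-Lhat}}\doteq 16L\,\lambda_{\max}(C)/(\mu^2\lambda_{\min}(C)) .
\]
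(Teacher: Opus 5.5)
Your proposal is correct and matches the paper's proof: the same error recursion $\widehat e_{k+1}=(I-\alpha\widehat A)\widehat e_k$, the same contraction from $\alpha\le\mu/(8L)\le\widehat\mu/\widehat L$ via Lemma~\ref{lem:empirical-mu}, and the same constant $\|C^{-\frac12}\widehat A\|_2^2/\sigma_{\min}(C^{-\frac12}\widehat A)^2$. Your extra bound of this constant by $16L\lambda_{\max}(C)/(\mu^2\lambda_{\min}(C))$ is a valid strengthening but not strictly needed, since the paper handles the trajectory dependence in Theorem~\ref{thm:emergence} by maximizing over the finitely many trajectories in $\mathcal D$.
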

\begin{proof}
Since $\alpha \leq \mu/(8L)$ and by Lemma~\ref{lem:empirical-mu}, 
$\widehat\mu \geq \mu/2$ and $\widehat L \leq 4L$ on $\cE_\delta$, 
we have $\alpha \leq \widehat\mu/\widehat L$.
Fix a trajectory $\tau_n$ and work throughout this proof with the empirical moments
$(\widehat A,\widehat b)$ computed from $\tau_n$.
We also fix $\theta=\theta_*$, and for brevity write
\[
w_k \doteq w_k(\theta_*;\tau_n),\qquad k\ge 0.
\]
With \eqref{eq:neff-threshold-mu}, we have $\widehat\mu>0$ by Lemma~\tref{lem:empirical-mu}.
Let $\widehat w^*$ denote the unique solution to $\widehat A w = \widehat b$.
Define $\widehat e_k \doteq w_k - \widehat w^*$.
By the empirical TD update in \eqref{eq:empirical-recursion}, subtracting $\widehat w^*$ from both sides gives
\[
\widehat e_{k+1} = (I - \alpha \widehat A)\widehat e_k.
\]
By the same argument as \eqref{eq:one_step_contract} with $A$ replaced by $\widehat A$,
for $\alpha \in (0, \widehat\mu/\widehat L]$ we have
\begin{equation}
\label{eq:ehat-contract}
\|\widehat e_{k+1}\|_2^2 \le (1 - \alpha\widehat\mu)\|\widehat e_k\|_2^2.
\end{equation}
Iterating over $k$ gives
\[
\|\widehat e_k\|_2^2 \le (1 - \alpha\widehat\mu)^k \|\widehat e_0\|_2^2.
\]
Since $\widehat{\fL}(w_k) = \|C^{-\frac12}\widehat A\, \widehat e_k\|_2^2$,
by the same argument as in the proof of Proposition~\tref{prop:expect},
\[
\widehat{\fL}(w_k) \le C_\tref{lem:residual-Lhat}(1 - \alpha\widehat\mu)^k\, \widehat{\fL}(w_0),
\]
where $C_\tref{lem:residual-Lhat} \doteq \|C^{-\frac12}\widehat A\|_2^2 / \sigma_{\min}(C^{-\frac12}\widehat A)^2$,
which completes the proof.
\end{proof}

\begin{lemma}[Jacobian of the block-sparse parameterization]
\label{lem:pqderiv}
Under the block-sparse parameterization \eqref{eq:Theta-CoT}, we identify
$\theta\in\R[3d^2]$ with the free blocks $(P_1,Q_1,Q_2)$.
Let $D_\theta P$ and $D_\theta Q$ denote the Fr\'echet derivatives with respect to $\theta$.
For any direction $u\in\R[3d^2]$, define
\begin{equation}
\label{eq:deltapq}
    \Delta P(u)\doteq D_\theta P[u],\qquad \Delta Q(u)\doteq D_\theta Q[u].
\end{equation}
Then for all $\|u\|_2=1$, $\|\Delta P(u)\|_2\le 1$ and $\|\Delta Q(u)\|_2\le 1$.
\end{lemma}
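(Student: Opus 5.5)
The map $\theta\mapsto(P,Q)$ on $\Theta^{\mathrm{CoT}}$ is affine: $P$ and $Q$ are constant matrices plus linear placements of the free blocks $(P_1,Q_1,Q_2)$. So the plan is to write the Fréchet derivatives explicitly and bound their spectral norms by Frobenius norms. Write a direction as $u=(U_P,U_1,U_2)$, with $U_P,U_1,U_2\in\R[d\times d]$ and $\|u\|_2^2=\|U_P\|_F^2+\|U_1\|_F^2+\|U_2\|_F^2$, using the vectorization convention. Then
\[
\Delta P(u)=\begin{bmatrix} 0_{(2d+1)\times d} & 0_{(2d+1)\times(2d+1)}\\ U_P & 0_{d\times(2d+1)}\end{bmatrix},\qquad
\Delta Q(u)=\begin{bmatrix} 0_{2d\times 2d} & 0_{2d\times 1} & \tilde U\\ 0_{1\times 2d} & 0 & 0_{1\times d}\\ 0_{d\times 2d} & 0_{d\times 1} & 0_{d\times d}\end{bmatrix},\quad \tilde U\doteq\begin{bmatrix}U_1\\U_2\end{bmatrix}.
\]
The constant entry $1$ in $Q$ and all fixed zero blocks contribute nothing to the derivative.

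Next, I would bound the spectral norms using two facts. First, embedding a block into a larger zero matrix preserves both its singular values and its Frobenius norm. Second, $\|M\|_2\le\|M\|_F$. Together these give
\[
\|\Delta P(u)\|_2=\|U_P\|_2\le\|U_P\|_F\le\|u\|_2=1,
\]
\[
\|\Delta Q(u)\|_2=\|\tilde U\|_2\le\|\tilde U\|_F=\sqrt{\|U_1\|_F^2+\|U_2\|_F^2}\le\|u\|_2=1.
\]

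There is no real obstacle here; the main point of care is bookkeeping. One must state clearly that $\theta$ is identified with the vectorized entries of $(P_1,Q_1,Q_2)$, so that the Euclidean norm of $u$ equals the combined Frobenius norm of the three direction blocks. One must also note that the fixed entries of $P$ and $Q$, including the constant $1$, are $\theta$-independent, so affineness makes $D_\theta P[u]$ and $D_\theta Q[u]$ independent of the base point $\theta$.
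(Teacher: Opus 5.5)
Your proposal is correct and matches the paper's proof essentially step for step. Both arguments use that $\theta\mapsto(P,Q)$ depends on $\theta$ only through the free blocks, so $\Delta P(u)$ and $\Delta Q(u)$ are the direction blocks $(U_P,U_1,U_2)$ embedded in zero matrices, and then bound the spectral norm by the Frobenius norm, which equals the norm of the corresponding components of $u$ and is therefore at most $\|u\|_2=1$. Your description of the map as affine rather than linear, because of the constant entry $1$ in $Q$, is slightly more precise than the paper's wording.
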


\begin{proof}
Under \eqref{eq:Theta-CoT}, the mapping from the free blocks $(P_1,Q_1,Q_2)$ to $(P,Q)$ is linear.
Let $\mathrm{unvec}:\R[d^2]\to\R[d\times d]$ denote the inverse of $\mathrm{vec}(\cdot)$,
i.e., $\mathrm{unvec}(x)$ reshapes a vector $x\in\R[d^2]$ into a $d\times d$ matrix.
Write $u=(u_P,u_{Q_1},u_{Q_2})$ with each component in $\R[d^2]$, and define
\[
U_{P_1}\doteq \mathrm{unvec}(u_P),\quad
U_{Q_1}\doteq \mathrm{unvec}(u_{Q_1}),\quad
U_{Q_2}\doteq \mathrm{unvec}(u_{Q_2}).
\]
By \eqref{eq:Theta-CoT}, $P$ depends on $\theta$ only through the free block $P_1$, while all other blocks are fixed constants.
Hence $D_\theta P[u]$ vanishes on all fixed entries and equals $U_{P_1}$ on the free block.
Similarly, $D_\theta Q[u]$ vanishes on all fixed entries and equals $(U_{Q_1},U_{Q_2})$ on the two free blocks.
Therefore,
\[
\|\Delta P(u)\|_2 \le \|\Delta P(u)\|_F = \|U_{P_1}\|_F = \|u_P\|_2 \le \|u\|_2,
\]
and
\[
\|\Delta Q(u)\|_2 \le \|\Delta Q(u)\|_F
= \sqrt{\|U_{Q_1}\|_F^2+\|U_{Q_2}\|_F^2}
= \sqrt{\|u_{Q_1}\|_2^2+\|u_{Q_2}\|_2^2}
\le \|u\|_2.
\]
Taking $\|u\|_2=1$ yields the claim.
\end{proof}

\subsection{Proof of Lemma~\tref{lem:pqstarbound}}
\label{proof:pqstarbound}
\begin{lemma}[Norm of $P$]
\label{lem:pqstarbound}
For $P$ defined in \eqref{eq:para_construct}, we have $\norm{P}_2=\alpha$.
\end{lemma}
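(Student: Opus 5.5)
The plan is to compute $P^\top P$ directly from the block structure in \eqref{eq:para_construct} and read off its largest eigenvalue. Write $P = E\,(\alpha I_d)\,F^\top$, where $E\in\R[(3d+1)\times d]$ is the column selector onto the last $d$ coordinates (rows $2d+2,\dots,3d+1$) and $F\in\R[(3d+1)\times d]$ is the column selector onto the first $d$ coordinates. Both $E$ and $F$ have orthonormal columns, that is, $E^\top E = F^\top F = I_d$.

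From this factorization,
\begin{equation}
P^\top P = \alpha^2 F E^\top E F^\top = \alpha^2 F F^\top.
\end{equation}
The matrix $FF^\top$ is the orthogonal projection onto the first $d$ coordinates. In block form it is $\mathrm{diag}(I_d, 0_{(2d+1)\times(2d+1)})$, so its eigenvalues are $1$ (with multiplicity $d$) and $0$.

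It follows that $\lambda_{\max}(P^\top P)=\alpha^2$, and therefore $\|P\|_2=\sqrt{\lambda_{\max}(P^\top P)}=\alpha$, using $\alpha>0$. As a sanity check, the same conclusion follows from the definition of the spectral norm. For any $z=(z_1,z_2)$ with $z_1\in\R[d]$, we have $\|Pz\|_2=\alpha\|z_1\|_2\le\alpha\|z\|_2$, and equality holds at $z=(e_1,\0)$.

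There is no real obstacle. The only point needing care is the index bookkeeping, namely that the nonzero block $\alpha I_d$ sits in the last $d$ rows and the first $d$ columns, so that $E$ and $F$ are genuine coordinate selectors with orthonormal columns.
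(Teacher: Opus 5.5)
Your proof is correct and follows essentially the paper's route. The paper simply notes that $P$ has a single nonzero block, so $\|P\|_2=\|P_1\|_2=\|\alpha I_d\|_2=\alpha$. Your factorization $P=E(\alpha I_d)F^\top$, with $E$ and $F$ having orthonormal columns, together with the computation $P^\top P=\alpha^2FF^\top$, just makes that block-embedding step explicit.
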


\begin{proof}
Under \eqref{eq:Theta-CoT}, the matrix $P$ has a single nonzero block $P_1\in\R[d\times d]$.
Hence,
\begin{equation}
\label{eq:Pnorm-eq-P1}
  \|P\|_2=\|P_1\|_2.
\end{equation}
Moreover, \eqref{eq:para_construct} gives $P_1=\alpha I_d$.
Therefore $\|P\|_2=\|\alpha I_d\|_2=\alpha$.
\end{proof}

\subsection{Proof of Lemma~\tref{lem:wbound}}
\label{proof:wbound}
\begin{lemma}[Uniform bound on the empirical iterates]
\label{lem:wbound}
On $\mathcal E_\delta$, if \eqref{eq:neff-threshold-mu} holds, then there exists a constant $C_\tref{lem:wbound}>0$ such that
for all $k\ge 0$, the iterates $\{w_k\}$ generated by \eqref{eq:empirical-recursion} satisfy
\[
\|w_k\|_2 \le C_\tref{lem:wbound}.
\]
\end{lemma}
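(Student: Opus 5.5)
We bound $\|w_k\|_2$ by comparing each iterate with the empirical fixed point. On $\mathcal E_\delta$, under \eqref{eq:neff-threshold-mu}, Lemma~\ref{lem:empirical-mu} gives $\widehat\mu\ge\mu/2>0$ and $\widehat L\le 4L$. In particular $\widehat A$ is invertible: $v^\top\widehat A v=v^\top\widehat H v\ge \widehat\mu\|v\|_2^2$, so $\widehat A v=0$ forces $v=0$. Hence $\widehat w_*\doteq\widehat A^{-1}\widehat b$ is well defined.

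\textbf{Step 1: contraction toward $\widehat w_*$.} With $\widehat e_k\doteq w_k-\widehat w_*$, the recursion \eqref{eq:empirical-recursion} gives $\widehat e_{k+1}=(I-\alpha\widehat A)\widehat e_k$. Expanding the square as in \eqref{eq:one_step_contract}, with $(A,\mu,L)$ replaced by $(\widehat A,\widehat\mu,\widehat L)$, yields
\[
\|\widehat e_{k+1}\|_2^2\le(1-2\alpha\widehat\mu+\alpha^2\widehat L)\|\widehat e_k\|_2^2 .
\]
This factor is at most $1$ whenever $\alpha\le 2\widehat\mu/\widehat L$. The threshold used in Lemma~\ref{lem:residual-Lhat}, $\alpha\le\mu/(8L)\le\widehat\mu/\widehat L$, is sufficient. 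It follows that $\|\widehat e_k\|_2\le\|\widehat e_0\|_2=\|\widehat w_*\|_2$, using $w_0=\0$, and therefore
\[
\|w_k\|_2\le\|\widehat w_*\|_2+\|\widehat e_k\|_2\le 2\|\widehat w_*\|_2 .
\]

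\textbf{Step 2: bounding $\|\widehat w_*\|_2$ by a constant independent of the trajectory.} We use the coercivity of $\widehat H$:
\[
\widehat\mu\|\widehat w_*\|_2^2\le \widehat w_*^\top\widehat A\widehat w_*=\widehat w_*^\top\widehat b\le\|\widehat w_*\|_2\|\widehat b\|_2,
\]
so $\|\widehat w_*\|_2\le\|\widehat b\|_2/\widehat\mu\le 2\|\widehat b\|_2/\mu$. There are two ways to control $\|\widehat b\|_2$. Deterministically, $\|\widehat b\|_2\le C_RC_x$ using the bounded features and rewards assumed in the proof of Lemma~\ref{lem:concentration}. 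Alternatively, on $\mathcal E_\delta$, $\|\widehat b\|_2\le\|b\|_2+\varepsilon_b(\delta)\le\|b\|_2+C_{\tref{lem:concentration},3}$, since \eqref{eq:neff-threshold-mu} makes the square-root factor at most $1$ provided $C_\tref{lem:empirical-mu}\ge1$. Either route lets us take $C_\tref{lem:wbound}\doteq 4C_RC_x/\mu$, or the analogous expression with $\|b\|_2+C_{\tref{lem:concentration},3}$.

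\textbf{Main obstacle.} The only delicate point is the stepsize: it must make $I-\alpha\widehat A$ nonexpansive \emph{uniformly over the trajectory}. This is exactly why we rely on the empirical curvature bounds of Lemma~\ref{lem:empirical-mu} instead of the population constants $(\mu,L)$. If the lemma is meant to hold for any $\alpha\in(0,\mu/L]$ rather than $(0,\mu/(8L)]$, the contraction argument breaks. In that case we would instead iterate the perturbed recursion \eqref{eq:thm-finite-rec} of Theorem~\ref{thm:finite}, which gives $\|w_k-w_*\|_2^2\le\|w_*\|_2^2+2C_{\text{Thm}\tref{thm:finite},4}\varepsilon(\delta)^2/(\alpha\mu)$, again a constant bound.
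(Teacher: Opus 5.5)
Your proof is correct and follows the paper's argument. You contract $\widehat e_k = w_k-\widehat w_*$ from $w_0=\0$ to get $\|w_k\|_2\le 2\|\widehat w_*\|_2$, and then use coercivity of $\widehat H$ to get $\|\widehat w_*\|_2\le\|\widehat b\|_2/\widehat\mu\le C_RC_x/\widehat\mu$. Your two clarifications are genuine: using $\widehat\mu\ge\mu/2$ makes the constant trajectory-independent, and you state explicitly that $\alpha\le\mu/(8L)$ is needed, which the paper uses only implicitly through Lemma~\ref{lem:residual-Lhat}. One caveat concerns your fallback via \eqref{eq:thm-finite-rec}: it would require the larger sample-size threshold of Theorem~\ref{thm:finite} rather than \eqref{eq:neff-threshold-mu}.
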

\begin{proof}
Let $\widehat w_*$ be the unique solution to $\widehat A w=\widehat b$.
On $\mathcal E_\delta$, if \eqref{eq:neff-threshold-mu} holds, then by Lemma~\tref{lem:empirical-mu} we have $\widehat\mu \ge \frac{\mu}{2} > 0$.
Let $\widehat e_k \doteq w_k - \widehat w_*$.
By Lemma~\tref{lem:residual-Lhat}, $\|\widehat e_k\|_2 \le \|\widehat e_0\|_2$ for all $k \ge 0$.
Since $w_0 = \mathbf{0}$, we have $\widehat e_0 = -\widehat w_*$, and hence
\[
\sup_{k\ge 0}\|w_k\|_2 \le \|\widehat w_*\|_2 + \|\widehat e_0\|_2 = 2\|\widehat w_*\|_2.
\]
It remains to bound $\|\widehat w_*\|_2 = \|\widehat A^{-1}\widehat b\|_2$.
Since $\widehat\mu > 0$, $\widehat A$ is invertible and
$\|\widehat A^{-1}\|_2 \le \widehat\mu^{-1}$ (as $v^\top \widehat A v \ge \widehat\mu\|v\|_2^2$ for all $v$).
Therefore,
\begin{align}
\sup_{k\ge 0}\|w_k\|_2
&\le 2\widehat\mu^{-1}\|\widehat b\|_2\notag\\
&\le \frac{2}{\widehat\mu}\cdot\frac{1}{n}\sum_{t=0}^{n-1}|R_{t+1}|\|x_t\|_2\notag\\
&\le \frac{2C_R C_x}{\widehat\mu}.
\end{align}
Choosing $C_\tref{lem:wbound} \doteq \frac{2C_R C_x}{\widehat\mu}$ completes the proof.
\end{proof}

\subsection{Proof of Lemma~\tref{lem:sensitivity-Gk}}
\label{proof:sensitivity-Gk}
\begin{lemma}
\label{lem:sensitivity-Gk}
Fix any trajectory $\tau_n$ and stepsize $\alpha\in(0,\mu/(8L)]$. On $\mathcal E_\delta$, if \eqref{eq:neff-threshold-mu} holds, then there exists a constant $C_{\tref{lem:sensitivity-Gk}}\ge 0$ such that for all $k\ge 0$,
\begin{equation}
\label{eq:Gk-bound-uniform}
\bigl\|G_k(\theta_*;\tau_n) C^{\frac12}\bigr\|_2
\ \le\
C_{\tref{lem:sensitivity-Gk}}.
\end{equation}
\end{lemma}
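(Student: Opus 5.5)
Since $\|C^{\frac12}\|_2$ is a fixed constant, it suffices to bound $\|G_k(\theta_*;\tau_n)\|_2$ uniformly in $k$. I will derive a linear recursion for the Jacobian $G_k=\nabla_\theta w_k$ by differentiating the CoT recursion. For general $\theta\in\Theta^{\mathrm{CoT}}$, the last-token formula \eqref{eq:last_token_update_sum} gives
\[
w_{k+1}(\theta)=w_k(\theta)+\frac1n\sum_{j=0}^{n-1}P_1x_j\bigl(R_{j+1}+\gamma x_{j+1}^\top Q_2 w_k(\theta)+x_j^\top Q_1 w_k(\theta)\bigr).
\]
The buffer columns contribute nothing because $Pz_{n+i}=\0$ holds for every $\theta$ in the block-sparse class. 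Writing $\widehat\Sigma\doteq\frac1n\sum_j x_jx_j^\top$ and $\widehat\Sigma'\doteq\frac1n\sum_j x_jx_{j+1}^\top$, this reads
\[
w_{k+1}=w_k+P_1\bigl(\widehat b+(\widehat\Sigma Q_1+\gamma\widehat\Sigma' Q_2)w_k\bigr),
\]
which at $\theta_*$ reduces to \eqref{eq:empirical-recursion}.

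Differentiating in a unit direction $u$ and using Lemma~\ref{lem:pqderiv} for $\Delta P(u)$ and $\Delta Q(u)$, the directional derivative $g_k(u)\doteq D_\theta w_k[u]$ satisfies, at $\theta_*$,
\[
g_{k+1}(u)=(I-\alpha\widehat A)g_k(u)+r_k(u),
\]
where
\[
r_k(u)=U_{P_1}\bigl(\widehat b-\widehat A w_k\bigr)+\alpha\bigl(\widehat\Sigma U_{Q_1}+\gamma\widehat\Sigma' U_{Q_2}\bigr)w_k .
\]
Also $g_0=0$, since $w_0=\0$ does not depend on $\theta$. The norm bounds $\|U_{\cdot}\|_2\le1$ come from Lemma~\ref{lem:pqderiv}, $\|\widehat\Sigma\|_2,\|\widehat\Sigma'\|_2\le C_x^2$ follow from bounded features, and $\|w_k\|_2\le C_{\tref{lem:wbound}}$ comes from Lemma~\ref{lem:wbound}. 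Together these give a uniform bound
\[
\|r_k(u)\|_2\le \|\widehat b-\widehat A w_k\|_2+\alpha(1+\gamma)C_x^2C_{\tref{lem:wbound}} .
\]
The residual satisfies $\|\widehat b-\widehat Aw_k\|_2=\|\widehat A\widehat e_k\|_2\le 2\sqrt L\,\|\widehat e_0\|_2$, using Lemma~\ref{lem:empirical-mu} and the proof of Lemma~\ref{lem:residual-Lhat}. Hence $\sup_k\|r_k(u)\|_2\le R_0$ for a constant $R_0$.

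Next I unroll the recursion. By the contraction \eqref{eq:ehat-contract}, valid because $\alpha\le\widehat\mu/\widehat L$ on $\mathcal E_\delta$ under \eqref{eq:neff-threshold-mu}, we have $\|(I-\alpha\widehat A)\|_2\le\sqrt{1-\alpha\widehat\mu}\le\sqrt{1-\alpha\mu/2}$. Therefore
\[
\|g_k(u)\|_2\le R_0\sum_{i\ge0}(1-\alpha\mu/2)^{i/2}\le \frac{R_0}{1-\sqrt{1-\alpha\mu/2}}\le\frac{4R_0}{\alpha\mu}.
\]
This bound is uniform in $k$ and in unit $u$, so $\|G_k(\theta_*;\tau_n)\|_2\le 4R_0/(\alpha\mu)$. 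Multiplying by $\|C^{\frac12}\|_2$ gives $C_{\tref{lem:sensitivity-Gk}}\doteq 4R_0\|C^{\frac12}\|_2/(\alpha\mu)$.

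I expect the main obstacle to be the first step: correctly deriving the general-$\theta$ update and its derivative. In particular, I need to check that the buffer tokens contribute nothing for all $\theta$ in $\Theta^{\mathrm{CoT}}$, not just at $\theta_*$. Otherwise extra terms involving $w_0,\dots,w_k$ would appear in the Jacobian recursion, and its unrolling would need a more careful summation. Since the top $d$ rows of every buffer column are zero and $P$ only reads those rows, this check should go through. The remaining steps are routine contraction bookkeeping.
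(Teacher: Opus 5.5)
Your proposal is correct and follows essentially the same route as the paper. Both arguments differentiate the one-step CoT map to get the Jacobian recursion $g_{k+1}(u)=(I-\alpha\widehat A)g_k(u)+D_\theta F(w_k)[u]$ with $g_0=0$, bound the forcing term uniformly via Lemma~\ref{lem:pqderiv}, Lemma~\ref{lem:wbound} and bounded features, and sum a geometric series using $\|I-\alpha\widehat A\|_2\le\sqrt{1-\alpha\widehat\mu}$. The differences are cosmetic: you differentiate the simplified moment form $w+P_1\bigl(\widehat b+(\widehat\Sigma Q_1+\gamma\widehat\Sigma' Q_2)w\bigr)$ instead of the attention expression $\frac1n SPZ(Z^\top Q z)$, and you pull $\|C^{1/2}\|_2$ outside instead of conjugating by $C^{\pm1/2}$, which in the paper only adds a factor $\sqrt{\lambda_{\max}(C)/\lambda_{\min}(C)}$.
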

\begin{proof}
Since $\alpha \leq \mu/(8L)$ and by Lemma~\ref{lem:empirical-mu}, 
$\widehat\mu \geq \mu/2$ and $\widehat L \leq 4L$ on $\cE_\delta$, 
we have $\alpha \leq \widehat\mu/\widehat L$.
Fix a trajectory $\tau_n$ and work throughout this proof with the empirical moments
$(\widehat A,\widehat b)$ computed from $\tau_n$.
We also fix $\theta=\theta_*$, and for brevity write $w_k \doteq w_k(\theta_*;\tau_n)$.
Thus, by \eqref{eq:def-Gk} we abbrieviate
\[
G_k \doteq \nabla_\theta w_k \in \R[3d^2\times d].
\]
Let $\mathcal F_\theta:\R[d]\to\R[d]$ denote the one-step map of our model with parameter $\theta$,
and write $F\doteq \mathcal F_{\theta_*}$.
In particular, under $\theta=\theta_*$ we have
\begin{equation}
\label{eq:w-rec-empirical}
w_{k+1}=F(w_k)=w_k+\alpha(\widehat b-\widehat A w_k).
\end{equation}
Differentiate $w_{k+1}(\theta)=\mathcal F_\theta(w_k(\theta))$ w.r.t.\ $\theta$
and evaluate at $\theta=\theta_*$ then gives
\begin{align}
\label{eq:G-rec-chain-new}
G_{k+1}
=G_k \nabla_w F(w_k) + \nabla_\theta F(w_k)
=G_k(I-\alpha\widehat A)+\nabla_\theta F(w_k).
\end{align}
Note that $G_0(\theta_*)=0$ since $w_0=\0$ in the prompt.
Right-multiplying by $C^{\frac12}$ and unrolling \eqref{eq:G-rec-chain-new} gives
\begin{align}
\label{eq:CJk-tri}
  \|G_kC^{1/2}\|_2
  &\le
  \sum_{i=0}^{k-1}
  \underbrace{\bigl\|C^{-\frac12}(I-\alpha\widehat A)^{k-1-i}C^{\frac12}\bigr\|_2}_{E_1}
  \underbrace{\|\nabla_\theta F(w_i) C^{\frac12}\|_2}_{E_2}.
\end{align}
We first bound $E_1$.
For any integer $m\ge 0$, by submultiplicativity,
\begin{equation}
\label{eq:prod-bound}
\norm{C^{-\frac12}(I-\alpha\widehat A)^m C^{\frac12}}_2
\leq \norm{C^{-\frac12}}_2 \norm{(I-\alpha\widehat A)^m}_2 \norm{C^{\frac12}}_2
\leq C_{\tref{lem:sensitivity-Gk},1}\, \norm{I-\alpha\widehat A}_2^{m},
\end{equation}
where $C_{\tref{lem:sensitivity-Gk},1} \doteq \sqrt{\lambda_{\max}(C)/\lambda_{\min}(C)}$.
Since $\widehat\mu>0$ by Lemma~\tref{lem:empirical-mu}, the same algebra as in \eqref{eq:one_step_contract} with $A$ replaced by $\widehat A$ gives that
for any $\alpha\in(0,\widehat\mu/\widehat L]$,
\begin{equation}
\label{eq:I-aA-contract}
  \|I-\alpha\widehat A\|_2^2
  \le
  1-\alpha\widehat\mu.
\end{equation}
Let $C_{\tref{lem:sensitivity-Gk},2}\doteq \sqrt{1-\alpha\widehat\mu}$, so $\|I-\alpha\widehat A\|_2\le C_{\tref{lem:sensitivity-Gk},2}<1$.
Combining with \eqref{eq:prod-bound} yields that for all $m\ge 0$,
\begin{equation}
\label{eq:prod-geometric}
  \bigl\|C^{-\frac12}(I-\alpha\widehat A)^{m}C^{\frac12}\bigr\|_2
  \le
  C_{\tref{lem:sensitivity-Gk},1}C_{\tref{lem:sensitivity-Gk},2}^{m}.
\end{equation}


For $E_2$, we bound $\|\nabla_\theta F(w_i) C^{\frac12}\|_2$ uniformly in $i$.
Using $\|\nabla_\theta F(w_i) C^{\frac12}\|_2=\|C^{\frac12}\nabla_\theta F(w_i)^\top\|_2$ and the definition of $\nabla_\theta F(w_i)$,
for any unit direction $u\in\R[p]$,
\begin{equation}
\label{eq:nablaDrel}
    D_\theta F(w_i)[u]
= (\nabla_\theta F(w_i))^\top u,
\quad
\|C^{\frac12}\nabla_\theta F(w_i)^\top\|_2
=
\sup_{\|u\|_2=1}\|C^{\frac12}D_\theta F(w_i)[u]\|_2.
\end{equation}
Recall $\mathcal F_\theta$,
under our block-sparse parameterization $\theta=(P,Q)\in \eqref{eq:Theta-CoT}$, we have for any $w_i$,
\begin{equation}
\label{eq:Ftheta-closed-again}
  \mathcal F_\theta(w_i)
  =
  w_i + \frac{1}{n}SPZ_i\Bigl(Z_i^\top Q{Z_i}_{[:,-1]}\Bigr),
\end{equation}
where $S\in\R[d\times(3d+1)]$ is the fixed selector matrix.
Differentiate \eqref{eq:Ftheta-closed-again} with respect to $\theta$ and evaluate at $\theta=\theta_* = (P_*, Q_*)$.
By the product rule and Lemma~\tref{lem:pqderiv},
for any $\|u\|_2=1$,
\begin{align}
\label{eq:DT-direction-again}
  D_\theta F(w_i)[u]
  &=
  \frac{1}{n}\,S\Bigl(
    \Delta P(u)Z_iZ_i^\top Q_*\,{Z_i}_{[:,-1]}
    +
    P_*Z_i\bigl(Z_i^\top \Delta Q(u)\,{Z_i}_{[:,-1]}\bigr)
  \Bigr),
\end{align}
where $\Delta P(u)$ and $\Delta Q(u)$ is defined in \eqref{eq:deltapq}.
Denote $Z_i=[Z_{\mathrm{ctx}},Z_{\mathrm{buf}}]$, where $Z_{\mathrm{ctx}}\in\R[(3d+1)\times n]$ is the submatrix consisting of the first $n$ context columns.
By Lemma~\tref{lem:pqderiv} and the block structure in \eqref{eq:para_construct},
the left multiplication of $P_*$ and $\Delta P(u)$ only acts on $Z_{\mathrm{ctx}}$, hence
\begin{equation}
\label{eq:PZ-restrict}
    P_*Z_i = P_*Z_{\mathrm{ctx}},
    \quad
    \Delta P(u)Z_i = \Delta P(u)Z_{\mathrm{ctx}}.
\end{equation}
Applying \eqref{eq:PZ-restrict} to \eqref{eq:DT-direction-again} yields
\begin{align}
\label{eq:DT-direction-restrict}
  D_\theta F(w_i)[u]
  &=
  \frac{1}{n}\,S\Bigl(
    \Delta P(u)Z_{\mathrm{ctx}}Z_{\mathrm{ctx}}^\top Q_*{Z_i}_{[:,-1]}
    +
    P_*Z_{\mathrm{ctx}}\bigl(Z_{\mathrm{ctx}}^\top \Delta Q(u){Z_i}_{[:,-1]}\bigr)
  \Bigr).
\end{align}
Denote $a_i \doteq Z_{\mathrm{ctx}}^\top Q_*{Z_i}_{[:,-1]}$.
Since $S$ is a selector matrix, $\|S\|_2=1$.
Taking norms in \eqref{eq:DT-direction-restrict} and using Lemma~\tref{lem:pqderiv} and Lemma~\tref{lem:pqstarbound} gives
\begin{align}
\label{eq:Tk-op-bound-1}
  \norm{D_\theta F(w_i)[u]}_2
  &\le
  \frac{1}{n}\Bigl(
    \norm{Z_{\mathrm{ctx}} a_i}_2
    +
    \norm{P_*}_2\norm{Z_{\mathrm{ctx}}}_2^2
    \norm{{Z_i}_{[:,-1]}}_2
  \Bigr) \notag\\
  &\le
  \frac{1}{n}\Bigl(
    \norm{Z_{\mathrm{ctx}} a_i}_2
    +
    \alpha\norm{Z_{\mathrm{ctx}}}_2^2\norm{{Z_i}_{[:,-1]}}_2
  \Bigr).
\end{align}
Therefore by \eqref{eq:nablaDrel},
\begin{align}
\label{eq:E2-bound-start}
  \|\nabla_\theta F(w_i) C^{\frac12}\|_2
  &=
  \|C^{\frac12}\nabla_\theta F(w_i)^\top\|_2\notag\\
  &\le
  C_{\tref{lem:sensitivity-Gk},3}
  \sup_{\|u\|_2=1}\|D_\theta F(w_i)[u]\|_2 \notag\\
  &\le
  \frac{C_{\tref{lem:sensitivity-Gk},3}}{n}\Bigl(
    \|Z_{\mathrm{ctx}} a_i\|_2
    +
    \alpha\,\|Z_{\mathrm{ctx}}\|_2^2\,\|{Z_i}_{[:,-1]}\|_2
  \Bigr),
\end{align}
where $C_{\tref{lem:sensitivity-Gk},3}\doteq \sqrt{\lambda_{\max}(C)}$.
We next bound the three factors one by one.

\noindent\textbf{Bound of $\|Z_{\mathrm{ctx}}\|_2$.}
By \eqref{eq:zt_def}, each context column satisfies
\[
  \|{Z_{\mathrm{ctx}}}_{[:,j]}\|_2
  \le
  \sqrt{(1+\gamma^2)C_x^2 + C_R^2}
  \doteq
  C_{\tref{lem:sensitivity-Gk},4}.
\]
Hence
\begin{equation}
\label{eq:Zctx-bound-again}
  \|Z_{\mathrm{ctx}}\|_2
  \le
  \|Z_{\mathrm{ctx}}\|_F
  \le
  \sqrt{n}\,C_{\tref{lem:sensitivity-Gk},4}.
\end{equation}

\noindent\textbf{Bound of $\|{Z_i}_{[:,-1]}\|_2$.}
By \eqref{eq:zt_def} and Lemma~\tref{lem:wbound},
\begin{equation}
\label{eq:zwk-bound-again}
  \|{Z_i}_{[:,-1]}\|_2 = \sqrt{1+\|w_i\|_2^2}
  \le
  \sqrt{1+C_{\tref{lem:wbound}}^2}
  \doteq
  C_{\tref{lem:sensitivity-Gk},5}.
\end{equation}

\noindent\textbf{Bound of $\|Z_{\mathrm{ctx}} a_i\|_2$.}
By the construction of $Q_*$ in \eqref{eq:para_construct}, for each $j$,
\[
  |(a_i)_j|
  \le
  C_R + (1+\gamma)C_x\|w_i\|_2
  \le
  C_R + (1+\gamma)C_x C_{\tref{lem:wbound}}.
\]
Thus
\begin{equation}
\label{eq:akctx-bound}
  \|a_i\|_2
  \le
  \sqrt{n}\Bigl(C_R + (1+\gamma)C_x C_{\tref{lem:wbound}}\Bigr).
\end{equation}
Combining \eqref{eq:Zctx-bound-again} and \eqref{eq:akctx-bound},
\begin{align}
\label{eq:Zkak-bound}
  \|Z_{\mathrm{ctx}} a_i\|_2
  \le
  \|Z_{\mathrm{ctx}}\|_2\,\|a_i\|_2
  \le
  nC_{\tref{lem:sensitivity-Gk},4}\Bigl(C_R + (1+\gamma)C_x C_{\tref{lem:wbound}}\Bigr).
\end{align}
Let $C_{\tref{lem:sensitivity-Gk},6}\doteq C_{\tref{lem:sensitivity-Gk},4}\qty(C_R + (1+\gamma)C_x C_{\tref{lem:wbound}})$.
Plugging \eqref{eq:Zctx-bound-again}, \eqref{eq:zwk-bound-again}, and \eqref{eq:Zkak-bound} into \eqref{eq:E2-bound-start} yields
\begin{align}
\label{eq:E2-uniform}
\|\nabla_\theta F(w_i) C^{\frac12}\|_2
\le
\frac{C_{\tref{lem:sensitivity-Gk},3}}{n}
\Bigl(
C_{\tref{lem:sensitivity-Gk},6}n
+
\alpha n C_{\tref{lem:sensitivity-Gk},4}^2  C_{\tref{lem:sensitivity-Gk},5}
\Bigr)
=C_{\tref{lem:sensitivity-Gk},3}(C_{\tref{lem:sensitivity-Gk},6}+\alpha C_{\tref{lem:sensitivity-Gk},4}^2  C_{\tref{lem:sensitivity-Gk},5})
\doteq
C_{\tref{lem:sensitivity-Gk},7},
\,\, \forall i\ge 0.
\end{align}
Plugging \eqref{eq:prod-geometric} and \eqref{eq:E2-uniform} into \eqref{eq:CJk-tri}, we obtain
\[
\|G_k(\theta_*)C^{\frac12}\|_2
\le
\sum_{i=0}^{k-1}
C_{\tref{lem:sensitivity-Gk},1}C_{\tref{lem:sensitivity-Gk},2}^{k-1-i}\,C_{\tref{lem:sensitivity-Gk},7}
=
C_{\tref{lem:sensitivity-Gk},8}\sum_{j=0}^{k-1}C_{\tref{lem:sensitivity-Gk},2}^{j}
\le
\frac{C_{\tref{lem:sensitivity-Gk},8}}{1-C_{\tref{lem:sensitivity-Gk},2}},
\]
where $C_{\tref{lem:sensitivity-Gk},8}\doteq C_{\tref{lem:sensitivity-Gk},1}C_{\tref{lem:sensitivity-Gk},7}$.
Let $C_\tref{lem:sensitivity-Gk} \doteq \frac{C_{\tref{lem:sensitivity-Gk},8}}{1-C_{\tref{lem:sensitivity-Gk},2}}$ then completes the proof.
\end{proof}

\subsection{Proof of Theorem~\tref{thm:emergence}}
\label{proof:emergence}

\begin{proof}
Set $C_{\text{Thm}\tref{thm:emergence}} \doteq \max\bigl(C_{\text{Thm}\tref{thm:finite},1},\ C_{\tref{lem:empirical-mu}}\bigr)$. Then \eqref{eq:neff} implies \eqref{eq:neff-threshold-mu} for each $\tau_n^{(i)}$, and also satisfies the sample-size requirement of Theorem~\tref{thm:finite} with $\delta'=\delta/N_{\mathrm{train}}$.
On the event $\mathcal E_\delta(\mathcal D)$, 
with \eqref{eq:neff-threshold-mu},
Lemma~\tref{lem:residual-Lhat} and
Lemma~\tref{lem:sensitivity-Gk} hold simultaneously for all
$i\in[N_{\mathrm{train}}]$.
For each $i$, using the factorization of the update direction at $\theta=\theta_*$ with \eqref{eq:neu-Gk-res}, we have
\begin{equation}
\label{eq:thm-emergence-factorization}
\Delta_k(\theta_*;\tau_n^{(i)})
=
G_k(\theta_*;\tau_n^{(i)})
\bigl(\widehat b^{(i)}-\widehat A^{(i)} w_k^{(i)}\bigr).
\end{equation}
By
$\widehat{\mathcal L}^{(i)}(w)\doteq
\|C^{-1/2}(\widehat A^{(i)}w-\widehat b^{(i)})\|_2^2$
we have, for all $k\ge 1$,
\begin{align*}
\bigl\|\Delta_k(\theta_*;\tau_n^{(i)})\bigr\|_2
&\le
\bigl\|G_k(\theta_*;\tau_n^{(i)})C^{1/2}\bigr\|_2\,
\sqrt{\widehat{\mathcal L}^{(i)}\!\bigl(w_k^{(i)}\bigr)} \\
&\le
C_{\tref{lem:sensitivity-Gk}}^{(i)}\,
\sqrt{C_\tref{lem:residual-Lhat}^{(i)}}\,
(1-\alpha\widehat\mu_{\mathcal D})^{k/2}\,
\sqrt{\widehat{\mathcal L}^{(i)}\!\bigl(w_0\bigr)} .
\end{align*}
Let
$C_{\text{Thm}\tref{thm:emergence},1}(\mathcal D)
\doteq
\max_{i\in[N_{\mathrm{train}}]}
C_\tref{lem:sensitivity-Gk}^{(i)}
\sqrt{C_\tref{lem:residual-Lhat}^{(i)}}.$
Applying the triangle inequality \eqref{eq:dataset-triangle} yields
\[
J_k(\theta_*;\mathcal D)
\le
\frac{1}{N_{\mathrm{train}}}\sum_{i=0}^{N_{\mathrm{train}}-1}
\bigl\|\Delta_k(\theta_*;\tau_n^{(i)})\bigr\|_2
\le
C_{\text{Thm}\tref{thm:emergence},1}(\mathcal D)\,
(1-\alpha\widehat\mu_{\mathcal D})^{k/2}\,
\frac{1}{N_{\mathrm{train}}}\sum_{i=0}^{N_{\mathrm{train}}-1}
\sqrt{\widehat{\mathcal L}^{(i)}\!\bigl(w_0\bigr)}.
\]
On $\cE_\delta(\mathcal D)$, Lemma~\ref{lem:empirical-mu} applies to each 
$\tau_n^{(i)} \in \mathcal D$, giving $\widehat\mu^{(i)} \geq \mu/2$ and 
$\widehat L^{(i)} \leq 4L$. Hence
$\widehat\mu_D = \min_i \widehat\mu^{(i)} \geq \mu/2$ and 
$\widehat L_D = \max_i \widehat L^{(i)} \leq 4L$, 
so $\alpha \leq \mu/(8L)$ implies $\alpha \leq \widehat\mu_D/\widehat L_D$.
Moreover, since
$\widehat\mu_{\mathcal D}\le \sqrt{\widehat L_{\mathcal D}}$,
the condition
$\alpha\in(0,\widehat\mu_{\mathcal D}/\widehat L_{\mathcal D}]$
implies
$\alpha\widehat\mu_{\mathcal D}\in(0,1]$.
Hence $0\le 1-\alpha\widehat\mu_{\mathcal D}<1$.
Therefore, $(1-\alpha\widehat\mu_{\mathcal D})^{k/2}\to 0$ as $k\to\infty$, and the above bound implies
\[
\lim_{k\to\infty} J_k(\theta_*;\mathcal D)=0.
\]
This completes the proof.
\end{proof}

\subsection{Proof of Corollary~\tref{cor:bridge}}
\label{proof:bridge}
\begin{proof}
By \eqref{eq:eventD}, $\mathbb P(\mathcal E_\delta(\mathcal D)) \ge 1-\delta$, so it suffices to prove the bound on $\mathcal E_\delta(\mathcal D)$. By \eqref{eq:eventD}, $\mathcal E_\delta(\mathcal D) \subseteq \mathcal E_{\delta'}^{(i)}$ for all $i$.
Fix any $i\in[N_{\mathrm{train}}]$.
Since \eqref{eq:neff-threshold-mu} holds for $\tau_n^{(i)}$, so $\widehat\mu^{(i)}>0$ on $\mathcal E_\delta(\mathcal D)$.
With $\alpha\in\bigl(0, \mu/(8L)\bigr]$,
the corresponding iterate sequence $\{w_k^{(i)}\}$ converges to $\widehat w_*^{(i)}$.
Since $\fL(\cdot)$ is continuous, we have
\[
\fL(\widehat w_*^{(i)})=\lim_{k\to\infty}\fL(w_k^{(i)}).
\]
Applying \eqref{eq:finite-main} to the trajectory $\tau_n^{(i)}$ with failure probability $\delta'$
and taking $k\to\infty$,
\[
\fL(\widehat w_*^{(i)})
\ \le\
C_{\text{Thm}\tref{thm:finite},2}\,\varepsilon(\delta')^2,
\]
which is exactly \eqref{eq:bridge-floor}. Since $i$ was arbitrary, the bound holds for all
$i\in[N_{\mathrm{train}}]$ on $\mathcal E_\delta(\mathcal D)$.
\end{proof}

\section{Additional Experimental Details}
\label{sec:aux_experiment}

\paragraph{Multi-task TD pretraining (Algorithm~\ref{alg:cot-td}).}
Algorithm~\ref{alg:cot-td} summarizes our multi-task reinforcement pretraining loop.
Each training task samples an MRP and a feature map from $d_{\textnormal{task}}$, rolls out a trajectory, and performs semi-gradient TD updates while the model generates a CoT sequence of weight iterates $\{w_t\}$ from the structured prompt.
In practice, pretraining is stabilized by gradient clipping with max norm $1.0$ and a small weight decay $10^{-6}$ (Table~\ref{tab:hyperparameters}).
We monitor the mean squared error between the Transformer's value prediction and the batch TD solution once every 10 MRPs on the current training MRP.
We early-stop when this error falls below $\xi=10^{-3}$ for 30 consecutive checks, meaning it stays below the threshold for $300$ MRPs.
We use the hyperparameters listed in Table~\ref{tab:hyperparameters} for experimentation.

\paragraph{Emergent block-sparse structure and equivalent solutions.}
We observe a clear emergence of block-sparse structure in $(P,Q)$, as shown in Figure~\tref{fig:pq-combined}.
As in \citet{wang2025ictd}, this emergence admits two equivalent solutions related by a global sign flip:
$(P,Q)$ and $(-P,-Q)$ implement the same in-context update, and different random seeds may converge to different sign conventions.
When aggregating across seeds, we fix a canonical representative by choosing the sign such that $Q_{[2d,2d]}>0$.

\paragraph{Boyan's chain (Figure~\ref{fig:boyan-topology} and Algorithm~\ref{alg:boyan-chain}).}
Figure~\ref{fig:boyan-topology} illustrates the sparse transition topology of Boyan's chain used in our experiments.
Algorithm~\ref{alg:boyan-chain} gives the randomization procedure. We sample a reward vector $r$, an initial distribution $p_0$, and a transition kernel $p$ with two nonzero forward transitions per state and a reset from the last state. We also sample an i.i.d.\ uniform feature map $x(\cdot)$.


\paragraph{Compute resources.}
All experiments were run on a single NVIDIA H100 GPU. 
The complete set of reported experiments, including all 10 random seeds in Figure~\tref{fig:pq-combined}, takes approximately 10 minutes of wall-clock time. 
This corresponds to roughly 0.17 H100 GPU-hours for the reported experimental results.
The experiments use only synthetic Boyan-chain data, and the storage requirement is negligible.
Preliminary and debugging runs used comparable compute and did not
substantially exceed the reported experimental budget.


\begin{algorithm}[t]
\caption{\label{alg:cot-td} Multi-Task Temporal Difference Learning with Chain-of-Thought}
\begin{algorithmic}[1]
\STATE \textbf{Input:}
task distribution $d_{\textnormal{task}}$,
context length $n$,
MRP sample length $\tau$,
number of training tasks $k$,
learning rate $\alpha$,
discount factor $\gamma$,
transformer parameters $\theta \doteq (P,Q)$.

\FOR{$i \gets 1$ \textbf{to} $k$}
    \STATE Sample an MRP $(p_0,p,r)$ and a feature map $x(\cdot)$ from $d_{\textnormal{task}}$.
    \STATE Sample $(S_0,R_1,S_1,\dots,S_{\tau+1})$ from the MRP.
    \STATE Initialize $w_0 \gets \mathbf{0}\in\mathbb{R}^d$.
    \STATE $\displaystyle
    Z_0 \gets
    \mqty[
        x_0 & \cdots & x_{n-1} & \mathbf 0 \\
        \gamma x_1 & \cdots & \gamma x_n & \mathbf 0 \\
        R_1 & \cdots & R_n & 1 \\
        \mathbf 0 & \cdots & \mathbf 0 & w_0
    ]$
\FOR{$t = 0, \dots, \tau - n - 1$}
    \STATE $z_t \gets \fF(Z_t;P,Q)_{[:, -1]} \in \mathbb{R}^{3d+1}$ \qquad // output column
    \STATE $w_{t+1} \gets z_{t,[2d+1:]} \in \mathbb{R}^d$ \qquad // weight vectors
    \STATE $v_t \gets w_{t+1}^\top x_{t+n}$ \qquad // query state is $S_{t+n}$
    
    \STATE $\displaystyle
    Z_{t+1} \gets
    \mqty[
        x_{t+1} & \cdots & x_{t+n} & z_{0,[0:d]} & \cdots & z_{t,[0:d]} \\
        \gamma x_{t+2} & \cdots & \gamma x_{t+n+1} & z_{0,[d:2d]} & \cdots & z_{t,[d:2d]} \\
        R_{t+2} & \cdots & R_{t+n+1} & z_{0,[2d]} & \cdots & z_{t,[2d]} \\
        \mathbf 0 & \cdots & \mathbf 0 & z_{0,[2d+1:]} & \cdots & z_{t,[2d+1:]}
    ]$
    \IF{$t \ge 1$}
        \STATE $\theta \gets \theta + \alpha\big(R_{t+n} + \gamma v_t - v_{t-1}\big)\nabla_\theta v_{t-1}$
        \qquad // semi-gradient TD
    \ENDIF
\ENDFOR
\ENDFOR
\end{algorithmic}
\end{algorithm}

\begin{figure}[t]
    \centering
    \includegraphics[width=0.52\linewidth]{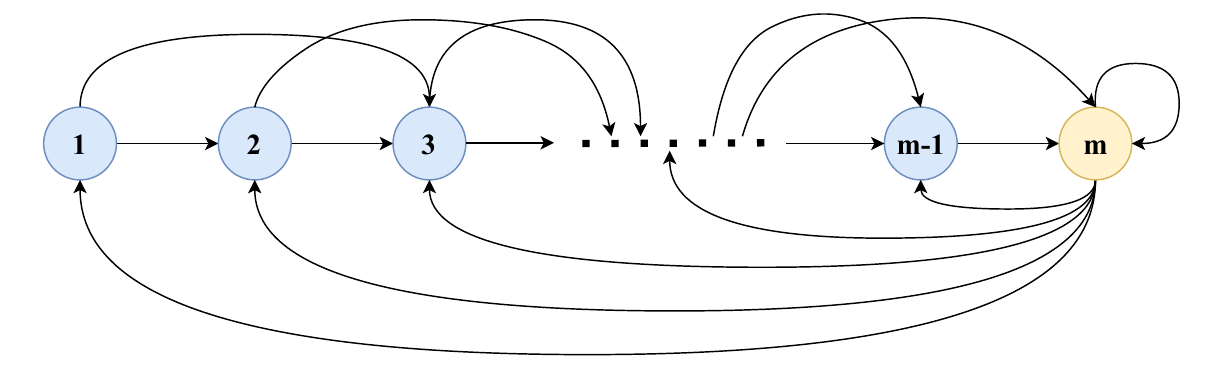}
    \caption{Boyan's chain topology with nonzero transitions. Adapted from \citet{wang2025ictd}.}
    \label{fig:boyan-topology}
\end{figure}

\begin{algorithm}[t]
\caption{Boyan Chain MRP and Feature Generation (Adapted from Algorithm~2 of \citet{wang2025ictd})}
\label{alg:boyan-chain}
\begin{algorithmic}[1]
\STATE \textbf{Input:} state space size $m=|\mathcal{S}|$, feature dimension $d$
\STATE $r \sim \mathrm{Uniform}\big[(-1,1)^m\big]$ \qquad // reward function 
\STATE $p_0 \sim \mathrm{Uniform}\qty[(0,1)^m]$ \qquad // initial distribution
\STATE $p_0 \gets p_0/\sum_s p_0(s)$
\STATE $p \gets 0_{m\times m}$ \qquad // transition function
\FOR{$i \gets 1$ to $m-2$}
    \STATE $\epsilon \sim \mathrm{Uniform}[(0,1)]$
    \STATE $p(i,i+1)\gets \epsilon$ 
    \STATE $p(i,i+2)\gets 1-\epsilon$
\ENDFOR
\STATE $p(m-1,m)\gets 1$
\STATE $z \sim \mathrm{Uniform}\big[(0,1)^m\big]$
\STATE $z \gets z/\sum_{s} z(s)$
\STATE $p(m,1:m)\gets z$
\FOR{each $s\in\mathcal{S}$}
    \STATE $x(s)\sim \mathrm{Uniform}\big[(-1,1)^d\big]$ \qquad // feature map
\ENDFOR
\STATE \textbf{Output:} MRP $(p_0,p,r)$ and feature map $x$
\end{algorithmic}
\end{algorithm}

\begin{table}[t]
  \centering
  \caption{Hyperparameters and training details}
  \label{tab:hyperparameters}
  \begin{tabular}{lc}
    \toprule
    \textbf{Hyperparameter} & \textbf{Value} \\
    \midrule
    \multicolumn{2}{l}{\textit{Optimization}} \\
    Optimizer & Adam \citep{kingma2014adam} \\
    Learning rate ($\alpha$) & $10^{-3}$ \\
    Weight decay & $10^{-6}$ \\
    Gradient clipping & $1.0$ \\
    \midrule
    \multicolumn{2}{l}{\textit{Model Architecture}} \\
    Number of layers ($L$) & $1$ \\
    Activation function & Identity (linear) \\
    Feature dimension ($d$) & $3$ \\
    Context length ($n$) & $30$ \\
    \midrule
    \multicolumn{2}{l}{\textit{Training}} \\
    Number of training MRPs ($k$) & $5000$ \\
    CoT iterations per parameter update & $16$ \\
    Updates per MRP & $10$ \\
    Training loss & squared TD error $\delta_t^2=(R_{t+n} + \gamma v_t - v_{t-1})^2$ \\
    \midrule
    \multicolumn{2}{l}{\textit{MRP Environment}} \\
    MRP type & Boyan chain \\
    Number of states ($m$) & $10$ \\
    Discount factor ($\gamma$) & $0.9$ \\
    \midrule
    \multicolumn{2}{l}{\textit{Early Stopping}} \\
    MSVE threshold ($\epsilon_{\text{stop}}$) & $10^{-3}$ \\
    Patience & $30$ \\
    \midrule
    \multicolumn{2}{l}{\textit{Other}} \\
    Random seeds & $0$--$9$ (10 runs) \\
    \bottomrule
  \end{tabular}
\end{table}


\end{document}